\documentclass{article}

\usepackage{arxiv}

\usepackage[utf8]{inputenc} 
\usepackage[T1]{fontenc}    
\usepackage[hidelinks]{hyperref}       
\usepackage{url}            
\usepackage{booktabs}       
\usepackage{amsfonts}       
\usepackage{nicefrac}       
\usepackage{microtype}      
\usepackage{lipsum}		
\usepackage{graphicx}
\usepackage{color}
\usepackage{amsmath}
\usepackage{amssymb}
\usepackage{amsthm}
\usepackage{algorithm}
\usepackage{algpseudocode}
\usepackage{natbib}
\bibliographystyle{abbrvnat}
\setcitestyle{authoryear,open={(},close={)}} 

\usepackage{doi}
\usepackage{tcolorbox}
\usepackage{tikz}
\usepackage[export]{adjustbox}
\usepackage{rotating}

\newtheorem{theorem}{Theorem}[section]
\newtheorem{proposition}[theorem]{Proposition}
\newtheorem{lemma}[theorem]{Lemma}
\newtheorem{remark}[theorem]{Remark}
\newtheorem{assumption}[theorem]{Assumption}

\newenvironment{proofsketch}{%
  \proof}{\endproof}



\title{Value of Information and Reward Specification in Active Inference and POMDPs}

\date{} 					

\author{ 
Ran Wei \\
	VERSES Research Lab\\
	\texttt{ran.wei@verses.ai} \\
}



\hypersetup{
pdftitle={},
pdfsubject={},
pdfauthor={},
pdfkeywords={},
}

\begin{document}
\maketitle
\vspace{-0.3in}

\begin{abstract}
Expected free energy (EFE) is a central quantity in active inference which has recently gained popularity due to its intuitive decomposition of the expected value of control into a pragmatic and an epistemic component. While numerous conjectures have been made to justify EFE as a decision making objective function, the most widely accepted is still its intuitiveness and resemblance to variational free energy in approximate Bayesian inference. In this work, we take a bottom up approach and ask: \emph{taking EFE as given, what's the resulting agent's optimality gap compared with a reward-driven reinforcement learning (RL) agent, which is well understood?} By casting EFE under a particular class of belief MDP and using analysis tools from RL theory, we show that EFE approximates the Bayes optimal RL policy via information value. We discuss the implications for objective specification of active inference agents.
\end{abstract}


\section{Introduction}
Active inference \citep{parr2022active} is an agent modeling framework derived from the free energy principle, which roughly states that all cognitive behavior of an agent can be described as minimizing free energy, an information theoretic measure of the "fit" between the environment and the agent's internal model thereof \citep{friston2010free}. In recent years, active inference has seen increased popularity in various fields including but not limited to cognitive and neural science, machine learning, and robotics \citep{smith2021recent, mazzaglia2022free, lanillos2021active}. One common application of active inference across these fields is in modeling decision making behavior, often taking place in partially observable Markov decision processes (POMDP). This offers active inference as complementary, a potential alternative to, or a possible generalization of optimal control and reinforcement learning (RL). 

The central difference between active inference and RL is that instead of choosing actions that maximize expected reward or utility, active inference agents are mandated to minimize expected free energy (EFE), which in its most common form is written as \citep{da2020active}:
\begin{align}\label{eq:efe_short}
    EFE(\mathbf{a}) = -\underbrace{\mathbb{E}_{Q(\mathbf{o}|\mathbf{a})}[\log \tilde{P}(\mathbf{o})]}_{\text{Pragmatic value}} - \underbrace{\mathbb{E}_{Q(\mathbf{o}|\mathbf{a})}[\mathbb{KL}[Q(\mathbf{s}|\mathbf{o}, \mathbf{a}) || Q(\mathbf{s}|\mathbf{a})]]}_{\text{Epistemic value}}
\end{align}
Here, $\mathbf{a}$ is a sequence of actions to be evaluated, $Q(\mathbf{s}|\mathbf{a})$ and $Q(\mathbf{o}|\mathbf{a})$ are the agent's prediction of future states $\mathbf{s}$ and observations $\mathbf{o}$, $Q(\mathbf{s}|\mathbf{o}, \mathbf{a})$ is the future updated beliefs about states given future observations, $\tilde{P}(\mathbf{o})$ is a distribution encoding the agent's preferred observations and $\mathbb{KL}$ denotes Kullback-Leibler (KL) divergence, a measure of distance between two distributions. 

One can obtain an intuitive understanding of the EFE objective by analyzing the two terms separately. The first term is the negative expected log likelihood of predicted future observations under the preference or target distribution, which is equivalent to the cross entropy between the predicted and preferred observation distributions. Minimizing this term encourages the agent to take actions that lead to preferred observations. It is thus usually referred to as the "pragmatic value" or "expected value". The second term is the expected KL divergence between the predicted future states and updated beliefs about future states given future observations, which quantifies the belief update amount. This term is usually referred to as "epistemic value" or "expected information gain" because it encourages the agent to take actions that lead to a higher amount of belief update -- an implicit resolution of uncertainty. 

The intuitive addition of pragmatic and epistemic values has been taken as one of the major appeals of EFE. In some sense, it puts both values under the "same currency" when evaluating the total value of actions \citep{friston2015active}. This perspective has motivated prior work to interpret epistemic value as the "value of information" \citep{da2020active}, a term which has a similar connotation in economics \citep{howard1966information}.
Indeed, experimental evaluations of active inference agents have shown that the epistemic value term in EFE contributes to structured exploratory behavior, resolving uncertainty before attempting to obtain reward, often leading to higher coverage of the state space and enhanced task performance \citep{millidge2020deep, tschantz2020scaling, engstrom2024resolving}. Such a behavior primitive is especially important in challenging partially observable task environments.

It appears, at a first glance, that RL and optimal control miss the epistemic value term. However, it is widely known that the Bayes optimal policy in POMDPs already trades off exploration and exploitation \citep{roy2005finding}. This makes intuitive sense because resolving uncertainty often leads to more downstream rewards, essentially by "opening up" opportunities. Specifically, the Bayes optimal policy leverages the equivalence between POMDPs and a special class of MDPs defined on the reward and transition of beliefs called \emph{belief MDPs} to characterize the expected value (i.e., cumulative reward) following an action given the current belief, from which an optimal policy can be constructed as a mapping from beliefs to actions \citep{kaelbling1998planning}. These policies, as demonstrated by Bayes adaptive RL and meta RL, also exhibit structured exploratory behavior \citep{zintgraf2019varibad, duan2016rl}. It thus begs the question:

\emph{What is the relationship between the Bayes optimal RL policy and the active inference policy based on optimizing EFE?}

The main contribution of this paper is providing one answer to the above question: \vspace{0.05in}\\ 
\vspace{0.05in}\centerline{\emph{EFE approximates the Bayes optimal RL policy via epistemic value.}}
We achieve this by first establishing the equivalence between the EFE objective and a different class of belief MDPs, which allows us to define EFE-optimal policies rather than action sequences (i.e., plans) to form direct comparisons with RL policies. We then examine the source of epistemic behavior in POMDPs using a definition of the value of information for POMDPs based on \citeauthor{howard1966information}'s information value theory (\citeyear{howard1966information}). In brief, the value of information is the difference in the expected values between the Bayes optimal policy and another "naive" policy which plans as if it would not be able to update beliefs based on observations in the future. When casting the latter policy also using belief MDPs, we observe that it uses the same belief transition dynamics as the EFE policy but it uses the same belief reward as the Bayes optimal policy. Our key result is a regret bound showing that the EFE objective closes the performance gap between the naive policy and the Bayes optimal policy by augmenting the reward function of the former with epistemic value. We discuss the implications of our results for specifying active inference agents in practice.

Our work is complementary to prior work examining the relationship between active inference and RL \citep{millidge2020relationship, watson2020active, da2023reward} and the effect of epistemic value on agent behavior \citep{schwobel2018active, koudahl2021epistemics}. However, instead of trying to derive the EFE objective from first principles, we take a "bottom up" approach and analyze it against the well-known Bayes optimal policy. To our knowledge, this is the first regret bound of active inference agents in reward seeking tasks. 

\section{Background}
In this section, we introduce notations for Markov decision process, partially observable Markov decision process, and the belief MDP view of POMDPs. We then introduce active inference and the EFE objective. 

\subsection{Markov Decision Process}
A discrete time infinite-horizon discounted Markov decision process (MDP; \citealp{sutton2018reinforcement}) is defined by a tuple $M = (\mathcal{S}, \mathcal{A}, P, R, \mu, \gamma)$, where $\mathcal{S}$ is a set of states, $\mathcal{A}$ a set of actions, $P: \mathcal{S} \times \mathcal{A} \rightarrow \Delta(\mathcal{S})$ a state transition probability distribution (also called transition dynamics), $R: \mathcal{S} \times \mathcal{A} \rightarrow \mathbb{R}$ a reward function, $\mu: \Delta(\mathcal{S})$ the initial state distribution, and $\gamma \in (0, 1)$ a discount factor. In this work, we consider planning as opposed to learning, where the MDP tuple $M$ is known to the agent rather than having to be estimated from samples obtained by interacting with the environment defined by $M$. We use $\pi: \mathcal{S} \rightarrow \Delta(\mathcal{A})$ to denote a time-homogeneous Markovian policy which maps a state to a distribution over actions. Rolling out a policy in the environment for a finite number of time steps $T$ induces a sequence of states and actions $\tau = (s_{0:T}, a_{0:T})$ (also  known as a trajectory) which is distributed according to:
\begin{align}\label{eq:mdp_traj}
    P(\tau) = \prod_{t=0}^{T}P(s_{t}|s_{t-1}, a_{t-1})\pi(a_{t}|s_{t})\,,
\end{align}
where $P(s_{o}|s_{-1}, a_{-1}) = \mu(s_{0})$. We use $\rho^{\pi}_{P}(s, a) = \mathbb{E}[\sum_{t=0}^{\infty}\gamma^{t}\Pr(s_{t}=s, a_{t}=a)]$ to denote the state-action occupancy measure of policy $\pi$ in environment with dynamics $P$, where the expectation is taken w.r.t. the interaction process (\ref{eq:mdp_traj}) for $T \rightarrow \infty$. We denote the normalized occupancy measure, also called the marginal state distribution or state marginal, as $d^{\pi}_{P}(s, a) = (1 - \gamma)\rho^{\pi}_{P}(s, a)$.

Solving a MDP refers to finding a policy $\pi$ which maximizes the expected cumulative discounted reward in the environment $J(\pi)$ defined as:
\begin{align}\label{eq:mdp_obj}
    J(\pi) = \mathbb{E}\left[\sum_{t=0}^{\infty}\gamma^{t}R(s_{t}, a_{t})\right]\,.
\end{align}
The process of finding an optimal policy is sometimes referred to as reinforcement learning and it is a well-known result that there exists at least one time-homogeneous Markovian policy which is optimal w.r.t. (\ref{eq:mdp_obj}) \citep{sutton2018reinforcement}. This significantly simplifies our analysis later compared to finite horizon un-discounted MDPs for which the optimal policy is time-dependent. The quantity $\frac{1}{1 - \gamma}$ has a similar notion to planning horizon, because it represents the time step at which discounting is effectively zero. The optimal policy $\pi^{*}$ is characterized by the Bellman optimality equation:
\begin{align}\label{eq:mdp_bellman}
    Q(s, a) = R(s, a) + \gamma \mathbb{E}_{s' \sim P(\cdot|s, a)}[V(s')], \quad V(s) = \max_{a}Q(s, a)\,,
\end{align}
from which it can be obtained by taking the action which maximizes the action value function $Q$ for each state as $\pi^{*}(a|s) = \delta(a - \arg\max_{\tilde{a}}Q(s, \tilde{a}))$, where $\delta(a - b)$ is the dirac delta distribution which has probability 1 if $a = b$ and probability 0 elsewhere. The advantage function $A(s, a) = Q(s, a) - V(s) \leq 0$ quantifies the suboptimality of an action. We will omit the $*$ notation in most cases. When needed, we denote the value and advantage functions associated with policy $\pi$ and MDP $M$ as $Q^{\pi}_{M}, V^{\pi}_{M}, A^{\pi}_{M}$.

\subsection{Partially Observable Markov Decision Process}
A discrete time infinite-horizon discounted partially observable MDP (POMDP; \citealp{kaelbling1998planning}) is characterized by a tuple $M = (\mathcal{S}, \mathcal{A}, \mathcal{O}, P, R, \mu, \gamma)$, where the newly introduced symbol $\mathcal{O}$ is a set of observations, and the new transition dynamics $P$ consists of the state transition probability distribution $P(s_{t+1}|s_{t}, a_{t})$ and an observation emission distribution $P(o_{t}|s_{t})$. In a POMDP environment, the agent only has access to observations emitted from the environment state but not the state itself. It is thus generally not sufficient to consider Markovian policies but policies that depend on the history of observation-action sequences, i.e., $\pi(a_{t}|h_{t})$ where $h_{t} = (o_{0:t}, a_{0:t-1})$.

It is a well-known result that the Bayesian belief distribution $b_{t} = P(s_{t}|h_{t})$ is a sufficient statistic for the interaction history \citep{kaelbling1998planning}. The history dependent value functions and policy in POMDP can thus be written in terms of beliefs:
\begin{align}\label{eq:pomdp_bellman}
    Q(b, a) = \sum_{s}b(s)R(s, a) + \gamma \sum_{o'}P(o'|b, a)V(b'(o', a, b)), \quad V(b) = \max_{a}Q(b, a)\,,
\end{align}
where $P(o'|b, a) = \sum_{s, s'}P(o'|s')P(s'|s, a)b(s)$ and $b'(o', a, b)$ denotes the belief update function from prior $b(s)$ to the posterior:
\begin{align}\label{eq:belief_update}
    b'(o', a, b) := b'(s'|o', a, b) = \frac{P(o'|s')\sum_{s}P(s'|s, a)b(s)}{\sum_{s'}P(o'|s')\sum_{s}P(s'|s, a)b(s)}\,.
\end{align}
The optimal policy derived from the above value functions is sometimes referred to as the Bayes optimal policy \citep{duff2002optimal}.

The belief value functions in (\ref{eq:pomdp_bellman}) imply a special class of MDPs known as \emph{belief MDPs} \citep{kaelbling1998planning} where the reward and dynamics are defined on the belief state as:
\begin{align}\label{eq:pomdp_belief_mdp}
    R(b, a) = \sum_{s}b(s)R(s, a), \quad P(b'|b, a) = P(o'|b, a)\delta(b' - \tilde{b}'(o', a, b))\,.
\end{align}
The stochasticity in the belief dynamics is entirely due to the stochasticity of the next observation; the belief updating process itself is deterministic.

In this work, we generalize the notion of belief MDP to refer to any MDP defined on the space of beliefs. However, not all belief MDPs could yield the optimal policies for some POMDPs.

\subsection{Active Inference}\label{sec:act_inf}
Active inference is an application of the variational principle to perception and action, where intractable Bayesian belief updates (i.e., (\ref{eq:belief_update})) are approximated by variational inference \citep{da2020active}. At every time step $t$, variational inference searches for an approximate posterior $Q(s_{t})$ which maximizes the evidence lower bound of data marginal log likelihood, or equivalently minimizes the variational free energy $\mathcal{F}$:
\begin{align}
    \mathcal{F}(Q) = \mathbb{E}_{Q(s_{t})}[\log Q(s_{t}) - \log P(o_{t}, s_{t})] \,,
\end{align}
where $P(o_{t}, s_{t}) = P(o_{t}|s_{t})P(s_{t})$. In the context of POMDPs, the prior is given by $P(s_{t}) = \sum_{s_{t-1}}P(s_{t}|s_{t-1}, a_{t-1})Q(s_{t-1})$. It is well-known that the optimal variational approximation under appropriately chosen family of posterior distributions equals to the exact posterior in (\ref{eq:belief_update}) \citep{blei2017variational}. We will thus assume appropriate choices of variational family and omit suboptimal belief updating in subsequent analyses.

Central to the current discussion is the policy selection objective functions used in active inference, which is its main difference from classic POMDPs. In particular, active inference introduces an objective function called expected free energy (EFE) which, given an initial belief $Q_{0}(s_{0})$ and a finite sequence of actions $a_{0:T-1}$, is defined as \citep{friston2017active}:
\begin{align}\label{eq:efe_full}
    EFE(a_{0:T-1}, Q_{0}) = \mathbb{E}_{Q(o_{1:T}, s_{1:T}|a_{0:T-1})}[\log Q(s_{1:T}|a_{0:T-1}) - \log \Tilde{P}(o_{1:T}, s_{1:T})] \,,
\end{align}
where $Q(s_{1:T}|a_{0:T-1})$ is defined as the product of the \emph{marginal} state distributions along the action sequence (we show how this can be approximately obtained as a result of variational inference and discuss the implication of defining this instead as the \emph{joint} distribution in the appendix, which also contains all derivations and proofs):
\begin{align}\label{eq:efe_q_def}
\begin{split}
    Q(s_{1:T}|a_{1:T-1}) &= \prod_{t=1}^{T}Q(s_{t}|Q_{t-1}, a_{t-1}) \,, \\ 
    Q(s_{t}|Q_{t-1}, a_{t-1}) &:= \sum_{s_{t-1}}P(s_{t}|s_{t-1}, a_{t-1})Q(s_{t-1}|Q_{t-2}, a_{t-2}) \,,
\end{split}
\end{align}
and $Q(o_{1:T}, s_{1:T}|a_{1:T-1}) = \prod_{t=1}^{T}P(o_{t}|s_{t})Q(s_{t}|Q_{t-1}, a_{t-1})$. These distributions represent the agent's posterior predictive beliefs about states and observations in the future. Notice (\ref{eq:efe_full}) is different from (\ref{eq:efe_short}), but it is used here because it is more general \citep{champion2024reframing}. 

The distribution $\tilde{P}(o_{1:T}, s_{1:T})$ is interpreted as a "preference" distribution under which preferred observations and states have higher probabilities. While there are multiple ways to specify $\tilde{P}$ in the literature, we will focus on the most popular specification:
\begin{align}
    \tilde{P}(o_{0:T}, s_{0:T}) &= \prod_{t=0}^{T}\tilde{P}(o_{t})\tilde{P}(s_{t}|o_{t}) \,,
\end{align}
where $\tilde{P}(s_{t}|o_{t})$ is an arbitrary distribution. This specification allows us to factorize EFE over time and construct the following approximation:
\begin{align}\label{eq:efe_prag_epis}
    EFE(a_{0:T-1}, Q_{0}) \approx \sum_{t=1}^{T}-\underbrace{\mathbb{E}_{Q(o_{t}|a_{0:T-1})}[\log\tilde{P}(o_{t})]}_{\text{Pragmatic value}} - \underbrace{\mathbb{E}_{Q(o_{t}|a_{0:T-1})}[\mathbb{KL}[Q(s_{t}|o_{t}, a_{0:T-1}) || Q(s_{t}|a_{0:T-1})]]}_{\text{Epistemic value}} \,,
\end{align}
where $\mathbb{KL}$ denotes Kullback-Leiblier divergence, $Q(o_{t}|a_{0:T-1}) = \sum_{s_{t}}P(o_{t}|s_{t})Q(s_{t}|Q_{t-1}, a_{t-1})$ is the posterior predictive over observations, and $Q(s_{t}|o_{t}, a_{0:T-1}) \propto P(o_{t}|s_{t})Q(s_{t}|Q_{t-1}, a_{t-1})$ is the future posterior given posterior predictive of future states as prior and future observations. We discuss this approximation and optimal choice of $\tilde{P}(s|o)$ further in the appendix.

As preempted in the introduction, in (\ref{eq:efe_prag_epis}), the first term "pragmatic value" scores the quality of predicted observations under the preferred distribution. The second term "epistemic value" measures the distance between future prior $Q(s_{t}|a_{0:T-1})$ and posterior beliefs $Q(s_{t}|o_{t}, a_{0:T-1})$, which corresponds to the amount of expected "information gain" from future observations. The epistemic value term is an especially salient difference between active inference and classic POMDPs.

\section{Unifying Active Inference and RL Under Belief MDPs}\label{sec:act_inf_belief_mdp}
The use of EFE vs. reward and the search for action sequences (i.e., plans) vs. policies are the main contentions between active inference and RL. In this section, we show that active inference can be equally represented using reward and policy in a special class of belief MDPs. The key is to show that the EFE objective can be characterized using a recursive equation akin to the Bellman equation. This can be achieved immediately by expressing the predictive distribution at each step using the predictive distribution at the previous step:
\begin{align}
\begin{split}
    &EFE(a_{0:T-1}, Q_{0}) \\
    &\approx \sum_{t=1}^{T}-\mathbb{E}_{Q(o_{t}|a_{0:T-1})}[\log\tilde{P}(o_{t})] - \mathbb{E}_{Q(o_{t}|a_{0:T-1})}[\mathbb{KL}[Q(s_{t}|o_{t}, a_{0:T-1}) || Q(s_{t}|a_{0:T-1})]] \\
    &= \sum_{t=0}^{T-1}-\mathbb{E}_{Q(o_{t+1}|Q_{t}, a_{t})}[\log\tilde{P}(o_{t+1})] - \mathbb{E}_{Q(o_{t+1}|Q_{t}, a_{t})}[\mathbb{KL}[Q(s_{t+1}|o_{t+1}, Q_{t}, a_{t}) || Q(s_{t+1}|Q_{t}, a_{t})]] \\
    &= -\mathbb{E}_{Q(o_{1}|Q_{0}, a_{0})}[\log\tilde{P}(o_{1})] - \mathbb{E}_{Q(o_{1}|Q_{0}, a_{0})}[\mathbb{KL}[Q(s_{1}|o_{1}, Q_{0}, a_{0}) || Q(s_{1}|Q_{0}, a_{0})]] + EFE(a_{1:T-1}, Q_{1}) \,.
\end{split}
\end{align}

The recursive equation implies a transition dynamics over the state marginal $Q_{t}$ which only depends on the previous state marginal $Q_{t-1}$, i.e., the transition is Markovian. The per-time step EFE only depends on the current state marginal. Using the equivalence between the optimal $Q_t$ and $b_t$, we can write the reward and transition dynamics of the belief MDP implied by EFE as follows:
\begin{subequations}\label{eq:efe_belief_mdp}
\begin{align}
    R^{EFE}(b, a) &= \mathbb{E}_{P(o'|b, a)}[\log\tilde{P}(o')] + \mathbb{E}_{P(o'|b, a)}[\mathbb{KL}[b(s'|o', b, a) || b(s'|b, a)]] \\
    &:= \tilde{R}(b, a) + IG(b, a) \label{eq:efe_belief_mdp_rwd} \,,\\
    P^{open}(b'|b, a) &= \delta(b' - b'(a, b)), \text{ where } b'(a, b) := b'(s'|b, a) = \sum_{s}P(s'|s, a)b(s) \label{eq:efe_belief_mdp_dynamics} \,.
\end{align}
\end{subequations}

By constructing the above belief MDP, the search for optimal action sequences can be equally represented as the search for optimal belief-action policies.

\begin{proposition}\label{prop:act_inf_policy}
(Active inference policy) The EFE achieved by the optimal action sequence can be equivalently achieved by a time-indexed belief-action policy $\pi(a_{t}|b_{t})$.
\end{proposition}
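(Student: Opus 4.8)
The plan is to exploit the \emph{deterministic} nature of the belief transition dynamics $P^{open}$ in the EFE-induced belief MDP (\ref{eq:efe_belief_mdp_dynamics}). Because $P^{open}(b'|b,a) = \delta(b' - b'(a,b))$ places all of its mass on the single predicted marginal $b'(a,b) = \sum_{s}P(s'|s,a)b(s)$, there is no branching: any action sequence $a_{0:T-1}$, together with the initial belief $Q_{0}$, induces a \emph{unique} trajectory of belief states $b_{0} = Q_{0}, b_{1}, \ldots, b_{T-1}$ via $b_{t+1} = b'(a_{t}, b_{t})$. The recursive decomposition derived just above (\ref{eq:efe_belief_mdp}) shows that $EFE(a_{0:T-1}, Q_{0})$ equals the negated cumulative reward $-\sum_{t=0}^{T-1}R^{EFE}(b_{t}, a_{t})$ accumulated along this deterministic trajectory. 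The whole argument then reduces to the standard fact that in a deterministic MDP an open-loop plan and a policy are interchangeable.

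Concretely, let $a^{*}_{0:T-1}$ be an EFE-optimal action sequence and let $b^{*}_{0}, \ldots, b^{*}_{T-1}$ be the belief trajectory it generates under $P^{open}$. I would define the time-indexed deterministic policy $\pi_{t}(a|b) = \delta(a - a^{*}_{t})$ on the realized belief $b = b^{*}_{t}$ (and arbitrarily elsewhere). Rolling this policy out from $b_{0} = Q_{0}$ reproduces exactly the same sequence of beliefs and actions: at step $t$ the policy is queried at $b^{*}_{t}$ and returns $a^{*}_{t}$, which transitions deterministically to $b^{*}_{t+1}$. Hence the cumulative belief-MDP reward, and therefore the EFE, is identical. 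This establishes the constructive direction, namely that the optimal action-sequence EFE is attained by a policy.

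For the converse, so that ``equivalently'' is justified in both directions, I would note that the finite-horizon EFE of any time-indexed policy equals its expected cumulative reward in the belief MDP, and that by the usual dynamic-programming argument a deterministic policy is optimal; under deterministic dynamics each such policy, rolled out from the fixed start $Q_{0}$, collapses to a single action sequence. Hence the policy class cannot attain a strictly lower EFE than the best action sequence, and combined with the construction the two optima coincide.

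The main obstacle, and the reason the statement asks specifically for a \emph{time-indexed} rather than a stationary policy, is the possibility that the deterministic trajectory revisits the same belief at two different times $t \neq t'$ where the optimal plan prescribes different actions. A stationary map $b \mapsto a$ could not represent this, whereas a time-indexed policy $\pi_{t}$ is free to assign distinct actions at distinct times, so the ambiguity disappears. Beyond this, the argument is bookkeeping: verifying that the additive EFE identity already established lines up term-by-term with the cumulative reward of the belief MDP, which requires no new estimate. A secondary point to state carefully is the identification of the open-loop predictive marginal $Q_{t}$ with the belief state $b_{t}$ of this MDP, which legitimizes writing the reward and dynamics in (\ref{eq:efe_belief_mdp}) on $b$ rather than on $Q$.
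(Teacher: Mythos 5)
Your proof is correct, and it fills in exactly what the paper's one-line main-text proof (``due to the above belief MDP characterization'') leaves implicit: since $P^{open}$ is deterministic, a plan and a time-indexed policy rolled out from $Q_{0}$ traverse the same unique belief trajectory, so their cumulative $R^{EFE}$, and hence their EFE, coincide. Where you genuinely differ is from the paper's written-out alternative proof in the appendix, which proceeds by \emph{backward induction}: it shows at the terminal stage that $\min_{a_{T-1}}EFE(a_{T-1}, Q_{T-1})$ is attained by the delta policy $\pi^{*}_{T-1}(a_{T-1}|Q_{T-1}) = \delta(a_{T-1} - \arg\min_{\tilde{a}_{T-1}}EFE(\tilde{a}_{T-1}, Q_{T-1}))$, and then propagates this Bellman-optimality identity recursively, constructing the optimal greedy policy stage by stage. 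Your argument is forward-looking and more elementary: it needs only determinism of the open-loop dynamics, works trajectory-by-trajectory, and cleanly isolates why time-indexing (rather than stationarity) is required --- the revisited-belief issue, which the paper never discusses. What the paper's backward induction buys in exchange is that it exhibits the optimal policy as a greedy argmin of a recursively defined EFE at \emph{every} belief, not merely on the realized optimal trajectory, which is the form that carries over naturally when the paper subsequently passes to the infinite-horizon discounted belief MDP and time-homogeneous policies. Both routes give the two-sided equivalence; your converse (an optimal deterministic time-indexed policy collapses to a single action sequence under deterministic dynamics, so policies cannot strictly beat plans) is a valid substitute for the paper's dynamic-programming recursion.
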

\begin{proof}
The proof is due to the above belief MDP characterization. An alternative proof is given in the appendix.
\end{proof}

These identities enable us to define infinite-horizon discounted belief MDPs using the EFE reward and dynamics in (\ref{eq:efe_belief_mdp}) and restrict our search to time-homogeneous Markovian belief-action policies. A similar result was presented recently by \citet{malekzadeh2022active}. However, rather than focusing on policy optimization algorithms, our goal here is to clarify the belief MDP implied by EFE. 

However, notice a few differences between the EFE belief MDP and the Bayes optimal belief MDP. First, the belief dynamics in (\ref{eq:efe_belief_mdp_dynamics}) does not contain observation $o$; rather it is the marginal prediction of the next state given the previous belief. Such a belief dynamics has been referred to as \emph{open-loop} in the literature \citep{flaspohler2020belief} in the sense that it does not take into account the possibility of updating beliefs based on future observations, akin to open-loop controls. In contrast to the POMDP belief dynamics in (\ref{eq:pomdp_belief_mdp}), the open-loop belief dynamics is deterministic given $a$. 

Second, the EFE reward function contains an information gain term which corresponds to epistemic value. The first term pragmatic value is defined as the expected log likelihood of the next observation. This does not introduce much difference from the POMDP reward function because we can define the active inference preference distribution as a Boltzmann distribution parameterized by a reward function
$\tilde{P}(o) \propto \exp(\tilde{R}(o))$ and assume that $\tilde{R}(o)$ self-normalizes so that the partition function equals 1. The resulting reward can still be written as a linear combination of state-action reward:
\begin{align}\label{eq:prag_value_linear}
\begin{split}
    \tilde{R}(b, a) &= \mathbb{E}_{P(o'|b, a)}[\log\tilde{P}(o')] \\
    &= \sum_{s}b(s)\sum_{s'}P(s'|s, a)\sum_{o'}P(o'|s')\tilde{R}(o') \\
    &= \sum_{s}b(s)\tilde{R}(s, a) \,.
\end{split}
\end{align}

The linearity and thus convexity of the state-action reward is an important property of POMDPs, because it implies the optimal value function is also convex in the beliefs, which means that lower entropy or more certain beliefs generally correspond to higher values \citep{kaelbling1998planning}. The addition of information gain, however, makes the EFE reward no longer convex. In this case, the agent may be driven to collect more information and "distracted" from accruing task rewards.

\begin{proposition}\label{eq:prop_efe_reward_concavity}
The EFE reward function as defined in (\ref{eq:efe_belief_mdp_rwd}) is concave in the belief.
\end{proposition}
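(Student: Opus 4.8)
The plan is to show concavity by decomposing $R^{EFE}(b,a)$ into its two summands and handling each separately. Recall from (\ref{eq:efe_belief_mdp_rwd}) that $R^{EFE}(b,a) = \tilde{R}(b,a) + IG(b,a)$. The pragmatic term $\tilde{R}(b,a)$ was shown in (\ref{eq:prag_value_linear}) to be \emph{linear} in $b$, hence both convex and concave; since a sum of a linear function and a concave function is concave, it suffices to establish that the information gain term $IG(b,a) = \mathbb{E}_{P(o'|b,a)}[\mathbb{KL}[b(s'|o',b,a)\,\|\,b(s'|b,a)]]$ is concave in $b$.

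The key observation I would exploit is that $IG(b,a)$ is a mutual information. Specifically, I would recognize that the expected KL divergence between the posterior $b(s'|o',b,a)$ and the prior predictive $b(s'|b,a)$, with the expectation taken over $P(o'|b,a)$, is exactly the mutual information $I(S';O')$ between the next state and next observation under the joint predictive distribution induced by $b$ and $a$. I would then write this joint as $P(s',o'|b,a) = P(o'|s')\,b(s'|b,a)$, where the marginal $b(s'|b,a) = \sum_s P(s'|s,a)b(s)$ is linear in $b$, and the emission $P(o'|s')$ is fixed and independent of $b$. So the entire $(s',o')$ joint is an affine image of $b$ through a fixed channel.

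The technical heart of the argument is the classical fact that mutual information $I(S';O')$ is a \emph{concave} function of the input marginal $P(s')$ when the channel $P(o'|s')$ is held fixed. I would invoke this standard information-theoretic result (see, e.g., Cover and Thomas), which follows from writing $I(S';O') = H(O') - H(O'|S')$: the conditional entropy $H(O'|S') = \sum_{s'}P(s')H(O'|S'=s')$ is linear in $P(s')$, while the output entropy $H(O')$ is concave in $P(o')$ and hence concave in $P(s')$ since $P(o')$ is a linear function of $P(s')$. Concavity is preserved under the affine precomposition $b \mapsto P(s'|b,a) = \sum_s P(s'|s,a)b(s)$, so $IG(b,a)$ is concave in $b$. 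Adding the linear $\tilde{R}(b,a)$ then yields concavity of $R^{EFE}(b,a)$, completing the proof.

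The main obstacle I anticipate is purely one of bookkeeping rather than of mathematical depth: I must verify carefully that the expected-KL expression genuinely coincides with the mutual information $I(S';O')$ under the correct joint distribution, and that the ``input'' to the channel is indeed the predictive state marginal $b(s'|b,a)$ rather than $b$ itself. Once the expected KL is correctly identified as mutual information with the fixed emission channel, the concavity is immediate from the standard result and the affineness of $b \mapsto b(s'|b,a)$. I would be careful to note that concavity is claimed only in $b$ for each fixed $a$, which is all that is needed for the subsequent arguments about the loss of value-function convexity.
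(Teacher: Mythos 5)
Your proposal is correct and follows essentially the same route as the paper: both reduce the claim to concavity of $IG$ via linearity of the pragmatic term, and both rest on the decomposition $IG(b, a) = \mathbb{H}[P(o'|b, a)] - \mathbb{E}_{P(s'|b, a)}[\mathbb{H}[P(o'|s')]]$, i.e., mutual information written as an output entropy plus a term linear in $b$. The only difference is presentational: the paper verifies concavity of the output-entropy term by hand via the log-sum inequality, whereas you invoke the standard channel result that entropy (equivalently, mutual information in the input marginal) is concave and compose it with the affine map $b \mapsto P(s'|b, a)$.
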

\begin{proofsketch}
Information gain can be rearranged as follows:
\begin{align}
    IG(b, a) = \mathbb{H}[P(o'|b, a)] - \mathbb{E}_{P(s'|b, a)}[\mathbb{H}[P(o'|s')]]
\end{align}
where $\mathbb{H}$ denotes Shannon entropy and the second term is linear in the belief. Since entropy is concave, the combined reward function is also concave. 
\end{proofsketch}

In summary, the EFE objective and the classic POMDP can be understood as two different belief MDPs with different reward functions and different dynamics.

\section{Analyzing Policies in MDPs}\label{sec:pdmmdp}
The belief MDP characterizations of both the EFE policy and the Bayes optimal policy enable us to use MDP analysis tools for POMDPs. The main analysis tools we use in this paper are recent extensions of the performance difference lemma \citep{kakade2002approximately} and simulation lemma \citep{kearns2002near} which are well-known results in RL theory that quantify the performance difference between different policies or the same policy in different environments. To compare active inference with RL, we are interested in the setting where two policies are optimal w.r.t. both different rewards and different dynamics, however, the evaluation reward and dynamics are equivalent to only one of the policies (here referred to as the expert policy). The following lemma, which extends lemma 4.1 in \citep{vemula2023virtues} to the setting of different rewards, gives the performance gap (also known as the regret) between the two policies:
\begin{lemma}\label{lemma:pdmmdp}
(Performance difference in mismatched MDPs) Let $\pi$ and $\pi'$ be two policies which are optimal w.r.t. two MDPs $M$ and $M'$. The two MDPs share the same initial state distribution and discount factor but have different rewards $R, R'$ and dynamics $P, P'$. Denote $\Delta R(s, a) = R'(s, a) - R(s, a)$. The performance difference between $\pi$ and $\pi'$ when both are evaluated in $M$ is given by:
\begin{align}\label{eq:pdmmdp}
\begin{split}
    &J_{M}(\pi) - J_{M}(\pi') \\
    &= \underbrace{\frac{1}{(1 - \gamma)}\mathbb{E}_{(s, a) \sim d^{\pi}_{P}}\left[A^{\pi'}_{M'}(s, a)\right]}_{\text{Policy advantage under expert distribution}} \\ &\quad + \underbrace{\frac{1}{(1 - \gamma)}\mathbb{E}_{(s, a) \sim d^{\pi'}_{P}}\left[\Delta R(s, a) + \gamma\left(\mathbb{E}_{s' \sim P'(\cdot|s, a)}[V^{\pi'}_{M'}(s')] - \mathbb{E}_{s'' \sim P(\cdot|s, a)}[V^{\pi'}_{M'}(s'')]\right)\right]}_{\text{Reward-model advantage under own distribution}} \\
    &\quad + \underbrace{\frac{1}{(1 - \gamma)}\mathbb{E}_{(s, a) \sim d^{\pi}_{P}}\left[-\Delta R(s, a) + \gamma\left(\mathbb{E}_{s'' \sim P(\cdot|s, a)}[V^{\pi'}_{M'}(s'')] - \mathbb{E}_{s' \sim P'(\cdot|s, a)}[V^{\pi'}_{M'}(s')]\right)\right]}_{\text{Reward-model disadvantage under expert distribution}} \,. \\
\end{split}
\end{align}
\end{lemma}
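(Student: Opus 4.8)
The plan is to prove the identity by the standard ``value-difference'' telescoping argument applied with a single fixed baseline function — namely the value function $V^{\pi'}_{M'}$ of the comparison policy — and then reorganize the resulting terms. First I would establish the baseline identity that for \emph{any} function $V:\mathcal{S}\to\mathbb{R}$ and any policy $\pi$ evaluated in $M$,
\begin{align*}
J_{M}(\pi) = \mathbb{E}_{s_{0}\sim\mu}[V(s_{0})] + \frac{1}{1-\gamma}\mathbb{E}_{(s,a)\sim d^{\pi}_{P}}\left[R(s,a) + \gamma\mathbb{E}_{s'\sim P(\cdot|s,a)}[V(s')] - V(s)\right]\,.
\end{align*}
This follows from the occupancy identity $\frac{1}{1-\gamma}\mathbb{E}_{(s,a)\sim d^{\pi}_{P}}[f] = \mathbb{E}[\sum_{t}\gamma^{t}f(s_{t},a_{t})]$, substituting the bracketed Bellman residual for $f$, and telescoping the $\gamma^{t}V(s_{t})$ contributions: the $t=0$ boundary produces $\mathbb{E}_{\mu}[V]$ and the tail vanishes since $\gamma^{t}V(s_{t})\to 0$. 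Crucially, this holds for any baseline, with no optimality assumed.

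Next I would instantiate this identity with $V = V^{\pi'}_{M'}$ for both $\pi$ and $\pi'$, each evaluated in $M$, and subtract. The $\mathbb{E}_{\mu}[V^{\pi'}_{M'}]$ boundary terms are identical and cancel, leaving
\begin{align*}
J_{M}(\pi) - J_{M}(\pi') = \frac{1}{1-\gamma}\mathbb{E}_{(s,a)\sim d^{\pi}_{P}}[\delta(s,a)] - \frac{1}{1-\gamma}\mathbb{E}_{(s,a)\sim d^{\pi'}_{P}}[\delta(s,a)]\,,
\end{align*}
where $\delta(s,a) = R(s,a) + \gamma\mathbb{E}_{s''\sim P(\cdot|s,a)}[V^{\pi'}_{M'}(s'')] - V^{\pi'}_{M'}(s)$ is the Bellman residual of the baseline under the \emph{evaluation} reward and dynamics. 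The key step is to re-express $\delta$ using the comparison advantage $A^{\pi'}_{M'}(s,a) = R'(s,a) + \gamma\mathbb{E}_{s'\sim P'(\cdot|s,a)}[V^{\pi'}_{M'}(s')] - V^{\pi'}_{M'}(s)$, which is available precisely because $V^{\pi'}_{M'}$ is a genuine policy value function in $M'$ and therefore satisfies its own Bellman equation. Subtracting the two yields the clean identity
\begin{align*}
\delta(s,a) = A^{\pi'}_{M'}(s,a) - \Delta R(s,a) + \gamma\left(\mathbb{E}_{s''\sim P(\cdot|s,a)}[V^{\pi'}_{M'}(s'')] - \mathbb{E}_{s'\sim P'(\cdot|s,a)}[V^{\pi'}_{M'}(s')]\right)\,.
\end{align*}

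Substituting this into the two occupancy averages and distributing produces exactly the three labeled terms: under $d^{\pi}_{P}$ the advantage piece gives the ``policy advantage under expert distribution'' term and the reward/value-gap piece gives the ``reward-model disadvantage under expert distribution'' term, while under $-d^{\pi'}_{P}$ the reward/value-gap piece, with its sign flipped, gives the ``reward-model advantage under own distribution'' term. The one remaining piece is $-\frac{1}{1-\gamma}\mathbb{E}_{(s,a)\sim d^{\pi'}_{P}}[A^{\pi'}_{M'}(s,a)]$, which I claim vanishes: factoring $d^{\pi'}_{P}(s,a) = d^{\pi'}_{P}(s)\,\pi'(a|s)$ and using that a policy's own advantage averages to zero under its own action distribution, $\mathbb{E}_{a\sim\pi'(\cdot|s)}[A^{\pi'}_{M'}(s,a)] = 0$ for every $s$ (a direct consequence of $V^{\pi'}_{M'}(s) = \mathbb{E}_{a\sim\pi'}[Q^{\pi'}_{M'}(s,a)]$, valid even though the state distribution is induced by $P$ rather than $P'$). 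The hard part is therefore not any single estimate but careful sign bookkeeping across the two distributions, together with this cancellation — it is exactly the vanishing of the spurious advantage term under $\pi'$'s own occupancy that lets the advantage appear only under the expert distribution $d^{\pi}_{P}$, matching the statement and extending lemma~4.1 of \citet{vemula2023virtues} to the case $\Delta R \neq 0$.
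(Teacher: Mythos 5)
Your proof is correct, and it is organized differently from the paper's. The paper splits the gap through the intermediate anchor $J_{M'}(\pi') = \mathbb{E}_{\mu}[V^{\pi'}_{M'}(s_0)]$, writing $J_{M}(\pi) - J_{M}(\pi') = \mathbb{E}_{\mu}[V^{\pi}_{M} - V^{\pi'}_{M'}] + \mathbb{E}_{\mu}[V^{\pi'}_{M'} - V^{\pi'}_{M}]$, and then unrolls each bracket recursively through its Bellman equation (the ``term a / term b'' iteration), obtaining the occupancy-measure form in the limit. You instead prove one generic baseline identity valid for any comparator function $V$, apply it twice with the common baseline $V^{\pi'}_{M'}$, and rewrite the resulting Bellman residual $\delta$ via the advantage in $M'$. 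The substantive difference is where the advantage term under $\pi'$'s own occupancy goes: in your route it appears explicitly and you must kill it with the cancellation $\mathbb{E}_{(s,a)\sim d^{\pi'}_{P}}[A^{\pi'}_{M'}(s,a)] = \mathbb{E}_{s \sim d^{\pi'}_{P}}\bigl[\mathbb{E}_{a\sim\pi'(\cdot|s)}[A^{\pi'}_{M'}(s,a)]\bigr] = 0$, which is valid exactly as you argue --- the factorization $d^{\pi'}_{P}(s,a) = d^{\pi'}_{P}(s)\,\pi'(a|s)$ holds regardless of which dynamics induce the state marginal, and $V^{\pi'}_{M'}(s) = \mathbb{E}_{a\sim\pi'}[Q^{\pi'}_{M'}(s,a)]$ is a property of the policy value function alone. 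In the paper's proof this same fact is used silently: when the second bracket is expanded, $V^{\pi'}_{M'}$ is rolled forward under $\pi'$'s own actions, so the advantage never materializes. What your version buys is modularity and transparency --- it makes plain that no optimality of either policy is needed (the lemma is pure algebra for arbitrary $\pi, \pi'$), and it isolates the one cancellation that explains why the advantage survives only under the expert occupancy $d^{\pi}_{P}$. What the paper's version buys is self-containment: it never needs the general identity or the explicit cancellation step, at the cost of a longer hand-rolled telescoping. One small point worth stating in a polished write-up: the telescoping tail $\gamma^{t}V^{\pi'}_{M'}(s_t) \to 0$ requires boundedness of $V^{\pi'}_{M'}$, which follows from bounded rewards ($\Vert V^{\pi'}_{M'}\Vert_{\infty} \le R'_{max}/(1-\gamma)$), an assumption the paper also uses implicitly.
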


Lemma \ref{lemma:pdmmdp} decomposes the performance gap in MDP $M$ between policy $\pi$ (the expert) and $\pi'$ into three terms. The first term is the advantage value of $\pi'$ under the expert's state-action marginal distribution. The second term is the difference in reward between MDP $M'$ and $M$ and the difference in the value $V^{\pi'}_{M'}$ of $\pi'$ in $M'$ due to the difference in dynamics expected under the state-action marginal distribution of $\pi'$. This term quantifies the "advantage" of being evaluated in one MDP vs another. The last term is the opposite of reward-model advantage, i.e., disadvantage, expected under the expert policy $\pi$'s state-action marginal distribution.

One can obtain an intuitive understanding of (\ref{eq:pdmmdp}) by attempting to minimize the performance gap via optimizing $R', P'$, given we require $\pi'$ to be the optimal policy w.r.t. some $R', P'$. First, it holds that when $R', P'$ are respectively equal to $R, P$, the reward and model advantages are zeros, and the policy advantage is zero as a result. This means one can read policy, reward, and model advantage as a measure of error from the expert MDP and policy. When such error is nonzero, $R', P'$ are optimized to increase reward-model advantage under the expert distribution and decrease reward-model advantage under the policy's own distribution. This encourages $\pi'$ to choose actions that lead to state-actions achieved by the expert policy, eventually matching expert distribution and thus expert performance. This property has been used to learn value-aware dynamics models to robustly imitate expert behavior in offline inverse reinforcement learning \citep{wei2023bayesian}.

Using (\ref{eq:pdmmdp}), we can obtain an upper bound on the performance gap in terms of the policy advantage and reward and model difference:

\begin{lemma}\label{lemma:pdmmdp_bound}
For the setting considered in lemma \ref{lemma:pdmmdp}, let $\epsilon_{\pi'} = \mathbb{E}_{(s, a) \sim d^{\pi}_{P}}[|A^{\pi'}_{M'}(s, a)|]$, $\epsilon_{R'} = \mathbb{E}_{(s, a) \sim d^{\pi}_{P}}[|\Delta R(s, a)|]$, $\epsilon_{P'} = \mathbb{E}_{(s, a) \sim d^{\pi}_{P}}[\mathbb{KL}[P(\cdot|s, a) || P'(\cdot|s, a)]]$, and $R'_{max} = \max_{s, a}|R'(s, a)|$. Let the two policies have bounded state-action marginal density ratio $\frac{d^{\pi'}_{P}(s, a)}{d^{\pi}_{P}(s, a)} \leq C$. The performance gap is bounded as:
\begin{align}
    J_{M}(\pi) - J_{M}(\pi') \leq \frac{1}{1 - \gamma}\epsilon_{\pi'} + \frac{C + 1}{1 - \gamma }\epsilon_{R'} + \frac{(C + 1)\gamma R'_{max}}{(1 - \gamma)^{2}}\sqrt{2\epsilon_{P'}}
\end{align}
\end{lemma}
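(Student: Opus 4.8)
The plan is to start from the exact three-term identity in Lemma~\ref{lemma:pdmmdp} and bound each piece separately, throughout replacing every signed quantity by its absolute value ($x \leq |x|$) and every expectation taken under $d^{\pi'}_{P}$ by one under $d^{\pi}_{P}$ via the density-ratio hypothesis $\frac{d^{\pi'}_{P}}{d^{\pi}_{P}} \leq C$. For the first (policy-advantage) term I would immediately write $\frac{1}{1-\gamma}\mathbb{E}_{d^{\pi}_{P}}[A^{\pi'}_{M'}] \leq \frac{1}{1-\gamma}\mathbb{E}_{d^{\pi}_{P}}[|A^{\pi'}_{M'}|] = \frac{1}{1-\gamma}\epsilon_{\pi'}$, recovering the first term of the claimed bound.

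Next I would collect the $\Delta R$ contributions from the second and third terms. The reward difference under the expert distribution, $\mathbb{E}_{d^{\pi}_{P}}[-\Delta R]$, is bounded by $\epsilon_{R'}$, while the reward difference under $\pi'$'s own distribution is first bounded as $\mathbb{E}_{d^{\pi'}_{P}}[\Delta R] \leq \mathbb{E}_{d^{\pi'}_{P}}[|\Delta R|]$ and then converted by the change of measure $\mathbb{E}_{d^{\pi'}_{P}}[|\Delta R|] = \mathbb{E}_{d^{\pi}_{P}}[\frac{d^{\pi'}_{P}}{d^{\pi}_{P}}|\Delta R|] \leq C\epsilon_{R'}$. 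Adding these yields the $\frac{C+1}{1-\gamma}\epsilon_{R'}$ term.

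The bulk of the work is the two model-mismatch terms, each of the form $\pm\gamma\sum_{s'}(P'(s'|s,a)-P(s'|s,a))V^{\pi'}_{M'}(s')$. The chain of inequalities I would use is: (i) pass to absolute values and apply the density-ratio argument to replace $d^{\pi'}_{P}$ by $d^{\pi}_{P}$ at the cost of a factor $(C+1)$; (ii) apply H\"older's inequality to factor the inner product as $\|V^{\pi'}_{M'}\|_{\infty}\,\|P(\cdot|s,a)-P'(\cdot|s,a)\|_{1}$; (iii) bound $\|V^{\pi'}_{M'}\|_{\infty} \leq R'_{max}/(1-\gamma)$ since discounted returns of a reward bounded by $R'_{max}$ cannot exceed $R'_{max}/(1-\gamma)$; (iv) invoke Pinsker's inequality to turn the $\ell_1$ distance into $\sqrt{2\,\mathbb{KL}[P(\cdot|s,a)\|P'(\cdot|s,a)]}$; and (v) apply Jensen's inequality (concavity of the square root) to move the expectation inside the root, giving $\mathbb{E}_{d^{\pi}_{P}}[\sqrt{2\,\mathbb{KL}}] \leq \sqrt{2\epsilon_{P'}}$. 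Tracking the prefactors $\frac{\gamma}{1-\gamma}\cdot(C+1)\cdot\frac{R'_{max}}{1-\gamma}$ reproduces the $\frac{(C+1)\gamma R'_{max}}{(1-\gamma)^{2}}\sqrt{2\epsilon_{P'}}$ term, and summing the three bounded pieces assembles the inequality.

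The main obstacle is this model term: the change of measure, Pinsker, and Jensen steps must be interleaved in the correct order, and one must verify that the crude value bound $R'_{max}/(1-\gamma)$ combined with the outer $1/(1-\gamma)$ prefactor is exactly what produces the $(1-\gamma)^{-2}$ in the statement. A tighter estimate using $\tfrac{1}{2}(\max_{s'} V^{\pi'}_{M'} - \min_{s'} V^{\pi'}_{M'})$ in place of $\|V^{\pi'}_{M'}\|_{\infty}$ is available, since $P'-P$ integrates to zero, but it is not needed to reach the stated constant.
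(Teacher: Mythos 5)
Your plan is correct and follows essentially the same route as the paper: bound the three terms of Lemma~\ref{lemma:pdmmdp} separately, use the density ratio to convert $d^{\pi'}_{P}$-expectations into $d^{\pi}_{P}$-expectations (factor $C$ for the own-distribution pieces, $1$ for the expert pieces), and handle the model-mismatch terms via H\"older, the value bound $\|V^{\pi'}_{M'}\|_{\infty} \leq R'_{max}/(1-\gamma)$, Pinsker, and Jensen on the square root — exactly the chain the paper packages as its Proposition on the model advantage bound. The only difference is cosmetic: the paper factors the H\"older--Pinsker step into a standalone proposition, while you inline it (and note an unneeded value-span refinement).
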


Lemma \ref{lemma:pdmmdp_bound} shows that the performance gap is linear (w.r.t. planning horizon) in the expected policy advantage and reward difference and quadratic in the model difference. Thus, model difference is a main contributor to performance difference if it has a similar magnitude to policy and reward differences. However, it should be noted that this bound can be overly conservative (sometimes known as the worst-case bound; \citealp{ross2011reduction}) since it doesn't consider the possibility of reward advantage being cancelled out by model advantage. 

\section{Value of Information in POMDPs}
Given the primary difference between active inference and RL is the definition of epistemic value and open-loop belief dynamics, we ask whether it could be seen as an approximation to the Bayes optimal policy, specifically the epistemic aspect thereof? To this end, we first analyze the "value of information" in the Bayes optimal policy. We then show that epistemic value closes the gap to the Bayes optimal policy by making up for the loss of information value.

\subsection{Value of Information in Bayes Optimal RL Policy}\label{sec:voi_rl}
It's colloquially accepted that the Bayes optimal policy characterized by the value functions in (\ref{eq:pomdp_bellman}) optimally trades off exploration and exploitation. However, it's not immediately obvious what is being traded off, the comparison is made against which alternative action or policy, and how large is the performance gap. 
In this paper, we adopt the view that what's being traded off is the value of information, which we try to quantify in an action or policy.
In \citep{howard1966information}, the value of information for a single step decision making problem is defined as the reward a decision maker is willing to give away if they could have their uncertainty resolved (e.g., by a clairvoyant). Formally, the expected value of perfect information (EVPI) is defined as the difference between the expected value given perfect information (EV$|$PI) and the expected value without perfect information (EV). 

In the POMDP setting, the agent cannot in general obtain perfect information about the hidden state, but an observation that is usually correlated with the state. It turns out that this corresponds to an extension of Howard's definition in the single step decision making setting called the value of imperfect information \citep{raiffa2000applied}. For consistency in notation, we will label it as the expected value of perfect observation (EVPO) and define it as: 
\begin{align}\label{eq:epvi}
\begin{split}
    EVPO &= EV|PO - EV \,,\\
    EV &= \max_{a} \sum_{s}b(s)R(s, a) \,,\\
    EV|PO &= \sum_{o}\sum_{s}P(o|s)b(s)\max_{a}R(b(s|o), a) \,.
\end{split}
\end{align}
Similar to EVPI, EVPO is non-negative because an optimal decision maker cannot gain information and do worse \citep{howard1966information}.

Extending this definition for the multi-stage sequential decision making setting, we have the following corollary of EV and EV$|$PO for POMDPs:
\begin{subequations}\label{eq:belief_mdp_open_close}
\begin{align}
EV: &\quad Q^{open}(b, a) = \sum_{s}b(s)R(s, a) + \gamma V^{open}(b'(a, b)) \label{eq:belief_mdp_open} \,,\\
EV|PO: &\quad Q(b, a) = \sum_{s}b(s)R(s, a) + \gamma \sum_{o'}P(o'|b, a)V(b'(o, a, b)) \label{eq:belief_mdp_close} \,.
\end{align}
\end{subequations}

The definition is the same as that of \cite{flaspohler2020belief}, except here we introduce additional motivation and justification based on the framework of \citet{howard1966information} and \citet{raiffa2000applied}. It is clear that EV$|$PO is the same as the Bayes optimal value function. Interestingly, EV uses the open-loop belief dynamics that we saw earlier in EFE but it uses the same reward as Bayes optimal policy. We thus label its value functions as $Q^{open}$ and $V^{open}$. The following proposition shows that EVPO in the POMDP setting is also non-negative:

\begin{proposition}\label{prop:evpo_pomdp_nonnegative}
Let $Q^{open}(b, a), V^{open}(b)$ and $Q(b, a), V(b)$ denote the open and closed-loop value functions as defined in (\ref{eq:belief_mdp_open_close}), it holds that:
\begin{align}
    Q(b, a) \geq Q^{open}(b, a) \text{ and } V(b) \geq V^{open}(b) \text{ for all $b \in \Delta(\mathcal{S})$ and $a \in \mathcal{A}$} \,.
\end{align}
\end{proposition}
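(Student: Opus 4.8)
The plan is to prove the two inequalities together by a value-iteration argument that exploits convexity of the closed-loop value function together with Jensen's inequality. The crux is a single \emph{mean-preservation} identity linking the two belief dynamics: averaging the closed-loop Bayesian posteriors over the predictive observation distribution recovers the open-loop belief,
\begin{align}\label{eq:mean_preserve}
    \sum_{o'}P(o'|b, a)\,b'(o', a, b) = b'(a, b) \,,
\end{align}
which follows by substituting the belief update (\ref{eq:belief_update}), cancelling the normalizer $P(o'|b,a)$, and using $\sum_{o'}P(o'|s') = 1$. Intuitively, the open-loop belief is the barycenter of the closed-loop posteriors, so any \emph{convex} function of the belief is at least as large when evaluated after observation-splitting as when evaluated at the average.

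First I would recall that the closed-loop optimal value function $V$ is convex in the belief (the standard POMDP fact noted earlier in the text), and more precisely that the closed-loop Bellman operator preserves convexity: since $P(o'|b,a)$ and $P(o'|b,a)\,b'(\cdot|o',a,b)$ are both affine in $b$, each summand $P(o'|b,a)W(b'(o',a,b))$ is a perspective of the convex $W$ composed with an affine map and hence convex in $b$, and the $\max_{a}$ of a linear-plus-convex expression remains convex. I would then set up the two Bellman operators, $\mathcal{T}$ (closed-loop, from (\ref{eq:belief_mdp_close})) and $\mathcal{T}^{open}$ (open-loop, from (\ref{eq:belief_mdp_open})), both $\gamma$-contractions in the sup norm with fixed points $V$ and $V^{open}$, and run value iteration from the common initialization $V_{0} = V^{open}_{0} = 0$.

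The core is an induction establishing, for every iterate $k$, that $V_{k}$ is convex and $V_{k} \geq V^{open}_{k}$ pointwise. The convexity half is exactly the preservation property above. For the inequality, the key one-step estimate is that on any convex $W$,
\begin{align}
    (\mathcal{T}W)(b) &= \max_{a}\Big[\sum_{s}b(s)R(s,a) + \gamma\sum_{o'}P(o'|b,a)W(b'(o', a, b))\Big] \\
    &\geq \max_{a}\Big[\sum_{s}b(s)R(s,a) + \gamma W(b'(a, b))\Big] = (\mathcal{T}^{open}W)(b) \,,
\end{align}
where the inequality is Jensen applied to the convex $W$ at the distribution $P(\cdot|b,a)$, using identity (\ref{eq:mean_preserve}). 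Combining this with monotonicity of $\mathcal{T}^{open}$ yields the inductive step $V_{k+1} = \mathcal{T}V_{k} \geq \mathcal{T}^{open}V_{k} \geq \mathcal{T}^{open}V^{open}_{k} = V^{open}_{k+1}$. Passing to the limit $k \to \infty$ (both iterations converge by contraction) gives $V \geq V^{open}$. The $Q$-inequality then follows in one line: substituting $V \geq V^{open}$ and the same Jensen step into (\ref{eq:belief_mdp_close}) gives $Q(b,a) \geq \sum_{s}b(s)R(s,a) + \gamma V(b'(a,b)) \geq Q^{open}(b,a)$.

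I expect the main obstacle to be the convexity bookkeeping rather than the inequality itself: one must verify that convexity is genuinely carried through the iteration (the perspective-function argument) so that Jensen is legitimate at \emph{every} step, while noting that the open-loop iterates need not be convex, since only monotonicity of $\mathcal{T}^{open}$ is used on that side. A secondary technical point is the interchange of the limit with the inequality, which is immediate from the uniform $\gamma$-contraction but should be stated explicitly.
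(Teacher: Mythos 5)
Your proof is correct, and it reaches the result by a genuinely different mechanism than the paper's, even though both share the same value-iteration induction skeleton. The paper's one-step comparison rests on the elementary EVPO non-negativity result (Proposition \ref{prop:evpo}): at each iteration it substitutes the open-loop maximizer into the closed-loop maximization and decomposes the resulting difference into an EVPO term plus a difference already shown non-negative at the previous iteration --- in effect re-proving, for the special convex function $b \mapsto \max_a \sum_s b(s)R(s,a)$, the Jensen inequality that you invoke in general form. You instead abstract the comparison into the operator inequality $(\mathcal{T}W)(b) \ge (\mathcal{T}^{open}W)(b)$ for \emph{arbitrary} convex $W$, justified by the mean-preservation identity $\sum_{o'}P(o'|b,a)\,b'(o',a,b) = b'(a,b)$ together with Jensen, and you combine it with monotonicity of the open-loop operator; the price is that you must carry convexity of the closed-loop iterates as an extra induction invariant (your perspective-function argument), which the paper never establishes or needs in this proof. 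What your route buys: the inductive step collapses to one line, $V_{k+1} = \mathcal{T}V_k \ge \mathcal{T}^{open}V_k \ge \mathcal{T}^{open}V^{open}_k = V^{open}_{k+1}$, and it sidesteps the paper's delicate nested-max bookkeeping (in the paper's $k=2$ step the substituted action appears asymmetrically in the reward and continuation terms, a point your operator formulation avoids entirely); it also makes explicit that the entire closed-loop advantage is a Jensen gap, and your observation that only monotonicity --- not convexity --- is needed on the open-loop side is exactly right. What the paper's route buys: it is more elementary, requiring no convexity-preservation machinery, and it isolates the one-step gain as EVPO, a quantity the paper then reuses quantitatively in its subsequent upper bounds (Propositions \ref{prop:evpo_upperbound} and \ref{prop:closed_loop_model_advantage_appx}), so the identification is not incidental but structural to the rest of the analysis.
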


Intuitively, the closed-loop Bayes optimal policy is better because it can take actions that lead to future observations which upon update lead to lower entropy beliefs. Given the closed-loop value function is convex in the beliefs, lower entropy beliefs generally have higher value. These actions are referred to as \emph{epistemic} actions.

However, simply comparing open and closed-loop value functions doesn't give us an adequate measure of the value of information since in most realistic settings, agents are allowed to observe the environment and update their beliefs despite using potentially suboptimal open-loop policies. We thus consider this setting by deploying both policies in a POMDP for which the closed-loop policy is optimal, and the only difference between the two policies is that the open-loop policy will choose actions according to (\ref{eq:belief_mdp_open}) as if it would not be able to observe the environment in the future. From lemma \ref{lemma:pdmmdp} we know that the primary contributor to the performance gap between the two policies is the difference in their transition dynamics and the resulting model advantage. The following proposition characterizes the advantage of the closed-loop dynamics:

\begin{proposition}\label{prop:closed_loop_model_advantage}
Let $R_{max} = \max_{s, a}|R(s, a)|$. The closed-loop model advantage is bounded as follows:
\begin{align}
\begin{split}
    0 \leq \mathbb{E}_{P(b'|b, a)}[V^{open}(b')] - \mathbb{E}_{P^{open}(b''|b, a)}[V^{open}(b'')] \leq \frac{R_{max}}{1 - \gamma}\sqrt{2IG(b, a)} \,.\\
\end{split}
\end{align}
\end{proposition}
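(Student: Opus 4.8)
The plan is to recognize the quantity as a Jensen gap for the open-loop value function $V^{open}$, evaluated at the closed-loop posterior beliefs versus their mixture. First I would unpack both expectations using the dynamics definitions. By the closed-loop dynamics in (\ref{eq:pomdp_belief_mdp}), $\mathbb{E}_{P(b'|b,a)}[V^{open}(b')] = \sum_{o'}P(o'|b,a)\,V^{open}(b'(o',a,b))$, whereas by the open-loop dynamics in (\ref{eq:efe_belief_mdp_dynamics}) the second term is simply $V^{open}(b'(a,b))$. The crucial identity is that $b'(a,b)$ is exactly the $P(o'|b,a)$-mixture of the posteriors $b'(o',a,b)$, which is the law of total probability: $\sum_{o'}P(o'|b,a)\,b'(s'|o',a,b) = \sum_{o'}P(o',s'|b,a) = \sum_{s}P(s'|s,a)b(s) = b'(s'|a,b)$. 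Writing $\bar b := b'(a,b)$ and $b_{o'} := b'(o',a,b)$, the quantity becomes $\sum_{o'}P(o'|b,a)\,V^{open}(b_{o'}) - V^{open}(\bar b)$ with $\bar b = \sum_{o'}P(o'|b,a)\,b_{o'}$.

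For the lower bound I would establish that $V^{open}$ is convex in the belief. Because the open-loop belief map $b \mapsto b'(a,b)$ is linear and the one-step reward $\sum_{s}b(s)R(s,a)$ is linear, a value-iteration induction preserves convexity: if $V^{open}_{n}$ is convex then $b \mapsto V^{open}_{n}(b'(a,b))$ is convex (convex composed with a linear map), the Bellman backup adds a linear term and takes a maximum over $a$, both of which preserve convexity, so the uniform limit $V^{open}$ is convex. Jensen's inequality then gives $\sum_{o'}P(o'|b,a)\,V^{open}(b_{o'}) \geq V^{open}(\bar b)$, i.e.\ the gap is nonnegative. This is the per-step analogue of Proposition \ref{prop:evpo_pomdp_nonnegative}.

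For the upper bound the plan has three ingredients. First, I would show $V^{open}$ is $\tfrac{R_{max}}{1-\gamma}$-Lipschitz in the $\ell_1$ norm: applying the max-of-functions inequality to the Bellman backup, using $|\langle b_1 - b_2, R(\cdot,a)\rangle| \leq R_{max}\|b_1-b_2\|_1$ together with the non-expansiveness $\|b'(a,b_1)-b'(a,b_2)\|_1 \leq \|b_1-b_2\|_1$ (the open-loop map applies a stochastic kernel), so the value-iteration Lipschitz constants satisfy $L_{n+1} = R_{max} + \gamma L_n \to R_{max}/(1-\gamma)$. Second, writing the gap as $\sum_{o'}P(o'|b,a)\,[V^{open}(b_{o'}) - V^{open}(\bar b)]$, the Lipschitz bound followed by Pinsker's inequality $\|b_{o'}-\bar b\|_1 \leq \sqrt{2\,\mathbb{KL}[b_{o'}\|\bar b]}$ bounds it by $\tfrac{R_{max}}{1-\gamma}\sum_{o'}P(o'|b,a)\sqrt{2\,\mathbb{KL}[b_{o'}\|\bar b]}$. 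Third, by concavity of the square root and Jensen, $\sum_{o'}P(o'|b,a)\sqrt{2\,\mathbb{KL}[b_{o'}\|\bar b]} \leq \sqrt{2\sum_{o'}P(o'|b,a)\,\mathbb{KL}[b_{o'}\|\bar b]} = \sqrt{2\,IG(b,a)}$, invoking the definition of $IG$ in (\ref{eq:efe_belief_mdp_rwd}). Chaining these yields the claimed bound.

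The main obstacle I anticipate is pinning the Lipschitz constant to exactly $R_{max}/(1-\gamma)$ while keeping norm conventions consistent, so that Pinsker produces precisely the factor $\sqrt{2}$ appearing in the statement; it is easy to drop or double a factor of two when converting between $\|\cdot\|_1$ and total variation. The convexity argument and the law-of-total-probability identity are routine, so the real care lies entirely in the constant bookkeeping across the Lipschitz, Pinsker, and Jensen steps.
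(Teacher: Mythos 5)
Your proof is correct, and it takes a genuinely different route from the paper's. The paper proves the upper bound by unrolling value iteration stage by stage (Proposition \ref{prop:closed_loop_model_advantage_appx}): at each stage $k$ it fixes the closed-loop maximizing action, splits off a one-step EVPO-type term bounded via H\"older--Pinsker--Jensen (Proposition \ref{prop:evpo_upperbound}), controls the KL divergence between the \emph{propagated} beliefs at deeper stages by the original posterior-vs-prior KL using the data-processing inequality, and finally sums the geometric series $\sum_t \gamma^t$ to produce the $\frac{1}{1-\gamma}$ factor. You instead prove a single reusable property --- that $V^{open}$ is $\frac{R_{max}}{1-\gamma}$-Lipschitz in $\ell_1$, via the recursion $L_{n+1} \leq R_{max} + \gamma L_n$ with the $\ell_1$ non-expansiveness of the stochastic open-loop belief map --- and then apply Lipschitz, Pinsker, and Jensen exactly once to the Jensen-gap form of the advantage, using the mixture identity $b'(a,b) = \mathbb{E}_{P(o'|b,a)}[b'(o',a,b)]$ and the definition of $IG$ in (\ref{eq:efe_belief_mdp_rwd}). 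Your $\ell_1$ non-expansiveness plays precisely the role that data processing plays in the paper, and the geometric series is absorbed into the Lipschitz constant; the constants do come out right with the convention $\Vert P - Q\Vert_1 \leq \sqrt{2\,\mathbb{KL}[P\,\Vert\,Q]}$, which is also the convention the paper uses (Proposition \ref{prop:model_advantage_bound}). For the lower bound, both arguments coincide: the paper's standalone convexity identity for $V^{open}$ plus Jensen is exactly your convexity induction. What your organization buys is modularity and economy --- no per-stage bookkeeping of competing maximizers $a^{close*}, a^{open*}$ --- while the paper's stage-wise decomposition buys something elsewhere: it is structured so that the same template extends to the EFE value function (Proposition \ref{prop:efe_closed_loop_model_advantage_appx}), where the concavity of the $IG$ reward term must be invoked at every stage of the recursion, a step that a one-shot Lipschitz argument would have to revisit since the EFE reward bonus changes the per-step Lipschitz bookkeeping.
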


It shows that the advantage of closed-loop dynamics is primarily due to information gain which scales linearly w.r.t. the planning horizon.

\subsection{Main Result: EFE Approximates Bayes Optimal RL Policy}\label{sec:voi_efe}
The main insight of this work is that EFE closes the optimality gap between open and closed-loop policies by augmenting the reward of the open-loop policy with the epistemic value term. Given the pragmatic value is linear in the belief (\ref{eq:prag_value_linear}), we will use it as the shared reward between active inference and RL agents, i.e., $R(s, a) = \tilde{R}(s, a)$.

Proposition \ref{prop:closed_loop_model_advantage} shows that the advantage of closed-loop belief transition is proportional to the information gain provided by the next observation. While the agent cannot change either belief transition distributions, it can change its reward to alter the reward-model advantage and the marginal distribution under which it is evaluated. An obvious choice for the reward advantage is to set it to the information gain in order to cancel with the information disadvantage of open-loop belief dynamics. To ensure the agent does not get distracted by gaining information and still focus on task relevant behavior, we make the following assumption on preference distribution specification:

\begin{assumption}\label{assumption:reward_specification}
(Preference specification) The preference distribution or reward is specified such that the gain in pragmatic value after receiving a new observation is higher than the loss in epistemic value in expectation under the Bayes optimal policy $\pi$ in closed-loop belief dynamics $P$:
\begin{align}
\begin{split}
    \mathbb{E}_{(b, a) \sim d^{\pi}_{P}}\left[\sum_{s}\left(b(s|o) - b(s)\right)R(s, a)\right] \geq \mathbb{E}_{(b, a) \sim d^{\pi}_{P}}[IG(b(s), a) - IG(b(s|o), a)] \,.
\end{split}
\end{align}
\end{assumption}

This assumption also ensures that the advantage of closed-loop belief dynamics under the EFE value function is non-negative. In practice, since the Bayes optimal policy behavior can be difficult to know a priori, we can approximate the above by setting a reward function such that the reward difference is sufficiently high. In the appendix, we prove that the advantage upper bound given this assumption is the same as that evaluated under the open-loop belief MDP in proposition \ref{prop:closed_loop_model_advantage}. To facilitate the comparison between open-loop and EFE policy, we introduce two more assumptions:

\begin{assumption}\label{assumption:policy_behavior}
(Policy behavior) We make the following assumptions on the behavior of the evaluated policies:
\begin{enumerate}
    \item The absolute advantage of the EFE policy $\pi^{EFE}$ expected under the Bayes optimal policy's marginal distribution is no worse than that of the open-loop policy $\pi^{open}$: $\epsilon_{\tilde{\pi}} = \mathbb{E}_{(b, a) \sim d^{\pi}_{P}}[|A^{\pi^{open}}_{P}(b, a)|] \geq \mathbb{E}_{(b, a) \sim d^{\pi}_{P}}[|A^{\pi^{EFE}}_{P}(b, a)|]$.
    \item For both the open-loop policy $\pi^{open}$ and EFE policy $\pi^{EFE}$, it always holds that $IG(b, a) \geq 2$ for any $b, a$ sampled from either their own or the expert policy's marginal distribution.
\end{enumerate}
\end{assumption}

Note that both assumptions are conservative but they will enable us to focus the comparison of both policies on their information seeking behavior. Assumption 1 is reasonable because we expect the EFE policy to be more similar to the expert than the open-loop policy given the information gain reward encourages information seeking behavior. This enables us to remove policy advantage from the comparison.  Assumption 2 is partly numerically motivated because it allows us to further upper bound the closed-loop model advantage in proposition \ref{prop:closed_loop_model_advantage} via $\sqrt{2\mathbb{KL}} \leq \mathbb{KL}$ so that the $IG$ reward bonus in EFE can be directly compared with closed-loop model advantage and subtracted from it. In practice, many POMDP environments are much more benign in that partial observability, and thus the value of information, decreases to zero in a small number of time steps \citep{liu2022partially}. In that case, the difference between open-loop, EFE, and Bayes optimal policies become very small. Thus, the setting we consider is harder or more pessimistic. 

The following theorem, which is the main result, gives the performance gap of both policies compared to the Bayes optimal policy:

\begin{theorem}\label{theorem:open_efe_performance_gap}
Let all policies be deployed in POMDP $M$ and all are allowed to update their beliefs according to $b'(o', a, b)$. Let $\epsilon_{IG} = \mathbb{E}_{(b, a) \sim d^{\pi}_{P}}[IG(b, a)]$ denotes the expected information gain under the Bayes optimal policy's belief-action marginal distribution and let the belief-action marginal induced by both open-loop and EFE policies have bounded density ratio with the Bayes optimal policy $\left\Vert \frac{d^{\tilde{\pi}}_{P}(b, a)}{d^{\pi}_{P}(b, a)} \right\Vert_{\infty} \leq C$. Under assumptions \ref{assumption:reward_specification} and \ref{assumption:policy_behavior}, the performance gap of the open-loop and EFE policies from the optimal policy are bounded as:
\begin{align}
\begin{split}
    &J_{M}(\pi) - J_{M}(\pi^{open}) \leq \frac{1}{1 - \gamma}\epsilon_{\tilde{\pi}} + \frac{(C + 1)\gamma R_{max}}{(1 - \gamma)^{2}}\epsilon_{IG} \,,\\
    &J_{M}(\pi) - J_{M}(\pi^{EFE}) \leq \frac{1}{1 - \gamma}\epsilon_{\tilde{\pi}} + \frac{(C + 1)\gamma R_{max}}{(1 - \gamma)^{2}}\epsilon_{IG} - \frac{C + 1}{1 - \gamma}\epsilon_{IG} \,.
\end{split}
\end{align}
\end{theorem}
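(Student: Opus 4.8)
The plan is to treat both $\pi^{open}$ and $\pi^{EFE}$ as policies that are optimal in belief MDPs $M^{open}$ and $M^{EFE}$ which differ from the Bayes-optimal belief MDP $M$, and then invoke the mismatched performance-difference machinery of Lemmas \ref{lemma:pdmmdp} and \ref{lemma:pdmmdp_bound} with $M$ playing the role of the evaluation (expert) MDP. Concretely, $M^{open}$ shares the pragmatic reward $R$ of $M$ but uses the deterministic open-loop dynamics $P^{open}$ of (\ref{eq:efe_belief_mdp_dynamics}), while $M^{EFE}$ uses the same $P^{open}$ but augments the reward to $R^{EFE}=R+IG$ via (\ref{eq:efe_belief_mdp_rwd}). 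Since all three policies are evaluated under the true closed-loop dynamics $P$, the only departures of $M^{open}$ and $M^{EFE}$ from $M$ are a dynamics mismatch $P$ versus $P^{open}$, shared by both, and a reward mismatch $\Delta R$ that is $0$ for $\pi^{open}$ and $IG \ge 0$ for $\pi^{EFE}$. Identifying $\tilde\pi \in \{\pi^{open},\pi^{EFE}\}$ and reading off the quantities in Lemma \ref{lemma:pdmmdp_bound} then produces both bounds, provided the generic model term is replaced by the belief-specific bound of Proposition \ref{prop:closed_loop_model_advantage}.

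For the open-loop policy I would first set $\Delta R = 0$, so that $\epsilon_{R'}=0$ and the reward term of Lemma \ref{lemma:pdmmdp_bound} drops out, while the policy-advantage term contributes $\tfrac{1}{1-\gamma}\epsilon_{\tilde\pi}$ by the definition of $\epsilon_{\tilde\pi}$. For the model term, instead of the worst-case Pinsker factor $\sqrt{2\epsilon_{P'}}$, I would substitute Proposition \ref{prop:closed_loop_model_advantage}, which bounds the closed-loop model advantage of $V^{open}$ pointwise by $\tfrac{R_{max}}{1-\gamma}\sqrt{2IG(b,a)}$; taking expectations over the expert and the $\pi^{open}$ occupancy measures and aggregating via the density ratio $C$ produces the factor $(C+1)$. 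Finally I would linearize $\sqrt{2IG}\le IG$ using Assumption \ref{assumption:policy_behavior}(2), turning $\mathbb{E}_{d^\pi_P}[\sqrt{2IG}]$ into $\epsilon_{IG}$ and yielding the first bound $\tfrac{1}{1-\gamma}\epsilon_{\tilde\pi}+\tfrac{(C+1)\gamma R_{max}}{(1-\gamma)^2}\epsilon_{IG}$.

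For the EFE policy the dynamics mismatch is identical, so the model term is unchanged except that it is now evaluated at $V^{EFE}$ rather than $V^{open}$; here I would invoke Assumption \ref{assumption:reward_specification}, which (as established in the appendix) guarantees that the closed-loop model advantage under the EFE value function remains non-negative and obeys the same $\tfrac{R_{max}}{1-\gamma}\sqrt{2IG}$ bound, so the model contribution stays $\tfrac{(C+1)\gamma R_{max}}{(1-\gamma)^2}\epsilon_{IG}$. The policy-advantage term is controlled by Assumption \ref{assumption:policy_behavior}(1), giving $\mathbb{E}_{d^\pi_P}[|A^{\pi^{EFE}}_P|]\le\epsilon_{\tilde\pi}$. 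The genuinely new ingredient is the reward mismatch $\Delta R = IG$: unlike the open-loop case I would not pass to absolute values, because $IG$ is a reward the expert also accrues, so it enters the reward-model \emph{disadvantage} term of Lemma \ref{lemma:pdmmdp} (under the expert occupancy $d^\pi_P$) with a negative sign $-\Delta R$, and the reward-model \emph{advantage} term (under $d^{\pi^{EFE}}_P$) with $+\Delta R$. I would combine these two signed contributions with the density-ratio bound to obtain the net reduction $\tfrac{C+1}{1-\gamma}\epsilon_{IG}$, giving the second bound.

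The main obstacle, and the step I would spend the most care on, is the bookkeeping of this reward-difference term: one must keep the sign of $\Delta R = IG$ explicit across the advantage and disadvantage terms of Lemma \ref{lemma:pdmmdp} rather than bounding $|\Delta R|$ as in the generic Lemma \ref{lemma:pdmmdp_bound}, so that the augmentation appears as a reduction rather than an increase of the gap, while still correctly aggregating the two occupancy measures into the $(C+1)$ coefficient. The secondary technical point is transferring Proposition \ref{prop:closed_loop_model_advantage} from $V^{open}$ to $V^{EFE}$, which is precisely where Assumption \ref{assumption:reward_specification} is needed: it prevents the information bonus from inflating the value function and thereby breaking the model-advantage bound that the whole argument reuses.
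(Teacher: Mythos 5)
Your overall scaffolding matches the paper's proof: both cast $\pi^{open}$ and $\pi^{EFE}$ as optimal policies of mismatched belief MDPs $(R, P^{open})$ and $(R+IG, P^{open})$, apply Lemma \ref{lemma:pdmmdp} with the Bayes optimal belief MDP as the evaluation MDP, replace the generic Pinsker model term with Proposition \ref{prop:closed_loop_model_advantage} (and its EFE analogue, enabled by Assumption \ref{assumption:reward_specification}), linearize $\sqrt{2IG}\le IG$ via Assumption \ref{assumption:policy_behavior}(2), and use Assumption \ref{assumption:policy_behavior}(1) to replace $\epsilon_{\pi^{EFE}}$ by $\epsilon_{\tilde\pi}$. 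Your open-loop bound is correct and is exactly the paper's.

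However, the step you flag as the crux --- extracting the improvement $-\frac{C+1}{1-\gamma}\epsilon_{IG}$ by ``combining the two signed contributions'' $-\Delta R$ under $d^{\pi}_{P}$ and $+\Delta R$ under $d^{\pi^{EFE}}_{P}$ --- does not work, and this is a genuine gap. With $\Delta R = IG \ge 0$, the disadvantage term contributes exactly $-\frac{1}{1-\gamma}\mathbb{E}_{d^{\pi}_{P}}[IG] = -\frac{1}{1-\gamma}\epsilon_{IG}$, while the advantage term contributes $+\frac{1}{1-\gamma}\mathbb{E}_{d^{\pi^{EFE}}_{P}}[IG] \ge 0$; no density-ratio argument can make the latter negative (the ratio bound only gives $\mathbb{E}_{d^{\pi^{EFE}}_{P}}[IG]\le C\epsilon_{IG}$, an upper bound on a \emph{positive} quantity), so the net signed reward contribution is at least $-\frac{1}{1-\gamma}\epsilon_{IG}$ and, for a valid upper bound on the gap, must be allowed to be as large as $+\frac{C-1}{1-\gamma}\epsilon_{IG}$. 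You can therefore never reach $-\frac{C+1}{1-\gamma}\epsilon_{IG}$ by cross-cancelling the two reward terms. (Your intuition that ``$IG$ is a reward the expert also accrues'' is also off: the expert is evaluated in $M$ with reward $R$ only; the $-\Delta R$ arises purely from the algebra of Lemma \ref{lemma:pdmmdp}.) The paper's mechanism is different: the cancellation happens \emph{inside each} reward-model term, between the reward difference and the model-advantage bound under the \emph{same} occupancy measure. Under $d^{\pi}_{P}$ one bounds $-IG + \gamma\bigl(\mathbb{E}_{P}[V^{EFE}] - \mathbb{E}_{P^{open}}[V^{EFE}]\bigr) \le -IG + \frac{\gamma R_{max}}{1-\gamma}\sqrt{2IG} \le \bigl(\frac{\gamma R_{max}}{1-\gamma}-1\bigr)IG$, and the own-distribution term $IG - \gamma\bigl(\mathbb{E}_{P}[V^{EFE}] - \mathbb{E}_{P^{open}}[V^{EFE}]\bigr)$ is bounded in absolute value the same way; this is exactly where Assumption \ref{assumption:policy_behavior}(2) is needed, not merely to simplify the open-loop model term as in your write-up, but to put $IG$ and $\sqrt{2IG}$ in the same units so the subtraction is legitimate. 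Summing the two terms with the density ratio then gives $(C+1)\frac{\gamma R_{max}+\gamma-1}{(1-\gamma)^{2}}\epsilon_{IG} = \frac{(C+1)\gamma R_{max}}{(1-\gamma)^{2}}\epsilon_{IG} - \frac{C+1}{1-\gamma}\epsilon_{IG}$, which is the claimed bound.
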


Theorem \ref{theorem:open_efe_performance_gap} shows that the performance gap of both policies are linear (w.r.t. planning horizon) in the policy advantage and quadratic in the information gain. However, the EFE policy improves over the open-loop policy with a linear increase in information gain. As mentioned before, these bounds are conservative estimates since the information seeking propriety of the EFE policy could further reduce policy disadvantage and the IG bonus could further reduce closed-loop model advantage. 

\section{Discussions}
Our results highlight the nuanced relationship between active inference and the classic approach to POMDPs. In this section, we provide a few complementary perspectives on related POMDP approximation and extensions from the RL literature and discuss objective specification in active inference informed by our results. 

\subsection{POMDP Approximation and Extensions}
In the POMDP planning literature, there is a suite of approximation techniques to overcome the intractability of exact belief update and value function representation. The simplest ones are the maximum likelihood heuristic and QMDP heuristic which first compute the underlying MDP value function and then obtain the belief value function using either the most likely state under the current belief or a belief-weighted average \citep{littman1995learning}. These approximations leverage the fact that MDP value functions (in discrete space) are easy to compute, but they can be overly optimistic since they implicitly assume the state in the next time step will be fully observed \citep{hauskrecht2000value}. As a result, the agent does not take information gathering actions.

To address this shortcoming, there is a special set of heuristics dedicated to inducing information gathering actions \citep{roy2005finding}. These information gathering heuristics typically operate in a "dual-mode" fashion where exploitation and exploration are arbitrated by some criterion. For example, in \citet{cassandra1996acting}, the exploitation mode chooses actions based on the underlying MDP whereas the exploration mode chooses actions to minimize belief entropy in the next time step. These two modes are arbitrated by the entropy of the current belief. Complementary to dual-mode execution, \citet{flaspohler2020belief} propose to interleave open-loop with closed-loop belief dynamics when the value of information is low to speed up value function computation. In doing so, these methods alleviate the expensive belief updating operation during planning. While EFE resembles these heuristics and thus amenable to efficiency gain, it introduces an information gain term in the reward, which could be an expensive operation in itself \citep{belghazi2018mutual}.

Recently, there is a family of methods called information directed sampling (IDS) which also introduces information objectives primarily to improve Thompson sampling-based algorithms in the context of multi-arm bandits and Bayesian RL \citep{russo2018learning, lu2023reinforcement, hao2022regret, chakraborty2023steering}. These problems can be seen as subsets of POMDPs where the only hidden state is the unknown environment model parameters \citep{doshi2016hidden}. Similar to our work, their analyses are also based on characterization of the relationship between information gain and regret, but instead via a quantity called "information ratio". Furthermore, we consider planning with open-loop belief dynamics rather than Thompson sampling and our focus is on analyzing EFE.

Beyond information gathering heuristics, there is a family of POMDP extensions called active sensing or $\rho$-POMDP \citep{araya2010pomdp}, where the reward function is directly defined on beliefs. These POMDPs are typically used to model settings where the reduction in belief entropy is the primary goal, such as in the exploration of an area, and a goal-related reward can be optionally added. Without loss of generality, we can define this reward as the one-step ahead belief entropy:
\begin{align}
\begin{split}
    R^{AS}(b, a) &= -\mathbb{E}_{P(o'|b, a)}[\mathbb{H}[b'(s'|o', b, a)]] \\
    &\propto \mathbb{E}_{P(o'|b, a)}[\mathbb{E}_{b'(s'|o', b, a)}[\log b'(s'|o', b, a) - \log \frac{1}{|\mathcal{S}|}]] \\
    &= \mathbb{E}_{P(o'|b, a)}[\mathbb{KL}[b'(s'|o', b, a) || \tilde{b}(s')]] \,.
\end{split}
\end{align}
where $\tilde{b}(s') = \frac{1}{|\mathcal{S}|}$ is a uniform prior belief. This shows that the active sensing objective can be written as a special type of information gain that is evaluated against a uniform prior belief, thus resembling the EFE objective. An attractive property of this objective is that it is convex in the belief, and thus is the value function, which makes the agent potentially less distracted by information gain when task rewards are introduced. A further difference is that it uses closed-loop belief dynamics which enables better optimization of the information objective.

\subsection{Objective Specification in Active Inference}\label{sec:aif_obj_specification}
The objective functions in active inference have been subject to various interpretations since its inception in the late 2000's and have only slowed down relatively recently \citep{gottwald2020two}. The EFE objective, which first appeared in the literature as early as 2015 in \citep{friston2015active}, was initially motivated by an intuitive argument that "free energy minimizing agents should choose actions to minimize (expected) free energy". However, far from being heuristic, the EFE objective is rooted in the free energy principle which adopts a physics and information geometric perspective, rather than a decision theoretic perspective, on agent behavior \citep{friston2023path, friston2023free, barp2022geometric}, in which case open-loop belief dynamics is the natural outcome. It should be mentioned, however, that the information geometric derivation of EFE relies on a "precise agent" assumption on the environment in which future actions and observations are assumed to have matching entropy \citep{barp2022geometric, da2024active}. It remains open whether this assumption is satisfied in real environments. 

Recently, \citet{friston2021sophisticated} introduced a "sophisticated" version of EFE as an improved planning objective for active inference agents, where instead of evaluating EFE based on future state marginals, EFE is evaluated based on future posterior beliefs $Q(s_{t}|o_{t}, a_{0:t-1})$. This means that the belief MDP underlying the sophisticated EFE uses the closed-loop belief dynamics rather than the open-loop belief dynamics in the vanilla EFE, however, the information gain term is still used in the reward function. This means that we can no longer view sophisticated EFE as an approximation to the Bayes optimal policy. Rather, the combination of pragmatic value and closed-loop belief dynamics renders parts of sophisticated EFE exactly equal to the Bayes optimal belief MDP, until the equivalence is "broken" again by the additional information gain term. Does this mean the agent may be motivated to acquire too much information while compromising task performance? A simple manipulation shows that if we define the preference distribution as the exponentiated reward multiplied by a negative temperature parameter $\lambda$ $\tilde{P}(o) \propto \exp(\lambda \tilde{R}(o))$, then the EFE reward becomes proportional to a weighted combination of reward and information gain:
\begin{align}
\begin{split}
    \tilde{R}(s, a)
    &\propto \sum_{s'}P(s'|s, a)\sum_{o'}P(o'|s')\lambda \tilde{R}(o')\\
    &= \lambda \tilde{R}(s, a) \,,\\
    R^{EFE}(b, a) &\propto \sum_{s}b(s)\tilde{R}(s, a) + \frac{1}{\lambda}IG(b, a) \,,
\end{split}
\end{align}
where choosing a high $\lambda \rightarrow \infty$ corresponds to purely optimizing reward. However, this does mean that when $\lambda$ is not sufficiently high, in which case the objective highly resembles active sensing, the agent may be distracted. But whether this will be the case depends on the actual environment. Thus, similar to assumption \ref{assumption:reward_specification}, achieving Bayes optimal behavior requires setting the preference in such a way that the cumulative reward outweighs cumulative information.

Another perspective on the EFE objective is that the agent performs distribution matching as opposed to reward maximization \citep{da2023reward} where the agent additionally seeks out diverse states or observations. This can be seen from a rearrangement of the pragmatic-epistemic decomposition of the EFE objective:
\begin{align}
\begin{split}
    R^{EFE}(b, a) &= \mathbb{E}_{P(o'|b, a)}[\log\tilde{P}(o')] + \mathbb{E}_{P(o'|b, a)}[\mathbb{KL}[b'(s'|o', b, a) || b'(s'|b, a)]] \\
    &= -\underbrace{\mathbb{KL}[P(o'|b, a) || \tilde{P}(o')]}_{\text{Risk}} - \underbrace{\mathbb{E}_{P(s'|b, a)}[\mathbb{H}[P(o'|s')]]}_{\text{Ambiguity}} \,.
\end{split}
\end{align}
This is the well-known risk-ambiguity decomposition of EFE \citep{sajid2021active}, where the first term "risk" measures the KL divergence of the predicted observation distribution from the preferred observation distribution and the second term "ambiguity" measures the entropy of observations expected under predicted future states. 

In the MDP setting, with closed-loop belief updating, the objective reduces to the following due to no ambiguity, which is precisely the well-known distribution matching objective \citep{hafner2020action}:
\begin{align}
    R^{EFE}(s, a) &= -\mathbb{KL}[P(s'|s, a) || \tilde{P}(s')] \,,
\end{align}
This objective has been shown to enhance exploration and test-time adaptation in an RL setting \citep{lee2019efficient}.

Again, as shown in \citep{da2023reward}, distribution matching and reward maximization can be interpolated using a temperature parameter on the state preference $\tilde{P}(s) \propto \exp(\lambda \tilde{R}(s))$:
\begin{align}\label{eq:efe_mdp_obj}
\begin{split}
    R^{EFE}(s, a)
    &= \mathbb{E}_{P(s'|s, a)}[\log \tilde{P}(s')] + \mathbb{H}[P(s'|s, a)] \\
    &\propto \mathbb{E}_{P(s'|s, a)}[\tilde{R}(s')] + \frac{1}{\lambda}\mathbb{H}[P(s'|s, a)] \,.
\end{split}
\end{align}
In this setting, the temperature parameter $\lambda$ represents the allowed dispersion around the optimal behavior (or path of least action) specified by the first expected reward term in (\ref{eq:efe_mdp_obj}). Alternatively, it can be interpreted as the tightness of the (soft) constraint to abide by optimal behavior, following the constrained maximum entropy view of the free energy principle \citep{friston2023path}. 

Putting together these perspectives, it appears that the notion of "Bayes optimal" in the spirit of active inference (in closed-loop), as well as extensions of POMDPs, may not be restricted to the usual sense of Bayesian decision theory (i.e., maximizing utility; \citealt{howard1966information, raiffa2000applied, berger2013statistical}); it may also apply to that of Bayesian optimal design (i.e., maximizing information gain; \citealt{lindley1956measure, mackay1992information}) and principle of maximum caliber (i.e., maximizing coverage; \citealt{jaynes1980minimum}). 

\section{Conclusion}
In this paper, we study the theoretical connection between active inference and reinforcement learning and show that the epistemic value in the EFE objective of active inference can be seen as an approximation to the Bayes optimal RL policy in POMDPs, achieving a linear improvement in regret compared to a naive policy which doesn't take into account the value of information. 
The results also suggest that, from the perspective of RL, the specification of EFE needs to balance reward with information gain in the environment, via an appropriate temperature parameter ($\lambda$). Conversely, from the perspective of active inference, an EFE minimizing agent will pursue a Bayes optimal RL policy, under a suitable temperature parameter. 
This conclusion might have been anticipated by one reading of the complete class theorem \citep{wald1947essentially, brown1981complete}; namely, for any pair of reward function and choices, there exists some prior beliefs that render the choices Bayes optimal, in a decision theoretic sense \citep{berger2013statistical}. 

\section*{Acknowledgement}
The author would like to thank Alex Kiefer, Axel Constant, David Hyland, Karl Friston, Lance Da Costa, Peter Waade, Ryan Singh, Sanjeev Namjoshi, and Shohei Wakayama for helpful feedback.

\bibliography{ref.bib}

\begin{thebibliography}{62}
\providecommand{\natexlab}[1]{#1}
\providecommand{\url}[1]{\texttt{#1}}
\expandafter\ifx\csname urlstyle\endcsname\relax
  \providecommand{\doi}[1]{doi: #1}\else
  \providecommand{\doi}{doi: \begingroup \urlstyle{rm}\Url}\fi

\bibitem[Agarwal et~al.(2019)Agarwal, Jiang, Kakade, and Sun]{agarwal2019reinforcement}
A.~Agarwal, N.~Jiang, S.~M. Kakade, and W.~Sun.
\newblock Reinforcement learning: Theory and algorithms.
\newblock \emph{CS Dept., UW Seattle, Seattle, WA, USA, Tech. Rep}, 32:\penalty0 96, 2019.

\bibitem[Araya et~al.(2010)Araya, Buffet, Thomas, and Charpillet]{araya2010pomdp}
M.~Araya, O.~Buffet, V.~Thomas, and F.~Charpillet.
\newblock A pomdp extension with belief-dependent rewards.
\newblock \emph{Advances in neural information processing systems}, 23, 2010.

\bibitem[Barp et~al.(2022)Barp, Da~Costa, Fran{\c{c}}a, Friston, Girolami, Jordan, and Pavliotis]{barp2022geometric}
A.~Barp, L.~Da~Costa, G.~Fran{\c{c}}a, K.~Friston, M.~Girolami, M.~I. Jordan, and G.~A. Pavliotis.
\newblock Geometric methods for sampling, optimization, inference, and adaptive agents.
\newblock In \emph{Handbook of Statistics}, volume~46, pages 21--78. Elsevier, 2022.

\bibitem[Belghazi et~al.(2018)Belghazi, Baratin, Rajeshwar, Ozair, Bengio, Courville, and Hjelm]{belghazi2018mutual}
M.~I. Belghazi, A.~Baratin, S.~Rajeshwar, S.~Ozair, Y.~Bengio, A.~Courville, and D.~Hjelm.
\newblock Mutual information neural estimation.
\newblock In \emph{International conference on machine learning}, pages 531--540. PMLR, 2018.

\bibitem[Berger(2013)]{berger2013statistical}
J.~O. Berger.
\newblock \emph{Statistical decision theory and Bayesian analysis}.
\newblock Springer Science \& Business Media, 2013.

\bibitem[Blei et~al.(2017)Blei, Kucukelbir, and McAuliffe]{blei2017variational}
D.~M. Blei, A.~Kucukelbir, and J.~D. McAuliffe.
\newblock Variational inference: A review for statisticians.
\newblock \emph{Journal of the American statistical Association}, 112\penalty0 (518):\penalty0 859--877, 2017.

\bibitem[Brown(1981)]{brown1981complete}
L.~D. Brown.
\newblock A complete class theorem for statistical problems with finite sample spaces.
\newblock \emph{The Annals of Statistics}, pages 1289--1300, 1981.

\bibitem[Cassandra et~al.(1996)Cassandra, Kaelbling, and Kurien]{cassandra1996acting}
A.~R. Cassandra, L.~P. Kaelbling, and J.~A. Kurien.
\newblock Acting under uncertainty: Discrete bayesian models for mobile-robot navigation.
\newblock In \emph{Proceedings of IEEE/RSJ International Conference on Intelligent Robots and Systems. IROS'96}, volume~2, pages 963--972. IEEE, 1996.

\bibitem[Chakraborty et~al.(2023)Chakraborty, Bedi, Koppel, Wang, Huang, and Manocha]{chakraborty2023steering}
S.~Chakraborty, A.~S. Bedi, A.~Koppel, M.~Wang, F.~Huang, and D.~Manocha.
\newblock Steering: Stein information directed exploration for model-based reinforcement learning.
\newblock \emph{arXiv preprint arXiv:2301.12038}, 2023.

\bibitem[Champion et~al.(2024)Champion, Bowman, Markovi{\'c}, and Grze{\'s}]{champion2024reframing}
T.~Champion, H.~Bowman, D.~Markovi{\'c}, and M.~Grze{\'s}.
\newblock Reframing the expected free energy: Four formulations and a unification.
\newblock \emph{arXiv preprint arXiv:2402.14460}, 2024.

\bibitem[Da~Costa et~al.(2020)Da~Costa, Parr, Sajid, Veselic, Neacsu, and Friston]{da2020active}
L.~Da~Costa, T.~Parr, N.~Sajid, S.~Veselic, V.~Neacsu, and K.~Friston.
\newblock Active inference on discrete state-spaces: A synthesis.
\newblock \emph{Journal of Mathematical Psychology}, 99:\penalty0 102447, 2020.

\bibitem[Da~Costa et~al.(2023)Da~Costa, Sajid, Parr, Friston, and Smith]{da2023reward}
L.~Da~Costa, N.~Sajid, T.~Parr, K.~Friston, and R.~Smith.
\newblock Reward maximization through discrete active inference.
\newblock \emph{Neural Computation}, 35\penalty0 (5):\penalty0 807--852, 2023.

\bibitem[Da~Costa et~al.(2024)Da~Costa, Tenka, Zhao, and Sajid]{da2024active}
L.~Da~Costa, S.~Tenka, D.~Zhao, and N.~Sajid.
\newblock Active inference as a model of agency.
\newblock \emph{arXiv preprint arXiv:2401.12917}, 2024.

\bibitem[Doshi-Velez and Konidaris(2016)]{doshi2016hidden}
F.~Doshi-Velez and G.~Konidaris.
\newblock Hidden parameter markov decision processes: A semiparametric regression approach for discovering latent task parametrizations.
\newblock In \emph{IJCAI: proceedings of the conference}, volume 2016, page 1432. NIH Public Access, 2016.

\bibitem[Duan et~al.(2016)Duan, Schulman, Chen, Bartlett, Sutskever, and Abbeel]{duan2016rl}
Y.~Duan, J.~Schulman, X.~Chen, P.~L. Bartlett, I.~Sutskever, and P.~Abbeel.
\newblock $rl^2$: Fast reinforcement learning via slow reinforcement learning.
\newblock \emph{arXiv preprint arXiv:1611.02779}, 2016.

\bibitem[Duff(2002)]{duff2002optimal}
M.~O. Duff.
\newblock \emph{Optimal Learning: Computational procedures for Bayes-adaptive Markov decision processes}.
\newblock University of Massachusetts Amherst, 2002.

\bibitem[Engstr{\"o}m et~al.(2024)Engstr{\"o}m, Wei, McDonald, Garcia, O'Kelly, and Johnson]{engstrom2024resolving}
J.~Engstr{\"o}m, R.~Wei, A.~D. McDonald, A.~Garcia, M.~O'Kelly, and L.~Johnson.
\newblock Resolving uncertainty on the fly: modeling adaptive driving behavior as active inference.
\newblock \emph{Frontiers in neurorobotics}, 18:\penalty0 1341750, 2024.

\bibitem[Flaspohler et~al.(2020)Flaspohler, Roy, and Fisher~III]{flaspohler2020belief}
G.~Flaspohler, N.~A. Roy, and J.~W. Fisher~III.
\newblock Belief-dependent macro-action discovery in pomdps using the value of information.
\newblock \emph{Advances in Neural Information Processing Systems}, 33:\penalty0 11108--11118, 2020.

\bibitem[Friston(2010)]{friston2010free}
K.~Friston.
\newblock The free-energy principle: a unified brain theory?
\newblock \emph{Nature reviews neuroscience}, 11\penalty0 (2):\penalty0 127--138, 2010.

\bibitem[Friston et~al.(2015)Friston, Rigoli, Ognibene, Mathys, Fitzgerald, and Pezzulo]{friston2015active}
K.~Friston, F.~Rigoli, D.~Ognibene, C.~Mathys, T.~Fitzgerald, and G.~Pezzulo.
\newblock Active inference and epistemic value.
\newblock \emph{Cognitive neuroscience}, 6\penalty0 (4):\penalty0 187--214, 2015.

\bibitem[Friston et~al.(2017)Friston, FitzGerald, Rigoli, Schwartenbeck, and Pezzulo]{friston2017active}
K.~Friston, T.~FitzGerald, F.~Rigoli, P.~Schwartenbeck, and G.~Pezzulo.
\newblock Active inference: a process theory.
\newblock \emph{Neural computation}, 29\penalty0 (1):\penalty0 1--49, 2017.

\bibitem[Friston et~al.(2021)Friston, Da~Costa, Hafner, Hesp, and Parr]{friston2021sophisticated}
K.~Friston, L.~Da~Costa, D.~Hafner, C.~Hesp, and T.~Parr.
\newblock Sophisticated inference.
\newblock \emph{Neural Computation}, 33\penalty0 (3):\penalty0 713--763, 2021.

\bibitem[Friston et~al.(2023{\natexlab{a}})Friston, Da~Costa, Sajid, Heins, Ueltzh{\"o}ffer, Pavliotis, and Parr]{friston2023free}
K.~Friston, L.~Da~Costa, N.~Sajid, C.~Heins, K.~Ueltzh{\"o}ffer, G.~A. Pavliotis, and T.~Parr.
\newblock The free energy principle made simpler but not too simple.
\newblock \emph{Physics Reports}, 1024:\penalty0 1--29, 2023{\natexlab{a}}.

\bibitem[Friston et~al.(2023{\natexlab{b}})Friston, Da~Costa, Sakthivadivel, Heins, Pavliotis, Ramstead, and Parr]{friston2023path}
K.~Friston, L.~Da~Costa, D.~A. Sakthivadivel, C.~Heins, G.~A. Pavliotis, M.~Ramstead, and T.~Parr.
\newblock Path integrals, particular kinds, and strange things.
\newblock \emph{Physics of Life Reviews}, 2023{\natexlab{b}}.

\bibitem[Gottwald and Braun(2020)]{gottwald2020two}
S.~Gottwald and D.~A. Braun.
\newblock The two kinds of free energy and the bayesian revolution.
\newblock \emph{PLoS computational biology}, 16\penalty0 (12):\penalty0 e1008420, 2020.

\bibitem[Hafner et~al.(2020)Hafner, Ortega, Ba, Parr, Friston, and Heess]{hafner2020action}
D.~Hafner, P.~A. Ortega, J.~Ba, T.~Parr, K.~Friston, and N.~Heess.
\newblock Action and perception as divergence minimization.
\newblock \emph{arXiv preprint arXiv:2009.01791}, 2020.

\bibitem[Hao and Lattimore(2022)]{hao2022regret}
B.~Hao and T.~Lattimore.
\newblock Regret bounds for information-directed reinforcement learning.
\newblock \emph{Advances in neural information processing systems}, 35:\penalty0 28575--28587, 2022.

\bibitem[Hauskrecht(2000)]{hauskrecht2000value}
M.~Hauskrecht.
\newblock Value-function approximations for partially observable markov decision processes.
\newblock \emph{Journal of artificial intelligence research}, 13:\penalty0 33--94, 2000.

\bibitem[Howard(1966)]{howard1966information}
R.~A. Howard.
\newblock Information value theory.
\newblock \emph{IEEE Transactions on systems science and cybernetics}, 2\penalty0 (1):\penalty0 22--26, 1966.

\bibitem[Jaynes(1980)]{jaynes1980minimum}
E.~T. Jaynes.
\newblock The minimum entropy production principle.
\newblock \emph{Annual Review of Physical Chemistry}, 31\penalty0 (1):\penalty0 579--601, 1980.

\bibitem[Kaelbling et~al.(1998)Kaelbling, Littman, and Cassandra]{kaelbling1998planning}
L.~P. Kaelbling, M.~L. Littman, and A.~R. Cassandra.
\newblock Planning and acting in partially observable stochastic domains.
\newblock \emph{Artificial intelligence}, 101\penalty0 (1-2):\penalty0 99--134, 1998.

\bibitem[Kakade and Langford(2002)]{kakade2002approximately}
S.~Kakade and J.~Langford.
\newblock Approximately optimal approximate reinforcement learning.
\newblock In \emph{Proceedings of the Nineteenth International Conference on Machine Learning}, pages 267--274, 2002.

\bibitem[Kearns and Singh(2002)]{kearns2002near}
M.~Kearns and S.~Singh.
\newblock Near-optimal reinforcement learning in polynomial time.
\newblock \emph{Machine learning}, 49:\penalty0 209--232, 2002.

\bibitem[Koudahl et~al.(2021)Koudahl, Kouw, and de~Vries]{koudahl2021epistemics}
M.~T. Koudahl, W.~M. Kouw, and B.~de~Vries.
\newblock On epistemics in expected free energy for linear gaussian state space models.
\newblock \emph{Entropy}, 23\penalty0 (12):\penalty0 1565, 2021.

\bibitem[Lanillos et~al.(2021)Lanillos, Meo, Pezzato, Meera, Baioumy, Ohata, Tschantz, Millidge, Wisse, Buckley, et~al.]{lanillos2021active}
P.~Lanillos, C.~Meo, C.~Pezzato, A.~A. Meera, M.~Baioumy, W.~Ohata, A.~Tschantz, B.~Millidge, M.~Wisse, C.~L. Buckley, et~al.
\newblock Active inference in robotics and artificial agents: Survey and challenges.
\newblock \emph{arXiv preprint arXiv:2112.01871}, 2021.

\bibitem[Lee et~al.(2019)Lee, Eysenbach, Parisotto, Xing, Levine, and Salakhutdinov]{lee2019efficient}
L.~Lee, B.~Eysenbach, E.~Parisotto, E.~Xing, S.~Levine, and R.~Salakhutdinov.
\newblock Efficient exploration via state marginal matching.
\newblock \emph{arXiv preprint arXiv:1906.05274}, 2019.

\bibitem[Lindley(1956)]{lindley1956measure}
D.~V. Lindley.
\newblock On a measure of the information provided by an experiment.
\newblock \emph{The Annals of Mathematical Statistics}, 27\penalty0 (4):\penalty0 986--1005, 1956.

\bibitem[Littman et~al.(1995)Littman, Cassandra, and Kaelbling]{littman1995learning}
M.~L. Littman, A.~R. Cassandra, and L.~P. Kaelbling.
\newblock Learning policies for partially observable environments: Scaling up.
\newblock In \emph{Machine Learning Proceedings 1995}, pages 362--370. Elsevier, 1995.

\bibitem[Liu et~al.(2022)Liu, Chung, Szepesv{\'a}ri, and Jin]{liu2022partially}
Q.~Liu, A.~Chung, C.~Szepesv{\'a}ri, and C.~Jin.
\newblock When is partially observable reinforcement learning not scary?
\newblock In \emph{Conference on Learning Theory}, pages 5175--5220. PMLR, 2022.

\bibitem[Lu et~al.(2023)Lu, Van~Roy, Dwaracherla, Ibrahimi, Osband, Wen, et~al.]{lu2023reinforcement}
X.~Lu, B.~Van~Roy, V.~Dwaracherla, M.~Ibrahimi, I.~Osband, Z.~Wen, et~al.
\newblock Reinforcement learning, bit by bit.
\newblock \emph{Foundations and Trends{\textregistered} in Machine Learning}, 16\penalty0 (6):\penalty0 733--865, 2023.

\bibitem[MacKay(1992)]{mackay1992information}
D.~J. MacKay.
\newblock Information-based objective functions for active data selection.
\newblock \emph{Neural computation}, 4\penalty0 (4):\penalty0 590--604, 1992.

\bibitem[Malekzadeh and Plataniotis(2022)]{malekzadeh2022active}
P.~Malekzadeh and K.~N. Plataniotis.
\newblock Active inference and reinforcement learning: A unified infer-ence on continuous state and action spaces under partially observ-ability.
\newblock \emph{arXiv preprint arXiv:2212.07946}, 2022.

\bibitem[Mazzaglia et~al.(2022)Mazzaglia, Verbelen, Catal, and Dhoedt]{mazzaglia2022free}
P.~Mazzaglia, T.~Verbelen, O.~Catal, and B.~Dhoedt.
\newblock The free energy principle for perception and action: A deep learning perspective.
\newblock \emph{Entropy}, 24\penalty0 (2):\penalty0 301, 2022.

\bibitem[Millidge(2020)]{millidge2020deep}
B.~Millidge.
\newblock Deep active inference as variational policy gradients.
\newblock \emph{Journal of Mathematical Psychology}, 96:\penalty0 102348, 2020.

\bibitem[Millidge et~al.(2020)Millidge, Tschantz, Seth, and Buckley]{millidge2020relationship}
B.~Millidge, A.~Tschantz, A.~K. Seth, and C.~L. Buckley.
\newblock On the relationship between active inference and control as inference.
\newblock In \emph{Active Inference: First International Workshop, IWAI 2020, Co-located with ECML/PKDD 2020, Ghent, Belgium, September 14, 2020, Proceedings 1}, pages 3--11. Springer, 2020.

\bibitem[Parr et~al.(2022)Parr, Pezzulo, and Friston]{parr2022active}
T.~Parr, G.~Pezzulo, and K.~J. Friston.
\newblock \emph{Active inference: the free energy principle in mind, brain, and behavior}.
\newblock MIT Press, 2022.

\bibitem[Raiffa and Schlaifer(2000)]{raiffa2000applied}
H.~Raiffa and R.~Schlaifer.
\newblock \emph{Applied statistical decision theory}, volume~78.
\newblock John Wiley \& Sons, 2000.

\bibitem[Ross et~al.(2011)Ross, Gordon, and Bagnell]{ross2011reduction}
S.~Ross, G.~Gordon, and D.~Bagnell.
\newblock A reduction of imitation learning and structured prediction to no-regret online learning.
\newblock In \emph{Proceedings of the fourteenth international conference on artificial intelligence and statistics}, pages 627--635. JMLR Workshop and Conference Proceedings, 2011.

\bibitem[Roy et~al.(2005)Roy, Gordon, and Thrun]{roy2005finding}
N.~Roy, G.~Gordon, and S.~Thrun.
\newblock Finding approximate pomdp solutions through belief compression.
\newblock \emph{Journal of artificial intelligence research}, 23:\penalty0 1--40, 2005.

\bibitem[Russo and Van~Roy(2018)]{russo2018learning}
D.~Russo and B.~Van~Roy.
\newblock Learning to optimize via information-directed sampling.
\newblock \emph{Operations Research}, 66\penalty0 (1):\penalty0 230--252, 2018.

\bibitem[Sajid et~al.(2021)Sajid, Ball, Parr, and Friston]{sajid2021active}
N.~Sajid, P.~J. Ball, T.~Parr, and K.~J. Friston.
\newblock Active inference: demystified and compared.
\newblock \emph{Neural computation}, 33\penalty0 (3):\penalty0 674--712, 2021.

\bibitem[Schw{\"o}bel et~al.(2018)Schw{\"o}bel, Kiebel, and Markovi{\'c}]{schwobel2018active}
S.~Schw{\"o}bel, S.~Kiebel, and D.~Markovi{\'c}.
\newblock Active inference, belief propagation, and the bethe approximation.
\newblock \emph{Neural computation}, 30\penalty0 (9):\penalty0 2530--2567, 2018.

\bibitem[Smith et~al.(2021)Smith, Badcock, and Friston]{smith2021recent}
R.~Smith, P.~Badcock, and K.~J. Friston.
\newblock Recent advances in the application of predictive coding and active inference models within clinical neuroscience.
\newblock \emph{Psychiatry and Clinical Neurosciences}, 75\penalty0 (1):\penalty0 3--13, 2021.

\bibitem[Sutton and Barto(2018)]{sutton2018reinforcement}
R.~S. Sutton and A.~G. Barto.
\newblock \emph{Reinforcement learning: An introduction}.
\newblock MIT press, 2018.

\bibitem[Tomczak and Welling(2018)]{tomczak2018vae}
J.~Tomczak and M.~Welling.
\newblock Vae with a vampprior.
\newblock In \emph{International conference on artificial intelligence and statistics}, pages 1214--1223. PMLR, 2018.

\bibitem[Tschantz et~al.(2020)Tschantz, Baltieri, Seth, and Buckley]{tschantz2020scaling}
A.~Tschantz, M.~Baltieri, A.~K. Seth, and C.~L. Buckley.
\newblock Scaling active inference.
\newblock In \emph{2020 international joint conference on neural networks (ijcnn)}, pages 1--8. IEEE, 2020.

\bibitem[Vemula et~al.(2023)Vemula, Song, Singh, Bagnell, and Choudhury]{vemula2023virtues}
A.~Vemula, Y.~Song, A.~Singh, D.~Bagnell, and S.~Choudhury.
\newblock The virtues of laziness in model-based rl: A unified objective and algorithms.
\newblock In \emph{International Conference on Machine Learning}, pages 34978--35005. PMLR, 2023.

\bibitem[Wald(1947)]{wald1947essentially}
A.~Wald.
\newblock An essentially complete class of admissible decision functions.
\newblock \emph{The Annals of Mathematical Statistics}, pages 549--555, 1947.

\bibitem[Watson et~al.(2020)Watson, Imohiosen, and Peters]{watson2020active}
J.~Watson, A.~Imohiosen, and J.~Peters.
\newblock Active inference or control as inference? a unifying view.
\newblock \emph{arXiv preprint arXiv:2010.00262}, 2020.

\bibitem[Wei et~al.(2023)Wei, Zeng, Li, Garcia, McDonald, and Hong]{wei2023bayesian}
R.~Wei, S.~Zeng, C.~Li, A.~Garcia, A.~D. McDonald, and M.~Hong.
\newblock A bayesian approach to robust inverse reinforcement learning.
\newblock In \emph{Conference on Robot Learning}, pages 2304--2322. PMLR, 2023.

\bibitem[Winn et~al.(2005)Winn, Bishop, and Jaakkola]{winn2005variational}
J.~Winn, C.~M. Bishop, and T.~Jaakkola.
\newblock Variational message passing.
\newblock \emph{Journal of Machine Learning Research}, 6\penalty0 (4), 2005.

\bibitem[Zintgraf et~al.(2019)Zintgraf, Shiarlis, Igl, Schulze, Gal, Hofmann, and Whiteson]{zintgraf2019varibad}
L.~Zintgraf, K.~Shiarlis, M.~Igl, S.~Schulze, Y.~Gal, K.~Hofmann, and S.~Whiteson.
\newblock Varibad: A very good method for bayes-adaptive deep rl via meta-learning.
\newblock \emph{arXiv preprint arXiv:1910.08348}, 2019.

\end{thebibliography}

\newpage
\appendix\label{sec:appx}

\section{Appendix}
\subsection{Proofs for Section \ref{sec:act_inf}}
\paragraph{Derivation of $Q(s_{1:T}|a_{0:T-1})$ in (\ref{eq:efe_q_def}) from variational inference} We aim to obtain a predictive distribution over future states $s_{1:T}$ given an action sequence $a_{0:T-1}$ using variational inference. Typically, active inference assumes a mean-field factorization of the variational distribution $Q(s_{1:T}|a_{0:T-1}) = \prod_{t=1}^{T}Q(s_{t}|a_{0:T-1})$. Since there is no observation and thus no likelihood term, the variational free energy $\mathcal{F}$ can be written as:
\begin{align}
\begin{split}
    \mathcal{F}(Q) &= \mathbb{E}_{Q(s_{1:T}|a_{0:T-1})}[\log Q(s_{1:T}|a_{0:T-1}) - \log P(s_{1:T}|a_{0:T-1})] \\
    &= \mathbb{E}_{Q(s_{1:T}|a_{0:T-1})}\left[\sum_{t=1}^{T}\left(\log Q(s_{t}|a_{0:T-1}) - \log P(s_{t}|s_{t-1}, a_{t-1})\right)\right] \\
    &= \sum_{t=1}^{T}\mathbb{E}_{Q(s_{t-1:t}|a_{0:T-1})}[\log Q(s_{t}|a_{0:T-1}) - \log P(s_{t}|s_{t-1}, a_{t-1})] \,.
\end{split}
\end{align}

From \citep{winn2005variational}, we know the optimal variational distribution has the form:
\begin{align}
\begin{split}
    Q(s_{t}|a_{0:T-1}) &\propto \exp(\mathbb{E}_{Q(s_{t-1}|a_{0:T-1})}[\log P(s_{t}|s_{t-1}, a_{t-1})]) \\
    &\approx \exp(\log \mathbb{E}_{Q(s_{t-1}|a_{0:T-1})}[P(s_{t}| s_{t-1}, a_{t-1})]) \\
    &= \sum_{s_{t-1}}P(s_{t}| s_{t-1}, a_{t-1})Q(s_{t-1}|a_{0:T-1}) \\
    &:= Q(s_{t}|Q_{t-1}, a_{t-1}) \,.
\end{split}
\end{align}
which recovers the definition in (\ref{eq:efe_q_def}). The approximation in the second line is due to Jensen's inequality and does not significantly affect our results, because we know from the variational inference literature that the optimal variational distribution must be equal to that of exact inference, which is given by the last line. This also matches the implementation in Pymdp\footnote{\url{https://github.com/infer-actively/pymdp}}, which is one of the main software repositories for active inference.

\paragraph{Active inference and QMDP} It is crucial to have a precise definition of the distributions $Q(s_{0:T}|a_{0:T-1})$ and $Q(o_{0:T}, s_{0:T}|a_{0:T-1})$. In the main text, we have specified these as the product of marginal distributions over states and observations. Here, we briefly study the consequences of defining these as the joint distributions:
\begin{align}
\begin{split}
    Q(s_{0:T}|a_{0:T-1}) &= b(s_{0})\prod_{t=1}^{T}P(s_{t}|s_{t-1}, a_{t-1}) \,, \\
    Q(o_{0:T}, s_{0:T}|a_{0:T-1}) &= b(s_{0})P(o_{0}|s_{0})\prod_{t=1}^{T}P(s_{t}|s_{t-1}, a_{t-1})P(o_{t}|s_{t}) \,. \\
\end{split}
\end{align}

We start by factorizing the full EFE objective in (\ref{eq:efe_full}) as:
\begin{align}
\begin{split}
    &EFE(a_{0:T-1}) \\
    &= \mathbb{E}_{Q(o_{1:T}, s_{1:T}|a_{0:T-1})}[\log Q(s_{1:T}|a_{0:T-1}) - \log \Tilde{P}(o_{1:T}, s_{1:T})] \\
    &= \mathbb{E}_{Q(o_{1:T}, s_{1:T}|a_{0:T-1})}\left[\sum_{t=1}^{T}\left(\log P(s_{t}|s_{t-1}, a_{t-1}) - \log \Tilde{P}(o_{t}, s_{t})\right)\right] \\
    &= \mathbb{E}_{b(s_{0})P(s_{1}|s_{0}, a_{0})P(o_{1}|s_{1})}\Bigg[\log P(s_{1}|s_{0}, a_{0}) - \log \tilde{P}(o_{1}, s_{1}) \\
    &\quad + \mathbb{E}_{Q(o_{2:T}, s_{2:T}|s_{0:1}, a_{1:T-1})}\left[\sum_{t=2}^{T}\left(\log P(s_{t}|s_{t-1}, a_{t-1}) - \log \Tilde{P}(o_{t}, s_{t})\right)\right]\Bigg] \\
    &= \mathbb{E}_{b(s_{0})P(s_{1}|s_{0}, a_{0})P(o_{1}|s_{1})}\left[\log P(s_{1}|s_{0}, a_{0}) - \log \tilde{P}(o_{1}, s_{1}) + EFE(a_{1:T-1})\right] \\
    &= \mathbb{E}_{b(s_0)}\left[\mathbb{E}_{P(s_{1}|s_{0}, a_{0})P(o_{1}|s_{1})}[\log P(s_{1}|s_{0}, a_{0}) - \log \tilde{P}(o_{1}, s_{1})] + \mathbb{E}_{P(s_{1}|s_{0}, a_{0})}[EFE(a_{1:T-1})]\right] \,.
\end{split}
\end{align}

This allows us to write down a recursive equation:
\begin{align}
\begin{split}
    Q(s_{t}, a_{t}) &= \underbrace{\mathbb{E}_{P(s_{t+1}|s_{t}, a_{t})P(o_{t+1}|s_{t+1})}[\log P(s_{t+1}|s_{t}, a_{t}) - \log \tilde{P}(o_{t+1}, s_{t+1})]}_{R(s_{t}, a_{t})} + \mathbb{E}_{P(s_{t+1}|s_{t}, a_{t})}[V(s_{t+1})] \,, \\
    V(s_{t}) &= \max_{a}Q(s_{t}, a_{t}) \,,
\end{split}
\end{align}
and
\begin{align}
    EFE(a_{0:T-1}) = \mathbb{E}_{b(s_{0})}[Q(s_{0}, a_{0})] \,.
\end{align}

This corresponds to what's known as the QMDP approximation in the POMDP literature \citep{littman1995learning}, which is known to overestimate the value of a belief by planning under the implicit assumption that future states are observed \citep{hauskrecht2000value}.

\paragraph{EFE bound and choice of preference} Despite being the most popular choice of EFE, the pragmatic-epistemic value decomposition (\ref{eq:efe_prag_epis}) is actually a bound on the full EFE defined in (\ref{eq:efe_full}). To show this, let's consider a single time step since both formulations can be decomposed across time steps. Recall that the pragmatic-epistemic decomposition assumes the following factorization of $\tilde{P}(o, s) = \tilde{P}(o)\tilde{P}(s|o)$. The full EFE can be written as:
\begin{align}
\begin{split}
    &EFE_{t}(a_{0:T-1}) = \mathbb{E}_{Q(o_{t}, s_{t}|a_{0:T-1})}[\log Q(s_{t}|a_{0:T-1}) - \log \tilde{P}(o_{t}, s_{t})]\\
    &= -\mathbb{E}_{Q(o_{t}|a_{0:T-1})}[\log \tilde{P}(o_{t})] - \mathbb{E}_{Q(o_{t}, s_{t}|a_{0:T-1})}[\log \tilde{P}(s_{t}|o_{t})] + \mathbb{E}_{Q(s_{t}|a_{0:T-1})}[Q(s_{t}|a_{0:T-1})] \\
    &= -\mathbb{E}_{Q(o_{t}|a_{0:T-1})}[\log \tilde{P}(o_{t})] + \mathbb{E}_{Q(o_{t}, s_{t}|a_{0:T-1})}[Q(s_{t}|o_{t}, a_{0:T-1})] - \mathbb{E}_{Q(o_{t}, s_{t}|a_{0:T-1})}[\log \tilde{P}(s_{t}|o_{t})] \\
    &\quad + \mathbb{E}_{Q(s_{t}|a_{0:T-1})}[Q(s_{t}|a_{0:T-1})] - \mathbb{E}_{Q(o_{t}, s_{t}|a_{0:T-1})}[Q(s_{t}|o_{t}, a_{0:T-1})] \\
    &= -\mathbb{E}_{Q(o_{t}|a_{0:T-1})}[\log \tilde{P}(o_{t})] + \mathbb{E}_{Q(o_{t}|a_{0:T-1})}\mathbb{KL}[Q(s_{t}|o_{t}, a_{0:T-1}) || \tilde{P}(s_{t}|o_{t})] \\
    &\quad - \mathbb{E}_{Q(o_{t}|a_{0:T-1})}[\mathbb{KL}[Q(s_{t}|o_{t}, a_{0:T-1}) || Q(s_{t}|a_{0:T-1})]] \\
    &\geq -\mathbb{E}_{Q(o_{t}|a_{0:T-1})}[\log \tilde{P}(o_{t})] - \mathbb{E}_{Q(o_{t}|a_{0:T-1})}[\mathbb{KL}[Q(s_{t}|o_{t}, a_{0:T-1}) || Q(s_{t}|a_{0:T-1})]] \,.
\end{split}
\end{align}

Thus, to keep the bound tight, we could set $\tilde{P}(s|o)$ as:
\begin{align}
\begin{split}
    \tilde{P}^{*}(s|o) &= \arg\min_{\tilde{P}(s|o)} \mathbb{E}_{Q(o_{t}|a_{0:T-1})}\mathbb{KL}[Q(s_{t}|o_{t}, a_{0:T-1}) || \tilde{P}(s_{t}|o_{t})] \\
    &\approx \arg\min_{\tilde{P}(s|o)} \mathbb{E}_{Q(o_{t}|a_{0:T-1})}\mathbb{KL}[\tilde{P}(s_{t}|o_{t}) || Q(s_{t}|o_{t}, a_{0:T-1})] \\
    &\propto \exp\left(\mathbb{E}_{Q(o_{t}|a_{0:T-1})}[\log Q(s_{t}|o_{t}, a_{0:T-1})]\right) \,,
\end{split}
\end{align}
where the approximation in the second line assumes the forward and reverse KL divergences have similar solutions. The result on the last line is sometimes referred to as the aggregate posterior \citep{tomczak2018vae}. However, since the aggregate posterior depends on the action sequence evaluated, the tightest bound is achieved by an aggregate posterior that updates during each EFE optimization step to ensure that the final aggregate posterior is evaluated under the \emph{optimal} action sequence.

\begin{proposition}
(Active inference policy; restate of proposition \ref{prop:act_inf_policy}) The EFE achieved by the optimal action sequence can be equivalently achieved by a time-indexed belief-action policy $\pi(a_{t}|Q_{t})$.
\end{proposition}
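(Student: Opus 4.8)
\begin{proofsketch}
The plan is to exhibit the claimed equivalence through the additive recursive structure of EFE together with the fact that the belief dynamics it induces is deterministic and Markovian, after which a standard backward-induction argument constructs the required policy. This gives an alternative to the belief-MDP characterization used in the main text.

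First I would invoke the recursion derived in Section~\ref{sec:act_inf_belief_mdp}, which expresses
\[
EFE(a_{0:T-1}, Q_0) = \ell(Q_0, a_0) + EFE(a_{1:T-1}, Q_1),
\]
where $\ell(Q_t, a_t)$ is the per-step EFE (the negative of $R^{EFE}$) and $Q_{t+1} = b'(a_t, Q_t)$ is the deterministic open-loop transition of (\ref{eq:efe_belief_mdp_dynamics}). The two structural facts I rely on are that $\ell(Q_t, a_t)$ depends only on the current marginal and the chosen action, and that $Q_{t+1}$ is a Markovian, \emph{deterministic} function of $(Q_t, a_t)$ alone; both follow directly from the definitions in (\ref{eq:efe_belief_mdp}). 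Next I would define value functions by backward induction with terminal condition $V_T \equiv 0$ and
\[
V_t(Q) = \min_{a} \left\{ \ell(Q, a) + V_{t+1}(b'(a, Q)) \right\},
\]
together with the greedy policy $\pi(a_t \mid Q_t) = \delta\!\left(a_t - \arg\min_{a}\{\ell(Q_t, a) + V_{t+1}(b'(a, Q_t))\}\right)$. A routine induction on $t$ from $T$ down to $0$ then yields $V_t(Q) = \min_{a_{t:T-1}} EFE(a_{t:T-1}, Q)$, so that $V_0(Q_0)$ is exactly the EFE attained by the optimal action sequence.

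Finally I would verify that this time-indexed policy realizes $V_0(Q_0)$. Because the belief transition is deterministic, rolling out $\pi$ from $Q_0$ produces a single belief-action trajectory whose cumulative per-step EFE equals $V_0(Q_0)$ by the greedy construction; conversely, any action sequence is realized by the policy that plays it on the beliefs it visits, so no policy can beat the optimal plan, and the two optima coincide. The one point needing care is that conditioning on $Q_t$ alone, rather than on the full prefix $a_{0:t-1}$, loses no information for the continuation value, which is precisely the Markov property established above since $EFE(a_{t:T-1}, Q_t)$ depends on the prefix only through $Q_t$. I expect this sufficiency check to be the only genuinely conceptual step; the determinism of the dynamics collapses the usual plan-versus-policy gap, leaving the remaining verifications as mechanical consequences of the additive, Markovian structure.
\end{proofsketch}
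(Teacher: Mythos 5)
Your proposal is correct and follows essentially the same route as the paper: the paper's appendix proof is likewise a backward-induction (Bellman-optimality) argument that constructs a greedy delta policy $\pi(a_t|Q_t) = \delta(a_t - \arg\min_{\tilde a_t} EFE(\tilde a_t, Q_t))$ at each stage, resting on the same recursive decomposition of EFE over the deterministic, Markovian marginal dynamics that you invoke from Section~\ref{sec:act_inf_belief_mdp}. Your explicit value functions $V_t$ and the two-directional plan-versus-policy verification are a slightly more formalized write-up of the same idea rather than a different argument.
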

\begin{proof}
While the proof in the main text is given by characterizing the EFE objective as a belief MDP, we give an alternative proof here based on Bellman optimality for the full EFE objective in (\ref{eq:efe_full}) starting with the base case:
\begin{align}
    EFE(a_{T-1}, Q_{T-1}) = \mathbb{E}_{Q(o_{T}, s_{T}|Q_{T-1}, a_{T-1})}[\log Q(s_{T}|Q_{T-1}, a_{T-1}) - \log \tilde{P}(o_{T}, s_{T})] \,.
\end{align}
It is easy to see that
\begin{align}
    \min_{a_{T-1}}EFE(a_{T-1}, Q_{T-1}) = \max_{\pi_{T-1}}\sum_{a_{T-1}}\pi(a_{T-1}|Q_{T-1})EFE(a_{T-1}, Q_{T-1}) \,,
\end{align}
where the optimal policy is $\pi^{*}_{T-1}(a_{T-1}|Q_{T-1}) = \delta(a_{T-1} - \arg\min_{\tilde{a}_{T-1}}EFE(\tilde{a}_{T-1}, Q_{T-1}))$.

Applying the identity recursively, we have:
\begin{align}
\begin{split}
    &\min_{\pi_{t}}\mathbb{E}_{\pi(a_{t}|Q_{t})}[EFE(a_{t}, Q_{t})] = \min_{\pi_{t}}\mathbb{E}_{\pi(a_{t}|Q_{t})}\bigg\{\\
    &\mathbb{E}_{Q(o_{t+1}, s_{t+1}|Q_{t}, a_{t})}[\log Q(s_{t+1}|Q_{t}, a_{t}) - \log \tilde{P}(o_{t+1}, s_{t+1})] + \mathbb{E}_{\pi^{*}(a_{t}|Q_{t})}[EFE(a_{t+1}, Q_{t+1})]\bigg\} \,.
\end{split}
\end{align}
The optimal policy at each step can be obtained by $\pi(a_{t}|Q_{t}) = \delta(a_{t} - \arg\min_{\tilde{a}_{t}}EFE(\tilde{a}_{t}, Q_{t}))$. 
\end{proof}

\begin{proposition}
(Restate of proposition \ref{eq:prop_efe_reward_concavity}) The EFE reward function as defined in (\ref{eq:efe_belief_mdp_rwd}) is concave in the belief.
\end{proposition}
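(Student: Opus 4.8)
The plan is to decompose the EFE reward into its pragmatic and information-gain parts and establish concavity of each piece separately. By (\ref{eq:prag_value_linear}), the pragmatic component $\tilde{R}(b,a) = \sum_{s}b(s)\tilde{R}(s,a)$ is linear, hence both concave and convex, in $b$. Since the sum of a linear function and a concave function is concave, it suffices to show that the information gain $IG(b,a)$ is concave in $b$.

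First I would recast $IG(b,a)$ as a mutual information in order to isolate the concave piece. Introducing the joint predictive $P(s',o'|b,a) = P(o'|s')P(s'|b,a)$ with prior $P(s'|b,a) = \sum_{s}P(s'|s,a)b(s)$ and posterior $b(s'|o',b,a) = P(s',o'|b,a)/P(o'|b,a)$, the defining expected KL divergence collapses to a mutual information and then, by the standard identity $I(s';o') = \mathbb{H}[o'] - \mathbb{H}[o'|s']$, to the entropy difference asserted in the sketch:
\begin{align}
    IG(b,a) = \sum_{s',o'} P(s',o'|b,a)\log\frac{P(s',o'|b,a)}{P(s'|b,a)P(o'|b,a)} = \mathbb{H}[P(o'|b,a)] - \mathbb{E}_{P(s'|b,a)}[\mathbb{H}[P(o'|s')]] \,.
\end{align}

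Next I would analyze the two resulting terms. The observation predictive $P(o'|b,a) = \sum_{s'}P(o'|s')\sum_{s}P(s'|s,a)b(s)$ is an affine (indeed linear) map of $b$, so $\mathbb{H}[P(o'|b,a)]$ is the Shannon entropy --- a concave functional of a probability vector --- precomposed with an affine map; since concavity is preserved under composition with affine maps, this term is concave in $b$. For the second term, writing $\mathbb{E}_{P(s'|b,a)}[\mathbb{H}[P(o'|s')]] = \sum_{s'}P(s'|b,a)\,\mathbb{H}[P(o'|s')]$, the per-state entropies $\mathbb{H}[P(o'|s')]$ are constants independent of $b$ while $P(s'|b,a)$ is linear in $b$, so the whole term is linear in $b$. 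Hence $IG(b,a)$ is a concave function minus a linear one, therefore concave, and adding the linear $\tilde{R}(b,a)$ preserves concavity, establishing the claim.

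The argument contains no genuinely difficult step; the only points requiring care are (i) verifying the mutual-information rearrangement so that the concave contribution is exactly an entropy of an affine image of $b$, and (ii) confirming that the conditional-entropy coefficients $\mathbb{H}[P(o'|s')]$ carry no dependence on $b$, which is precisely what makes the subtracted term linear rather than merely convex. Both follow directly from the factorization $P(s',o'|b,a)=P(o'|s')P(s'|b,a)$ and the linearity of the belief-propagation map $b \mapsto P(s'|b,a)$.
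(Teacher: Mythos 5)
Your proof is correct and takes essentially the same approach as the paper: the same decomposition into a linear pragmatic term plus information gain, the same rearrangement of $IG(b,a)$ into $\mathbb{H}[P(o'|b,a)] - \mathbb{E}_{P(s'|b,a)}[\mathbb{H}[P(o'|s')]]$, and the same observation that the subtracted conditional-entropy term is linear in $b$. The only difference is cosmetic: where you invoke the standard facts that Shannon entropy is concave and that concavity is preserved under affine precomposition, the paper verifies the concavity of $\mathbb{H}[P(o'|b,a)]$ in $b$ explicitly from the definition of convexity using the log-sum inequality.
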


\begin{proof}
Recall the EFE reward is defined as:
\begin{align}
    R(b, a) = \mathbb{E}_{P(o'|b, a)}[\log\tilde{P}(o')] + \mathbb{E}_{P(o'|b, a)}[\mathbb{KL}[b'(s'|o', b, a) || b'(s'|b, a)]] \,.
\end{align}

From (\ref{eq:prag_value_linear}) we know the first term is linear in the belief $b$.

The second term can be written as:
\begin{align}
\begin{split}
    &\mathbb{E}_{P(o'|b, a)}[\mathbb{KL}[b'(s'|o', b, a) || b'(s'|b, a)]] \\
    &= \mathbb{E}_{P(o', s'|b, a)}[\log b'(s'|o', b, a) - \log b'(s'|b, a)] \\
    &= \mathbb{E}_{P(o', s'|b, a)}[\log b'(s'|b, a) + \log P(o'|s') - \log P(o'|b, a) - \log b'(s'|b, a)] \\
    &= \mathbb{E}_{P(o', s'|b, a)}[\log P(o'|s') - \log P(o'|b, a)] \\
    &= \mathbb{H}[P(o'|b, a)] - \mathbb{E}_{P(s'|b, a)}[\mathbb{H}[P(o'|s')]] \\
    &= -\sum_{o'}P(o'|b, a)\log P(o'|b, a) - \sum_{s}b(s)\sum_{s'}P(s'|s, a)\mathbb{H}[P(o'|s')] \,.
\end{split}
\end{align}
The second term above is a linear function of the belief.

Applying the definition of convexity to the negative of the first term:
\begin{align}
\begin{split}
    &\sum_{o'}P(o'|\lambda b + (1 - \lambda)b', a)\log P(o'|\lambda b + (1 - \lambda)b', a) \\
    &= \sum_{o'}\sum_{s}P(o'|s, a)\left[\lambda b(s) + (1 - \lambda)b'(s)\right]\log\left[ \sum_{s}\left(\lambda b(s)P(o'|s, a) + (1 - \lambda) b'(s)P(o'|s, a))\right)\right] \\
    &= \sum_{o'}\left[\lambda P(o'|b, a) + (1 - \lambda) P(o'|b, a)\right]\log \frac{\lambda P(o'|b, a) + (1 - \lambda)P(o'|b', a)}{\lambda + (1 - \lambda)} \\
    &\leq \sum_{o'}\lambda P(o'|b, a)\log P(o'|b, a) + \sum_{o'}(1 - \lambda)P(o'|b', a)\log P(o'|b', a) \,,
\end{split}
\end{align}
where the last line uses the log sum inequality and shows the equation is convex. Thus, the first term is concave and the EFE reward is concave in the belief. 
\end{proof}

\subsection{Proofs for Section \ref{sec:pdmmdp}}

\begin{lemma}
(Performance difference in mismatched MDPs; restate of lemma \ref{lemma:pdmmdp}) Let $\pi$ and $\pi'$ be two policies which are optimal w.r.t. two MDPs $M$ and $M'$. The two MDPs share the same initial state distribution and discount factor but have different rewards $R, R'$ and dynamics $P, P'$. Denote $\Delta R(s, a) = R'(s, a) - R(s, a)$. The performance difference between $\pi$ and $\pi'$ when both are evaluated in $M$ is given by:
\begin{align}
\begin{split}
    &J_{M}(\pi) - J_{M}(\pi') \\
    &= \underbrace{\frac{1}{(1 - \gamma)}\mathbb{E}_{(s, a) \sim d^{\pi}_{P}}\left[A^{\pi'}_{M'}(s, a)\right]}_{\text{Advantage under expert distribution}} \\ 
    &\quad + \underbrace{\frac{1}{(1 - \gamma)}\mathbb{E}_{(s, a) \sim d^{\pi'}_{P}}\left[\Delta R(s, a) + \gamma\left(\mathbb{E}_{s' \sim P'(\cdot|s, a)}[V^{\pi'}_{M'}(s')] - \mathbb{E}_{s'' \sim P(\cdot|s, a)}[V^{\pi'}_{M'}(s'')]\right)\right]}_{\text{Reward-model advantage under own distribution}} \\
    &\quad + \underbrace{\frac{1}{(1 - \gamma)}\mathbb{E}_{(s, a) \sim d^{\pi}_{P}}\left[-\Delta R(s, a) + \gamma\left(\mathbb{E}_{s'' \sim P(\cdot|s, a)}[V^{\pi'}_{M'}(s'')] - \mathbb{E}_{s' \sim P'(\cdot|s, a)}[V^{\pi'}_{M'}(s')]\right)\right]}_{\text{Reward-model disadvantage under expert distribution}} \,.
\end{split}
\end{align}
\end{lemma}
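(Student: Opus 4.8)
The plan is to prove the identity using a single potential function --- the value $V^{\pi'}_{M'}$ of policy $\pi'$ in its own MDP $M'$ --- and to telescope it against the discounted return of each policy in the common evaluation MDP $M$. The starting point is the general identity that for any bounded $V$, any policy $\pi$, dynamics $P$, reward $R$, and the shared initial distribution $\mu$,
\[
J_{M}(\pi) = \mathbb{E}_{s_{0} \sim \mu}[V(s_{0})] + \frac{1}{1-\gamma}\mathbb{E}_{(s,a) \sim d^{\pi}_{P}}\left[R(s,a) + \gamma\mathbb{E}_{s'' \sim P(\cdot|s,a)}[V(s'')] - V(s)\right].
\]
This follows by inserting the telescoping sum $\sum_{t}\gamma^{t}(V(s_{t}) - \gamma V(s_{t+1})) = V(s_{0})$ into the discounted return and re-expressing the remaining discounted sum of Bellman residuals through the occupancy measure $d^{\pi}_{P}$.

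Next I would instantiate this with $V = V^{\pi'}_{M'}$ and apply it to both $J_{M}(\pi)$ and $J_{M}(\pi')$; since both are evaluated under the same reward $R$, dynamics $P$, and initial distribution $\mu$, subtracting the two expressions cancels the $\mathbb{E}_{\mu}[V(s_{0})]$ terms. What remains is the difference of the residual $\delta(s,a) := R(s,a) + \gamma\mathbb{E}_{s'' \sim P}[V(s'')] - V(s)$ taken under $d^{\pi}_{P}$ and under $d^{\pi'}_{P}$.

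The key algebraic step is to reconnect $\delta$ to the advantage $A^{\pi'}_{M'}$ appearing in the statement. Because $V = V^{\pi'}_{M'}$ and $Q^{\pi'}_{M'}(s,a) = R'(s,a) + \gamma\mathbb{E}_{s' \sim P'}[V(s')]$, we have $A^{\pi'}_{M'}(s,a) = R'(s,a) + \gamma\mathbb{E}_{s' \sim P'}[V(s')] - V(s)$, whence
\[
\delta(s,a) = A^{\pi'}_{M'}(s,a) - \Delta R(s,a) + \gamma\left(\mathbb{E}_{s'' \sim P}[V(s'')] - \mathbb{E}_{s' \sim P'}[V(s')]\right).
\]
Substituting this into the $d^{\pi}_{P}$ expectation yields precisely the first (advantage under the expert distribution) and third (reward--model disadvantage under the expert distribution) labelled terms.

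The last piece handles the $d^{\pi'}_{P}$ expectation using the fact that a policy's advantage averaged over its own action distribution vanishes, $\mathbb{E}_{a \sim \pi'(\cdot|s)}[A^{\pi'}_{M'}(s,a)] = 0$. Since the actions under $d^{\pi'}_{P}$ are drawn from $\pi'$, this eliminates the $A^{\pi'}_{M'}$ contribution, and after negating the remaining residual (note the overall minus sign on the $d^{\pi'}_{P}$ term) we obtain exactly the second labelled term, the reward--model advantage under $\pi'$'s own distribution. I expect the main difficulty to be bookkeeping rather than conceptual: keeping straight which value function, reward, and dynamics appear in each residual, and matching the sign conventions $\Delta R = R' - R$ and the orientation of the model-difference terms to the three groups in the statement. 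The one genuinely non-routine idea is using the \emph{foreign} potential $V^{\pi'}_{M'}$ instead of $V^{\pi'}_{M}$, which is what surfaces $A^{\pi'}_{M'}$ (zero under $\pi'$) while relegating the cost of the mismatch to the reward and model difference terms, thereby distinguishing this result from the standard single-MDP performance difference lemma.
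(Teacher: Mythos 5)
Your proof is correct, and it reaches the identity by a recognizably different organization of the same underlying telescoping mechanism. The paper (following Vemula et al.) first splits $J_{M}(\pi) - J_{M}(\pi')$ into $\mathbb{E}_{\mu}[V^{\pi}_{M}(s_0) - V^{\pi'}_{M'}(s_0)] + \mathbb{E}_{\mu}[V^{\pi'}_{M'}(s_0) - V^{\pi'}_{M}(s_0)]$ and then recursively unrolls each difference by hand: the first unrolling yields the advantage term and the reward--model disadvantage under $d^{\pi}_{P}$, the second yields the reward--model advantage under $d^{\pi'}_{P}$. You instead invoke a single generic potential-function identity, $J_{M}(\tilde{\pi}) = \mathbb{E}_{\mu}[V(s_0)] + \frac{1}{1-\gamma}\mathbb{E}_{(s,a)\sim d^{\tilde{\pi}}_{P}}\left[R(s,a) + \gamma\mathbb{E}_{s''\sim P(\cdot|s,a)}[V(s'')] - V(s)\right]$, apply it with the foreign potential $V = V^{\pi'}_{M'}$ to both policies, subtract, and then decompose the common Bellman residual $\delta$ into $A^{\pi'}_{M'} - \Delta R + \gamma(\mathbb{E}_{P}[V] - \mathbb{E}_{P'}[V])$; your bookkeeping of signs checks out and yields exactly the three labelled terms. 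The one step your route needs that the paper's never states explicitly is $\mathbb{E}_{a\sim\pi'(\cdot|s)}[A^{\pi'}_{M'}(s,a)] = 0$, which is valid for any policy and its own value functions (the paper uses this fact implicitly by writing the Bellman equation for $V^{\pi'}_{M'}$ under $\pi'$'s own actions when unrolling the second difference). What your version buys is modularity and symmetry: both occupancy-measure expectations fall out of one identity rather than two separately managed recursions, and it makes transparent that optimality of $\pi$ and $\pi'$ is nowhere required --- the decomposition holds for arbitrary policy pairs, which the paper's statement obscures slightly by hypothesizing optimality.
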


\begin{proof}
Following \citep{vemula2023virtues}, we expand the performance difference as:
\begin{align}
\begin{split}
J_{M}(\pi) - J_{M}(\pi') &= \mathbb{E}_{\mu(s_0)}[V^{\pi}_{M}(s_0) - V^{\pi'}_{M}(s_0)] \\
&= \mathbb{E}_{\mu(s_0)}[V^{\pi}_{M}(s_0) - V^{\pi'}_{M'}(s_0)] + \mathbb{E}_{\mu(s_0)}[V^{\pi'}_{M'}(s_0) - V^{\pi'}_{M}(s_0)] \,.
\end{split}
\end{align}

The second term can be expanded as:
\begin{align}
\begin{split}
&\mathbb{E}_{\mu(s_{0})}[V^{\pi'}_{M'}(s_0) - V^{\pi'}_{M}(s_0)] \\
&= \mathbb{E}_{s_{0} \sim \mu(\cdot), a_0 \sim \pi'(\cdot|s_0)}[R'(s_0, a_0) + \gamma\mathbb{E}_{s_1 \sim P'(\cdot|s_0, a_0)}[V^{\pi'}_{M'}(s_1)] - R(s_0, a_0) - \gamma\mathbb{E}_{s_1 \sim P(\cdot|s_0, a_0)}[V^{\pi'}_{M}(s_1)]] \\
&= \mathbb{E}_{s_{0} \sim \mu(\cdot), a_0 \sim \pi'(\cdot|s_0)}[\Delta R(s_0, a_0) + \gamma\mathbb{E}_{s_1 \sim P'(\cdot|s_0, a_0)}[V^{\pi'}_{M'}(s_1)] - \gamma\mathbb{E}_{s_1 \sim P(\cdot|s_0, a_0)}[V^{\pi'}_{M'}(s_1)] \\
&\quad + \gamma\mathbb{E}_{s_1 \sim P(\cdot|s_0, a_0)}[V^{\pi'}_{M'}(s_1)] - \gamma\mathbb{E}_{s_1 \sim P(\cdot|s_0, a_0)}[V^{\pi'}_{M}(s_1)]] \\
&= \mathbb{E}_{s_{0} \sim \mu(\cdot), a_0 \sim \pi'(\cdot|s_0)}[\Delta R(s_0, a_0) + \gamma\mathbb{E}_{s_1 \sim P'(\cdot|s_0, a_0)}[V^{\pi'}_{M'}(s_1)] - \gamma\mathbb{E}_{s_1 \sim P(\cdot|s_0, a_0)}[V^{\pi'}_{M'}(s_1)]] \\
&\quad + \gamma\mathbb{E}_{s_{0} \sim \mu(\cdot), a_0 \sim \pi'(\cdot|s_0), s_{1} \sim P(\cdot|s_{0}, a_{0})}[\underbrace{V^{\pi'}_{M'}(s_1) - V^{\pi'}_{M}(s_1)}_{\text{term a}}] \,,
\end{split}
\end{align}
where $\Delta R(s, a) = R'(s, a) - R(s, a)$. 

Expanding term a, we arrive at a similar structure to the above:
\begin{align}
\begin{split}
    \text{term a} &= \mathbb{E}_{a_{1} \sim \pi'(\cdot|s_{1})}[R'(s_{1}, a_{1}) + \gamma\mathbb{E}_{s_2 \sim P'(\cdot|s_1, a_1)}[V^{\pi'}_{M'}(s_2)] - R(s_1, a_1) - \gamma\mathbb{E}_{s_2 \sim P(\cdot|s_1, a_1)}[V^{\pi'}_{M}(s_2)]] \\
    &= \mathbb{E}_{a_1 \sim \pi'(\cdot|s_1)}[\Delta R(s_1, a_1) + \gamma\mathbb{E}_{s_2 \sim P'(\cdot|s_1, a_1)}[V^{\pi'}_{M'}(s_2)] - \gamma\mathbb{E}_{s_2 \sim P(\cdot|s_1, a_1)}[V^{\pi'}_{M'}(s_2)]] \\
    &\quad + \gamma\mathbb{E}_{a_1 \sim \pi'(\cdot|s_1), s_{2} \sim P(\cdot|s_{1}, a_{1})}\underbrace{[V^{\pi'}_{M'}(s_2) - V^{\pi'}_{M}(s_2)]}_{\text{term a'}} \,.
\end{split}
\end{align}

We can thus unroll the last term iteratively and obtain:
\begin{align}
\begin{split}
    &\mathbb{E}_{\mu(s_{0})}[V^{\pi'}_{M'}(s_0) - V^{\pi'}_{M}(s_0)] \\
    &= \mathbb{E}\left[\sum_{t=0}^{\infty}\gamma^{t}\left(\Delta R(s_{t}, a_{t}) + \gamma\mathbb{E}_{s' \sim P'(\cdot|s_{t}, a_{t})}[V^{\pi'}_{M'}(s')] - \gamma\mathbb{E}_{s'' \sim P(\cdot|s_{t}, a_{t})}[V^{\pi'}_{M'}(s'')]\right)\right] \\
    &= \frac{1}{(1 - \gamma)}\mathbb{E}_{(s, a) \sim d^{\pi'}_{P}}\left[\Delta R(s, a) + \gamma\left(\mathbb{E}_{s' \sim P'(\cdot|s, a)}[V^{\pi'}_{M'}(s')] - \mathbb{E}_{s'' \sim P(\cdot|s, a)}[V^{\pi'}_{M'}(s'')]\right)\right] \,,
\end{split}
\end{align}
where the expectation in the second line is taken w.r.t. the stochastic process induced by $\pi', P$. 

We now expand the first term in the performance difference: 
\begin{align}
\begin{split}
&\mathbb{E}_{s_{0} \sim \mu(\cdot)}[V^{\pi}_{M}(s_0) - V^{\pi'}_{M'}(s_0)] \\
&= \bigg(\mathbb{E}_{s_{0} \sim \mu(\cdot)}[V^{\pi}_{M}(s_0) - \mathbb{E}_{a_0 \sim \pi(\cdot|s_0)}[Q^{\pi'}_{M'}(s_0, a_0)]]\bigg) + \bigg(\mathbb{E}_{s_{0} \sim \mu(\cdot)}[\mathbb{E}_{a_0 \sim \pi(\cdot|s_0)}[Q^{\pi'}_{M'}(s_0, a_0)] - V^{\pi'}_{M'}(s_0)]\bigg) \\
&= \bigg(\mathbb{E}_{s_{0} \sim \mu(\cdot), a_{0} \sim \pi(\cdot|s_{0})}[Q^{\pi'}_{M'}(s_0, a_0)] - V^{\pi'}_{M'}(s_0)]\bigg) \\
&\quad + \mathbb{E}_{s_{0} \sim \mu(\cdot), a_0 \sim \pi(\cdot|s_0)}[R(s_0, a_0) + \gamma \mathbb{E}_{s_1 \sim P(\cdot|s_0, a_0)}[V^{\pi}_{M}(s_1)]] \\
&\quad - \mathbb{E}_{s_{0} \sim \mu(\cdot), a_0 \sim \pi(\cdot|s_0)}[R'(s_0, a_0) + \gamma \mathbb{E}_{s_1 \sim P'(\cdot|s_0, a_0)}[V^{\pi'}_{M'}(s_1)]] \\
&= \mathbb{E}_{s_{0} \sim \mu(\cdot), a_0 \sim \pi(\cdot|s_0)}[A^{\pi'}_{M'}(s_0, a_0)] \\
&\quad + \mathbb{E}_{s_{0} \sim \mu(\cdot), a_0 \sim \pi(\cdot|s_0)}\bigg[-\Delta R(s_0, a_0) + \gamma\mathbb{E}_{s_1 \sim P(\cdot|s_0, a_0)}[V^{\pi}_{M}(s_1)] - \gamma\mathbb{E}_{s_1 \sim P(\cdot|s_0, a_0)}[V^{\pi'}_{M'}(s_1)] \\
&\quad + \gamma\mathbb{E}_{s_1 \sim P(\cdot|s_0, a_0)}[V^{\pi'}_{M'}(s_1)] - \gamma\mathbb{E}_{s_1 \sim P'(\cdot|s_0, a_0)}[V^{\pi'}_{M'}(s_1)]\bigg] \\
&= \mathbb{E}_{s_{0} \sim \mu(\cdot), a_0 \sim \pi(\cdot|s_0)}[A^{\pi'}_{M'}(s_0, a_0)] \\
&\quad + \mathbb{E}_{s_{0} \sim \mu(\cdot), a_0 \sim \pi(\cdot|s_0)}[-\Delta R(s_0, a_0) + \gamma\mathbb{E}_{s_1 \sim P(\cdot|s_0, a_0)}[V^{\pi'}_{M'}(s_1)] - \gamma\mathbb{E}_{s_1 \sim P'(\cdot|s_0, a_0)}[V^{\pi'}_{M'}(s_1)]] \\
&\quad + \gamma\mathbb{E}_{s_{0} \sim \mu(\cdot), a_0 \sim \pi(\cdot|s_0), s_1 \sim P(\cdot|s_0, a_0)}[\underbrace{V^{\pi}_{M}(s_1) - V^{\pi'}_{M'}(s_1)]}_{\text{term b}} \,.\\
\end{split}
\end{align}

Apply the same unrolling method to term b, we have:
\begin{align}
\begin{split}
    &\mathbb{E}_{s_{0} \sim \mu(\cdot)}[V^{\pi}_{M}(s_0) - V^{\pi'}_{M'}(s_0)] \\
    &= \mathbb{E}\left[\sum_{t=0}^{\infty}\gamma^{t}A^{\pi'}_{M'}(s_{t}, a_{t})\right]\\
    &\quad + \mathbb{E}\left[\sum_{t=0}^{\infty}\gamma^{t}\left(-\Delta R(s_{t}, a_{t}) + \gamma\mathbb{E}_{s'' \sim P(\cdot|s_{t}, a_{t})}[V^{\pi'}_{M'}(s'')] - \gamma\mathbb{E}_{s' \sim P'(\cdot|s_{t}, a_{t})}[V^{\pi'}_{M'}(s')]\right)\right] \\
    &= \frac{1}{(1 - \gamma)}\mathbb{E}_{(s, a) \sim d^{\pi}_{P}}\left[A^{\pi'}_{M'}(s, a)\right]\\ &\quad + \frac{1}{(1 - \gamma)}\mathbb{E}_{(s, a) \sim d^{\pi}_{P}}\left[-\Delta R(s, a) + \gamma\left(\mathbb{E}_{s'' \sim P(\cdot|s, a)}[V^{\pi'}_{M'}(s'')] - \mathbb{E}_{s' \sim P'(\cdot|s, a)}[V^{\pi'}_{M'}(s')]\right)\right] \,,
\end{split}
\end{align}
where the expectations in the first equality is again taken w.r.t. the stochastic process induced by $\pi, P$. 

Putting together, we have:
\begin{align}
\begin{split}
    &J_{M}(\pi) - J_{M}(\pi') \\
    &= \frac{1}{(1 - \gamma)}\mathbb{E}_{(s, a) \sim d^{\pi}_{P}}\left[A^{\pi'}_{M'}(s, a)\right]\\ 
    &\quad + \frac{1}{(1 - \gamma)}\mathbb{E}_{(s, a) \sim d^{\pi'}_{P}}\left[\Delta R(s, a) + \gamma\left(\mathbb{E}_{s' \sim P'(\cdot|s, a)}[V^{\pi'}_{M'}(s')] - \mathbb{E}_{s'' \sim P(\cdot|s, a)}[V^{\pi'}_{M'}(s'')]\right)\right] \\
    &\quad + \frac{1}{(1 - \gamma)}\mathbb{E}_{(s, a) \sim d^{\pi}_{P}}\left[-\Delta R(s, a) + \gamma\left(\mathbb{E}_{s'' \sim P(\cdot|s, a)}[V^{\pi'}_{M'}(s'')] - \mathbb{E}_{s' \sim P'(\cdot|s, a)}[V^{\pi'}_{M'}(s')]\right)\right] \,.
\end{split}
\end{align}
\end{proof}

\begin{proposition}\label{prop:model_advantage_bound}
(Model advantage bound) Let $V^{\pi}_{P}(s)$ be the value function of policy $\pi$ in dynamics $P$ with reward $R(s, a)$ and let $R_{max} = \max_{s, a}|R(s, a)|$. The absolute value of the advantage of dynamics $P$ over $P'$ is bounded by:
\begin{align}
    \left|\mathbb{E}_{P(s'|s, a)}[V^{\pi}_{P}(s')] - \mathbb{E}_{P'(s''|s, a)}[V^{\pi}_{P}(s'')]\right| \leq \frac{R_{max}}{1 - \gamma}\sqrt{2\mathbb{KL}[P(\cdot|s, a) || P'(\cdot|s, a)]} \,.
\end{align}
\end{proposition}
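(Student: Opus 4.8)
The plan is to recognize the left-hand side as the difference between the expectations of one bounded function, namely $V^{\pi}_{P}$, taken under the two next-state distributions $P(\cdot|s,a)$ and $P'(\cdot|s,a)$, and then control this by the total variation distance between those distributions, converting to KL divergence via Pinsker's inequality at the end. The two ingredients I need are a uniform bound on $V^{\pi}_{P}$ and the standard inequality relating the difference of expectations of a bounded function to the $L^1$ distance of the underlying measures.

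First I would write the quantity inside the absolute value as $\sum_{s'}\bigl(P(s'|s,a) - P'(s'|s,a)\bigr)V^{\pi}_{P}(s')$ and apply H\"older's inequality (or equivalently the duality between the $L^1$ and $L^\infty$ norms) to get
\begin{align}
\left|\mathbb{E}_{P(s'|s,a)}[V^{\pi}_{P}(s')] - \mathbb{E}_{P'(s''|s,a)}[V^{\pi}_{P}(s'')]\right| \leq \|V^{\pi}_{P}\|_{\infty}\,\bigl\|P(\cdot|s,a) - P'(\cdot|s,a)\bigr\|_{1} \,.
\end{align}
Next I would bound the value function uniformly: since $|R(s,a)| \leq R_{max}$ and $V^{\pi}_{P}(s')$ is the discounted infinite sum $\mathbb{E}[\sum_{t=0}^{\infty}\gamma^{t}R(s_{t},a_{t})]$, the geometric series gives $\|V^{\pi}_{P}\|_{\infty} \leq R_{max}/(1-\gamma)$.

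Finally I would convert the $L^1$ distance to KL divergence. Using the identity $\|p - q\|_{1} = 2\,\mathrm{TV}(p,q)$ together with Pinsker's inequality $\mathrm{TV}(p,q) \leq \sqrt{\tfrac{1}{2}\mathbb{KL}[p\,\|\,q]}$, I obtain $\|P(\cdot|s,a) - P'(\cdot|s,a)\|_{1} \leq \sqrt{2\,\mathbb{KL}[P(\cdot|s,a)\,\|\,P'(\cdot|s,a)]}$. Substituting the two bounds into the H\"older inequality yields the claim. The calculation is essentially routine, so the only real care point is the bookkeeping of constants, namely matching the factor of $2$ hidden in the $L^1$-versus-total-variation identity with the factor of $\tfrac12$ inside Pinsker so that exactly the $\sqrt{2\,\mathbb{KL}}$ in the statement emerges; note also that the asymmetry between the forward and reverse KL means the direction $\mathbb{KL}[P\,\|\,P']$ must be tracked consistently, matching the way $P$ and $P'$ enter the bound.
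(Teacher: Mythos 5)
Your proposal is correct and follows essentially the same route as the paper's proof: expand the difference as a sum weighted by $P - P'$, apply H\"older's inequality with the $L^1$--$L^\infty$ duality, bound $\|V^{\pi}_{P}\|_{\infty}$ by $R_{max}/(1-\gamma)$ via the geometric series, and finish with Pinsker's inequality to obtain the $\sqrt{2\,\mathbb{KL}}$ factor. The only cosmetic difference is that the paper inserts an explicit triangle-inequality (Jensen) step before H\"older, which you fold into a single application.
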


\begin{proof}
The proof is the same as lemma B.2. in \citep{wei2023bayesian}.
\begin{align}
\begin{split}
    &\left|\mathbb{E}_{P(s'|s, a)}[V^{\pi}_{P}(s')] - \mathbb{E}_{P'(s''|s, a)}[V^{\pi}_{P}(s'')]\right| \\
    &= \left|\sum_{s'}V^{\pi}_{P}(s')\left(P(s'|s, a) - P'(s'|s, a)\right)\right| \\
    &\overset{(1)}{\leq} \sum_{s'}\left|V^{\pi}_{P}(s')\right|\left|P(s'|s, a) - P'(s'|s, a)\right| \\
    &\overset{(2)}{\leq} \Vert V^{\pi}_{P}(\cdot) \Vert_{\infty}\Vert P(\cdot|s, a) - P'(\cdot|s, a) \Vert_{1} \\
    &\overset{(3)}{\leq} \Vert V^{\pi}_{P}(\cdot) \Vert_{\infty}\sqrt{2\mathbb{KL}[P(\cdot|s, a) || P'(\cdot|s, a)]} \,,
\end{split}
\end{align}
where (1) uses Jensen's inequality since the inner sum is a convex combination, (2) uses Holder's inequality and (3) uses Pinsker's inequality. The coefficient $\Vert V^{\pi}_{P}(\cdot) \Vert_{\infty} \leq \mathbb{E}_{\pi, P}[\sum_{t=0}^{\infty}\gamma^{t}\max_{s, a}|R(s, a)|] = \frac{R_{max}}{(1 - \gamma)}$. 

Putting together, we have:
\begin{align}
    \left|\mathbb{E}_{P(s'|s, a)}[V^{\pi}_{P}(s')] - \mathbb{E}_{P'(s''|s, a)}[V^{\pi}_{P}(s'')]\right| \leq \frac{R_{max}}{1 - \gamma}\sqrt{2\mathbb{KL}[P(\cdot|s, a) || P'(\cdot|s, a)]} \,.
\end{align}

\end{proof}

\begin{lemma}
(Restate of lemma \ref{lemma:pdmmdp_bound}) For the setting considered in lemma \ref{lemma:pdmmdp}, let $\epsilon_{\pi'} = \mathbb{E}_{(s, a) \sim d^{\pi}_{P}}[|A^{\pi'}_{M'}(s, a)|]$, $\epsilon_{R'} = \mathbb{E}_{(s, a) \sim d^{\pi}_{P}}[|\Delta R(s, a)|]$, $\epsilon_{P'} = \mathbb{E}_{(s, a) \sim d^{\pi}_{P}}[\mathbb{KL}[P(\cdot|s, a) || P'(\cdot|s, a)]]$, and $R'_{max} = \max_{s, a}|R'(s, a)|$. Let the two policies have bounded state-action marginal density ratio $\frac{d^{\pi'}_{P}(s, a)}{d^{\pi}_{P}(s, a)} \leq C$. The performance gap is bounded as:
\begin{align}
    J_{M}(\pi) - J_{M}(\pi') \leq \frac{1}{1 - \gamma}\epsilon_{\pi'} + \frac{C + 1}{1 - \gamma }\epsilon_{R'} + \frac{(C + 1)\gamma R'_{max}}{(1 - \gamma)^{2}}\sqrt{2\epsilon_{P'}} \,.
\end{align}
\end{lemma}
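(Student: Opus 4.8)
The plan is to start from the exact three-term decomposition of $J_M(\pi) - J_M(\pi')$ supplied by Lemma \ref{lemma:pdmmdp} and bound each term so that the three summands of the target inequality emerge one at a time. The first term, the policy advantage under the expert distribution, is already taken under $d^{\pi}_{P}$, so I would bound it immediately by $\frac{1}{1-\gamma}\mathbb{E}_{d^{\pi}_{P}}[A^{\pi'}_{M'}(s,a)] \leq \frac{1}{1-\gamma}\mathbb{E}_{d^{\pi}_{P}}[|A^{\pi'}_{M'}(s,a)|] = \frac{1}{1-\gamma}\epsilon_{\pi'}$, which yields the first summand directly.

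Next I would split each of the two reward-model terms into its reward part ($\pm\Delta R$) and its model part (the difference of $V^{\pi'}_{M'}$ expectations under $P'$ versus $P$). The third term is taken under $d^{\pi}_{P}$, so its reward part bounds as $\frac{1}{1-\gamma}\mathbb{E}_{d^{\pi}_{P}}[-\Delta R] \leq \frac{1}{1-\gamma}\epsilon_{R'}$ with no change of measure. The second term is taken under the policy's own distribution $d^{\pi'}_{P}$, so here I would invoke the bounded density ratio assumption, writing $\mathbb{E}_{d^{\pi'}_{P}}[\Delta R] = \mathbb{E}_{d^{\pi}_{P}}\!\left[\frac{d^{\pi'}_{P}}{d^{\pi}_{P}}\Delta R\right] \leq C\,\epsilon_{R'}$ after bounding $\Delta R \leq |\Delta R|$. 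Adding the two reward contributions gives $\frac{C+1}{1-\gamma}\epsilon_{R'}$, the second summand.

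For the model parts, the key ingredient is Proposition \ref{prop:model_advantage_bound} applied with the value function $V^{\pi'}_{M'}$, whose sup-norm is bounded by $\frac{R'_{max}}{1-\gamma}$ since $M'$ carries reward $R'$; this controls each pointwise model advantage by $\frac{R'_{max}}{1-\gamma}\sqrt{2\,\mathbb{KL}[P(\cdot|s,a) || P'(\cdot|s,a)]}$, where the ordering $\mathbb{KL}[P || P']$ is precisely the one defining $\epsilon_{P'}$. For the third term (under $d^{\pi}_{P}$) I would then apply Jensen's inequality to pull the expectation inside the square root, $\mathbb{E}_{d^{\pi}_{P}}[\sqrt{2\,\mathbb{KL}}] \leq \sqrt{2\,\mathbb{E}_{d^{\pi}_{P}}[\mathbb{KL}]} = \sqrt{2\epsilon_{P'}}$, obtaining $\frac{\gamma R'_{max}}{(1-\gamma)^{2}}\sqrt{2\epsilon_{P'}}$; for the second term (under $d^{\pi'}_{P}$) I would apply the same pointwise bound, change measure with the ratio $C$, and then apply Jensen, giving $\frac{C\gamma R'_{max}}{(1-\gamma)^{2}}\sqrt{2\epsilon_{P'}}$. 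Summing recovers the third summand $\frac{(C+1)\gamma R'_{max}}{(1-\gamma)^{2}}\sqrt{2\epsilon_{P'}}$, and collecting all three summands gives the claimed bound.

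I expect the main obstacle to be the bookkeeping around the change of measure and the direction of the KL divergence: one must verify that the density-ratio factor $C$ attaches only to the second (own-distribution) term while the third (expert-distribution) term supplies the ``$+1$'', and that Proposition \ref{prop:model_advantage_bound} is invoked with the $P || P'$ ordering so the result matches $\epsilon_{P'}$ exactly. The only genuine inequality of substance beyond the routine triangle-inequality and $\sup$ estimates is the Jensen step using concavity of the square root; everything else is a direct application of the stated assumptions.
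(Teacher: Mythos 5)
Your proposal is correct and follows essentially the same route as the paper's proof: both start from the three-term decomposition of Lemma \ref{lemma:pdmmdp}, bound the policy-advantage term by $\epsilon_{\pi'}$, handle the reward parts with a change of measure (attaching $C$ only to the own-distribution term), and control the model parts via the pointwise H\"older--Pinsker bound of Proposition \ref{prop:model_advantage_bound} with $\Vert V^{\pi'}_{M'}\Vert_{\infty} \leq \nicefrac{R'_{max}}{(1-\gamma)}$, followed by Jensen's inequality on the concave square root to reach $\sqrt{2\epsilon_{P'}}$. The only cosmetic difference is that the paper first passes to $|J_M(\pi) - J_M(\pi')|$ and applies the triangle inequality across all five pieces (invoking $J_M(\pi) \geq J_M(\pi')$ at the end), whereas you bound the signed terms directly; the substance is identical.
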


\begin{proof}
The absolute value of the performance gap can be written as:
\begin{align}
\begin{split}
    &|J_{M}(\pi) - J_{M}(\pi')| \\
    &\leq \frac{1}{(1 - \gamma)}\mathbb{E}_{(s, a) \sim d^{\pi}_{P}}\left[|A^{\pi'}_{M'}(s, a)|\right]\\ 
    &\quad + \frac{1}{(1 - \gamma)}\left|\mathbb{E}_{(s, a) \sim d^{\pi}_{P}}\left[-\Delta R(s, a)\right]\right| + \frac{1}{(1 - \gamma)}\left|\mathbb{E}_{(s, a) \sim d^{\pi'}_{P}}\left[\Delta R(s, a)\right]\right| \\
    &\quad + \frac{\gamma}{(1 - \gamma)}\left|\mathbb{E}_{(s, a) \sim d^{\pi}_{P}}\left[\mathbb{E}_{s'' \sim P(\cdot|s, a)}[V^{\pi'}_{M'}(s'')] - \mathbb{E}_{s' \sim P'(\cdot|s, a)}[V^{\pi'}_{M'}(s')]\right]\right| \\
    &\quad + \frac{\gamma}{(1 - \gamma)}\left|\mathbb{E}_{(s, a) \sim d^{\pi'}_{P}}\left[\mathbb{E}_{s' \sim P'(\cdot|s, a)}[V^{\pi'}_{M'}(s')] - \mathbb{E}_{s'' \sim P(\cdot|s, a)}[V^{\pi'}_{M'}(s'')]\right]\right|
\end{split}
\end{align}
due to Jensen's inequality.

Expanding the third term reward advantage on the right hand side:
\begin{align}
\begin{split}
    \left|\mathbb{E}_{(s, a) \sim d^{\pi'}_{P}}\left[\Delta R(s, a)\right]\right| &= \left|\mathbb{E}_{(s, a) \sim d^{\pi}_{P}}\left[\frac{d^{\pi'}_{P}(s, a)}{d^{\pi}_{P}(s, a)}\Delta R(s, a)\right]\right| \\
    &\leq \mathbb{E}_{(s, a) \sim d^{\pi}_{P}}\left[\left|\frac{d^{\pi'}_{P}(s, a)}{d^{\pi}_{P}(s, a)}\Delta R(s, a)\right|\right] \\
    &\leq \left\Vert \frac{d^{\pi'}_{P}(s, a)}{d^{\pi}_{P}(s, a)}\right\Vert_{\infty}\mathbb{E}_{(s, a) \sim d^{\pi}_{P}}[|\Delta R(s, a)|] \\
    &= C \epsilon_{R'} \,.
\end{split}
\end{align}
For the second term we drop $C$ from the above due to no distribution mismatch. 

Applying proposition \ref{prop:model_advantage_bound} to the last term:
\begin{align}
\begin{split}
    &\left|\mathbb{E}_{(s, a) \sim d^{\pi'}_{P}}\left[\mathbb{E}_{s' \sim P'(\cdot|s, a)}[V^{\pi'}_{M'}(s')] - \mathbb{E}_{s'' \sim P(\cdot|s, a)}[V^{\pi'}_{M'}(s'')]\right]\right| \\
    &= \left|\mathbb{E}_{(s, a) \sim d^{\pi}_{P}}\left[\frac{d^{\pi'}_{P}(s, a)}{d^{\pi}_{P}(s, a)}\left(\mathbb{E}_{s' \sim P'(\cdot|s, a)}[V^{\pi'}_{M'}(s')] - \mathbb{E}_{s'' \sim P(\cdot|s, a)}[V^{\pi'}_{M'}(s'')]\right)\right]\right| \\
    &\leq \mathbb{E}_{(s, a) \sim d^{\pi}_{P}}\left[\left|\frac{d^{\pi'}_{P}(s, a)}{d^{\pi}_{P}(s, a)}\left(\mathbb{E}_{s' \sim P'(\cdot|s, a)}[V^{\pi'}_{M'}(s')] - \mathbb{E}_{s'' \sim P(\cdot|s, a)}[V^{\pi'}_{M'}(s'')]\right)\right|\right] \\
    &\leq \left\Vert\frac{d^{\pi'}_{P}(s, a)}{d^{\pi}_{P}(s, a)}\right\Vert_{\infty}\Vert V^{\pi'}_{M'}(\cdot)\Vert_{\infty}\mathbb{E}_{(s, a) \sim d^{\pi}_{P}}[\Vert P(\cdot|s, a) - P'(\cdot|s, a)\Vert_{1}] \\
    &\leq \left\Vert\frac{d^{\pi'}_{P'}(s, a)}{d^{\pi}_{P}(s, a)}\right\Vert_{\infty}\Vert V^{\pi'}_{M'}(\cdot)\Vert_{\infty}\sqrt{2\mathbb{E}_{(s, a) \sim d^{\pi}_{P}}[\mathbb{KL}[P(\cdot|s, a) || P'(\cdot|s, a)]]} \\
    &= \frac{C R'_{max}}{1 - \gamma}\sqrt{2\epsilon_{P'}} \,.
\end{split}
\end{align}
Again for the fourth term we drop $C$ from the above due to no distribution mismatch. 

Putting together and apply the fact that $J_{M}(\pi) \geq J_{M}(\pi')$, we have:
\begin{align}
    J_{M}(\pi) - J_{M}(\pi') \leq \frac{1}{1 - \gamma}\epsilon_{\pi'} + \frac{C + 1}{1 - \gamma }\epsilon_{R'} + \frac{(C + 1)\gamma R'_{max}}{(1 - \gamma)^{2}}\sqrt{2\epsilon_{P'}} \,.
\end{align}

\end{proof}

\subsection{Proofs for Section \ref{sec:voi_rl}}

\subsubsection{Helpful Identities}

\begin{proposition}
(Open-loop value function convexity) The open-loop value function as defined in (\ref{eq:belief_mdp_open}) is piece-wise linear and convex in the beliefs.
\end{proposition}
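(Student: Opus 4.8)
The plan is to prove both properties by induction on the value-iteration backups for the open-loop belief MDP, mirroring the classical Smallwood--Sondik argument for closed-loop POMDP value functions but exploiting the fact that the open-loop belief dynamics in (\ref{eq:efe_belief_mdp_dynamics}) are \emph{deterministic and linear} in the belief. The two structural facts I would lean on are: (i) the immediate reward $\sum_s b(s) R(s,a)$ is linear in $b$; and (ii) the open-loop update $b'(s'\mid b,a) = \sum_s P(s'\mid s,a) b(s)$ is a linear map $b \mapsto T_a b$, where $T_a$ is the stochastic matrix with entries $P(s'\mid s,a)$. Composition and positive scaling of linear maps remain linear, which is what makes the argument close.

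First I would set up value iteration $V_0^{open} \equiv 0$ and $V_{n+1}^{open}(b) = \max_a \big[ \langle b, R(\cdot,a)\rangle + \gamma V_n^{open}(T_a b) \big]$, maintaining the inductive hypothesis that $V_n^{open}(b) = \max_{\alpha \in \Gamma_n} \langle b, \alpha\rangle$ for a finite set $\Gamma_n$ of alpha-vectors in $\mathbb{R}^{|\mathcal{S}|}$. The base case is immediate. For the inductive step, because $T_a b$ is linear in $b$, I would write $V_n^{open}(T_a b) = \max_{\alpha \in \Gamma_n} \langle T_a b, \alpha\rangle = \max_{\alpha \in \Gamma_n} \langle b, T_a^\top \alpha\rangle$, so the action value becomes $Q_{n+1}^{open}(b,a) = \max_{\alpha \in \Gamma_n} \langle b, R(\cdot,a) + \gamma T_a^\top \alpha\rangle$, a maximum of finitely many linear functions of $b$. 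Taking the outer $\max_a$ then yields $V_{n+1}^{open}(b) = \max_{\alpha \in \Gamma_{n+1}} \langle b, \alpha\rangle$ with $\Gamma_{n+1} = \{ R(\cdot,a) + \gamma T_a^\top \alpha : a \in \mathcal{A},\ \alpha \in \Gamma_n \}$, establishing that every iterate is a pointwise maximum of linear functions, hence piecewise-linear and convex.

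To pass to the infinite-horizon value $V^{open}$, I would invoke the $\gamma$-contraction of the open-loop Bellman operator so that $V_n^{open} \to V^{open}$ uniformly; convexity is preserved under pointwise limits, giving convexity of $V^{open}$ directly. An equivalent and more transparent route avoids the limit: since the dynamics are deterministic, any history-dependent policy collapses to an action sequence $a_{0:\infty}$, along which the belief $b_t = T_{a_{t-1}}\cdots T_{a_0} b$ and hence the discounted return $\sum_{t\ge 0} \gamma^t \langle b_t, R(\cdot,a_t)\rangle$ are linear in the initial belief $b$; thus $V^{open}(b) = \sup_{a_{0:\infty}} (\text{linear in } b)$ is a supremum of linear functions and therefore convex. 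The main obstacle is precisely where these two routes diverge on \emph{piecewise}-linearity: the backup grows $|\Gamma_{n+1}| \le |\mathcal{A}|\,|\Gamma_n|$, so the number of alpha-vectors can blow up and the infinite-horizon fixed point is only a supremum of (possibly infinitely many) linear functions. I would therefore state exact piecewise-linearity for the finite-horizon iterates, where $\Gamma_n$ is finite, and convexity for the infinite-horizon limit, noting---as is standard in the POMDP literature cited here---that convexity is the property actually used downstream, e.g. in the intuition that lower-entropy beliefs carry higher value.
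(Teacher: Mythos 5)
Your proof follows the same core route as the paper's: induction over value-iteration backups for the open-loop belief MDP, exploiting linearity of both the immediate reward and the open-loop update $b \mapsto T_a b$, so that each backup preserves piecewise-linear convexity. Where you differ, you are actually more careful than the paper. The paper passes to the infinite horizon by saying the fixed point is "approximated arbitrarily closely" by the finite iterates and concludes that $V^{open}$ itself is piecewise linear and convex; as you note, uniform convergence under the $\gamma$-contraction preserves convexity but not piecewise linearity with finitely many pieces, since $|\Gamma_{n+1}| \leq |\mathcal{A}|\,|\Gamma_{n}|$ can grow without bound and the limit is in general only a supremum of infinitely many linear functions. Your split --- exact piecewise linearity for the finite-horizon iterates, convexity for the infinite-horizon limit --- is the technically correct version of the claim, and it is convexity (not piecewise linearity) that the paper actually uses downstream. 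Your alternative argument, collapsing policies to action sequences under the deterministic open-loop dynamics so that the discounted return $\sum_{t\geq 0}\gamma^{t}\langle T_{a_{t-1}}\cdots T_{a_{0}} b, R(\cdot, a_{t})\rangle$ is linear in the initial belief and $V^{open}$ is a supremum of linear functions, is a genuinely different route the paper does not take: it delivers convexity of the infinite-horizon value directly, with no limiting argument at all. The minor difference in base case ($V^{open}_{0} \equiv 0$ versus the paper's $Q^{open}_{k=0}(b,a) = \sum_{s}b(s)R(s,a)$) is immaterial, since both are piecewise linear and convex and the induction proceeds identically.
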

\begin{proof}
Recall the definition of the open-loop value function is:
\begin{align}
    Q^{open}(b, a) = \sum_{s}b(s)R(s, a) + \gamma V^{open}(b'(a, b)) \,.
\end{align}
Furthermore, it is a valid belief MDP given the deterministic transition of the belief state defined in (\ref{eq:efe_belief_mdp_dynamics}). 

Although this is an infinite horizon value function, due to the contraction mapping property of Bellman equation \citep{agarwal2019reinforcement}, it can be approximated arbitrarily close using a finite number of $K$ iterations starting from the base case $Q^{open}_{k=0}(b, a) = \sum_{s}b(s)R(s, a)$. It is clear the base case value function $V^{open}_{k=0}(b) = \max_{\tilde{a}}Q^{open}_{k=0}(b, \tilde{a})$ is piecewise linear and convex in $b$. 

For iteration $k \in \{1, ..., \infty\}$, we have:
\begin{align}
    Q^{open}_{k+1}(b, a) = \sum_{s}b(s)R(s, a) + \gamma \max_{a'}Q^{open}_{k}(b'(a, b), a') \,.
\end{align}
The belief update $b'(a, b) = \sum_{s}P(s'|s, a)b(s)$ is linear and convex in $b$, making the second term piecewise linear and convex. The first term is also linear and convex. The combination is thus piecewise linear and convex.
\end{proof}


\begin{proposition}\label{prop:evpo}
(EVPO non-negativity) Let the expected value of perfect observation for a single stage decision making problem with reward $R(s, a)$, prior belief $b(s)$ and marginal observation distribution $P(o) = \sum_{s}P(o|s)b(s)$ be defined as:
\begin{align}\label{eq:evpo}
\begin{split}
    EVPO &= EV|PO - EV \,,\\
    EV &= \max_{a} \sum_{s}b(s)R(s, a) \,,\\
    EV|PO &= \sum_{p}P(o)\max_{a}\sum_{s}b(s|o)R(s, a) \,.
\end{split}
\end{align}
It holds that $EVPO \geq 0$.
\end{proposition}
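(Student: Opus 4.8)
The plan is to show $EV|PO \geq EV$ by exploiting the fact that interchanging a maximization with an expectation can only increase the value, combined with the observation that the prior belief is the observation-averaged posterior. This is the classical "information never hurts" argument specialized to the single-stage POMDP decision problem.

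First I would establish the key identity $b(s) = \sum_{o}P(o)b(s|o)$. This follows directly from the stated definitions: since $b(s|o) = P(o|s)b(s)/P(o)$, we have $P(o)b(s|o) = P(o|s)b(s)$, and summing over $o$ together with $\sum_{o}P(o|s) = 1$ yields the claim. Intuitively, the posteriors $b(\cdot|o)$ average back to the prior under the marginal $P(o)$. Next I would use this identity to rewrite both quantities in a common form. Defining $f(o, a) := \sum_{s}b(s|o)R(s, a)$, the expected reward of action $a$ under the posterior induced by $o$, we immediately have $EV|PO = \sum_{o}P(o)\max_{a}f(o, a) = \mathbb{E}_{P(o)}[\max_{a}f(o, a)]$. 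For $EV$, substituting the identity and exchanging the order of summation gives $EV = \max_{a}\sum_{o}P(o)f(o, a) = \max_{a}\mathbb{E}_{P(o)}[f(o, a)]$.

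The final step is the elementary inequality $\max_{a}\mathbb{E}_{P(o)}[f(o, a)] \leq \mathbb{E}_{P(o)}[\max_{a}f(o, a)]$. This holds because for any fixed action $\tilde{a}$ we have $f(o, \tilde{a}) \leq \max_{a}f(o, a)$ pointwise in $o$, so $\mathbb{E}_{P(o)}[f(o, \tilde{a})] \leq \mathbb{E}_{P(o)}[\max_{a}f(o, a)]$; taking the maximum over $\tilde{a}$ on the left preserves the inequality since the right-hand side is independent of $\tilde{a}$. Combining the two displays gives $EV \leq EV|PO$, hence $EVPO = EV|PO - EV \geq 0$.

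I expect no serious obstacle here, as the result is the standard fact that an optimal decision maker cannot be harmed by additional (free) information. The only point requiring care is the Bayes identity relating the prior to the averaged posterior; once that is in place, the interchange-of-$\max$-and-expectation step is routine. I would additionally remark that equality holds precisely when a single action is simultaneously optimal for every posterior $b(\cdot|o)$ with $P(o) > 0$, i.e., when the observation carries no decision-relevant information, which motivates why $EVPO$ is a meaningful measure of the value of information.
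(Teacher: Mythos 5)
Your proposal is correct and is essentially the paper's own argument: both proofs rest on the pointwise fact that the posterior-optimal action weakly dominates any fixed action under each posterior $b(\cdot|o)$, together with the Bayes identity $\sum_{o}P(o)b(s|o) = b(s)$ to average back to the prior. Your packaging of the final step as the generic $\max_{a}\mathbb{E}_{P(o)}[f(o,a)] \leq \mathbb{E}_{P(o)}[\max_{a}f(o,a)]$ interchange is just a cleaner restatement of the paper's explicit comparison against $a^{*}$, so no substantive difference remains.
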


\begin{proof}
We wish to show:
\begin{align}
\sum_{o}P(o)\max_{a}\sum_{s}b(s|o)R(s, a) \geq \max_{a'}\sum_{s}b(s)R(s, a') \,.
\end{align}

Let use define $a^{*}(o) = \arg\max_{a}\sum_{s}b(s|o)R(s, a)$, and $a^{*} = \arg\max_{a}\sum_{s}b(s)R(s, a)$ so that we can write the LHS as $\sum_{o}P(o)\sum_{s}b(s|o)R(s, a^{*}(o))$ and the RHS as $\sum_{s}b(s)R(s, a^{*})$.

By definition, we have:
\begin{align}
\sum_{s}b(s|o)R(s, a^{*}(o)) &\geq \sum_{s}b(s|o)R(s, a^{*}) \,,
\end{align}
since $a^{*}(o)$ is the optimal action taking into consideration of $o$.

Applying expectation over $P(o)$ to the above inequality, we have:
\begin{align}
\begin{split}
\sum_{o}P(o)\sum_{s}b(s|o)R(s, a^{*}(o)) &\geq \sum_{o}P(o)\sum_{s}b(s|o)R(s, a^{*}) \\
&= \sum_{s}b(s)R(s, a^{*}) \,,
\end{split}
\end{align}
which completes the proof.
\end{proof}

\begin{proposition}\label{prop:evpo_upperbound}
(EVPO upper bound) Let $R_{max} = \max_{s, a}|R(s, a)|$. The expected value of perfect observation as defined in (\ref{eq:evpo}) is upper bounded as follows:
\begin{align}
    EVPO \leq R_{max}\sqrt{2 \mathbb{E}_{P(o)}[\mathbb{KL}[b(s|o) || b(s)]]} \,.
\end{align}
\end{proposition}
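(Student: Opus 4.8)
The plan is to bound the expected value of perfect observation ($EVPO = EV|PO - EV$) by expressing the difference between the clairvoyant and non-clairvoyant expected values as a weighted sum of reward differences, and then controlling that difference via the change in belief from prior $b(s)$ to posterior $b(s|o)$. The starting point is to write $EVPO = \sum_{o}P(o)\max_{a}\sum_{s}b(s|o)R(s,a) - \max_{a'}\sum_{s}b(s)R(s,a')$. I would introduce the clairvoyant-optimal action $a^{*}(o) = \arg\max_{a}\sum_{s}b(s|o)R(s,a)$ and the prior-optimal action $a^{*} = \arg\max_{a'}\sum_{s}b(s)R(s,a')$, consistent with the notation in Proposition \ref{prop:evpo}.

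The key move is to upper bound $EVPO$ by replacing the outer $\max$ in $EV$ with the specific action $a^{*}(o)$ on both sides, which only decreases $EV$ and hence increases the gap:
\begin{align}
EVPO &\leq \sum_{o}P(o)\sum_{s}\left(b(s|o) - b(s)\right)R(s, a^{*}(o)) \,.
\end{align}
This works because $\max_{a'}\sum_{s}b(s)R(s,a') \geq \sum_{s}b(s)R(s,a^{*}(o))$ for every $o$, so subtracting the larger prior value only makes the bound looser. Next I would bound the inner sum for each fixed $o$ using Hölder's inequality: $\sum_{s}(b(s|o)-b(s))R(s,a^{*}(o)) \leq R_{max}\Vert b(\cdot|o) - b(\cdot)\Vert_{1}$, and then apply Pinsker's inequality to get $\Vert b(\cdot|o) - b(\cdot)\Vert_{1} \leq \sqrt{2\,\mathbb{KL}[b(s|o) \| b(s)]}$. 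This is the same chain of $\ell_{1}$/KL inequalities used in the proof of Proposition \ref{prop:model_advantage_bound}.

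The remaining step is to take the expectation over $P(o)$ and push it inside the square root. After the per-$o$ bound we have $EVPO \leq R_{max}\,\mathbb{E}_{P(o)}\left[\sqrt{2\,\mathbb{KL}[b(s|o) \| b(s)]}\right]$, and I would apply Jensen's inequality to the concave function $\sqrt{\cdot}$ to move the expectation inside, yielding $R_{max}\sqrt{2\,\mathbb{E}_{P(o)}[\mathbb{KL}[b(s|o) \| b(s)]]}$, which is exactly the claimed bound. The main obstacle is getting the direction of the first relaxation right: one must verify that substituting $a^{*}(o)$ into the $EV$ term is a valid \emph{upper} bound on $EVPO$ (as opposed to the lower-bound argument used to prove non-negativity in Proposition \ref{prop:evpo}), since here we are over-counting the prior value rather than under-counting the posterior value. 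Once that sign is settled, the Hölder--Pinsker--Jensen chain is routine.
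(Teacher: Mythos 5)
Your proposal is correct and takes essentially the same route as the paper's proof: fix the clairvoyant-optimal action $a^{*}(o)$, use its suboptimality under the prior belief $b(s)$ to replace both maxima and obtain $EVPO \leq \mathbb{E}_{P(o)}\left[\sum_{s}R(s, a^{*}(o))\left(b(s|o) - b(s)\right)\right]$, then chain H\"older's and Pinsker's inequalities with Jensen's inequality for the concave square root. The only cosmetic difference is that the paper compresses the Pinsker--Jensen step into a single labeled inequality, whereas you spell out the final Jensen step explicitly.
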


\begin{proof}
Recall the definition of EVPO is:
\begin{align}
\begin{split}
    EVPO &= \mathbb{E}_{P(o)}[V(b(s|o))] - V(b(s)) \\
    &= \mathbb{E}_{P(o)}\left[\max_{a(o)}\sum_{s}b(s|o)R(s, a(o))\right] - \max_{a}\sum_{s}b(s)R(s, a) \\
    &\leq \mathbb{E}_{P(o)}\left[\sum_{s}b(s|o)R(s, a^{*}(o))\right] - \sum_{s}b(s)R(s, a^{*}(o)) \\
    &= \mathbb{E}_{P(o)}\left[\sum_{s}R(s, a^{*}(o))\left(b(s|o) - b(s)\right)\right] \,,
\end{split}
\end{align}
where we have used $a^{*}(o) = \arg\max_{a(o)}\sum_{s}b(s|o)R(s, a(o))$ and the inequality is due to $a^{*}(o)$ being suboptimal for the second term.

Taking the absolute value of the above EVPO bound, we have:
\begin{align}
\begin{split}
    |EVPO| &= \left|\mathbb{E}_{P(o)}\left[\sum_{s}R(s, a^{*}(o))\left(b(s|o) - b(s)\right)\right]\right| \\
    &\overset{(1)}{\leq} \mathbb{E}_{P(o)}\left[\left|\sum_{s}R(s, a^{*}(o))\left(b(s|o) - b(s)\right)\right|\right] \\
    &\overset{(2)}{\leq} \mathbb{E}_{P(o)}\left[\sum_{s}|R(s, a^{*}(o))|\left|b(s|o) - b(s)\right|\right] \\
    &\overset{(3)}{\leq} \Vert R(\cdot, \cdot)\Vert_{\infty}\mathbb{E}_{P(o)}\left[\Vert b(s|o) - b(s)\Vert_{1}\right] \\
    &\overset{(4)}{\leq} R_{max}\sqrt{2\mathbb{E}_{P(o)}[\mathbb{KL}[b(s|o) || b(s)]]}
\end{split}
\end{align}
where (1) and (2) are due to Jensen's inequality, (3) is due to Holder's inequality, and (4) is due to Pinsker's inequality.
\end{proof}

\subsubsection{Main Results of Section \ref{sec:voi_rl}}

\begin{proposition}
(EVPO-POMDP non-negativity; restate of proposition \ref{prop:evpo_pomdp_nonnegative}) Let $Q^{open}(b, a), V^{open}(b)$ and $Q(b, a), V(b)$ denote the open and closed-loop value functions as defined in (\ref{eq:belief_mdp_open_close}), it holds that:
\begin{align}
    Q(b, a) \geq Q^{open}(b, a) \text{ and } V(b) \geq V^{open}(b) \text{ for all $b \in \Delta(\mathcal{S})$ and $a \in \mathcal{A}$} \,.
\end{align}
\end{proposition}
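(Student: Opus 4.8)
The plan is to prove both inequalities simultaneously by induction on the value-iteration iterates of the two Bellman operators, leveraging the convexity of the open-loop value function established in the open-loop convexity proposition together with a key averaging identity relating the Bayesian posteriors to the open-loop propagated belief.

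First I would establish the averaging identity. Using the belief update (\ref{eq:belief_update}) and the fact that its denominator is exactly $P(o'|b,a)$, the posteriors average under the observation marginal to the open-loop belief:
\begin{align}
\sum_{o'}P(o'|b,a)\,b'(s'|o',a,b) = \sum_{o'}P(o'|s')\sum_{s}P(s'|s,a)b(s) = \sum_{s}P(s'|s,a)b(s) = b'(s'|a,b),
\end{align}
where the middle equality uses $\sum_{o'}P(o'|s')=1$. Conceptually, ignoring future observations amounts to replacing the distribution of posteriors by its mean.

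Next I would run value iteration for both operators from the common base case $V_{0}(b)=V^{open}_{0}(b)=\max_{a}\sum_{s}b(s)R(s,a)$ and show by induction that $V_{k}(b)\geq V^{open}_{k}(b)$ for all $k$. The base case is an equality. For the inductive step I fix an action $a$, apply the hypothesis $V_{k}\geq V^{open}_{k}$ inside the closed-loop backup (\ref{eq:belief_mdp_close}), then invoke Jensen's inequality using convexity of $V^{open}_{k}$ (every iterate is piecewise linear and convex by the open-loop convexity proposition) together with the averaging identity:
\begin{align}
\sum_{o'}P(o'|b,a)V^{open}_{k}(b'(o',a,b)) \geq V^{open}_{k}\Big(\sum_{o'}P(o'|b,a)b'(o',a,b)\Big) = V^{open}_{k}(b'(a,b)).
\end{align}
This gives $Q_{k+1}(b,a)\geq Q^{open}_{k+1}(b,a)$ for every $a$, and taking the maximum over $a$ yields $V_{k+1}(b)\geq V^{open}_{k+1}(b)$.

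Finally I would pass to the limit $k\to\infty$: since both Bellman operators are $\gamma$-contractions in the sup norm, the iterates converge to the fixed points $V$ and $V^{open}$, the inequality survives, and feeding $V\geq V^{open}$ back into the one-step backups gives $Q(b,a)\geq Q^{open}(b,a)$. The main obstacle is the Jensen step: convexity of the open-loop value function is exactly what makes it work, and one must be careful to apply convexity of the iterates $V^{open}_{k}$ rather than only of the limit, so the induction is self-contained. The averaging identity, though a short computation, is the conceptual crux, since a convex value function can only undervalue a mean relative to the average of the corresponding posteriors.
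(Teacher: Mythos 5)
Your proof is correct, and while it shares the paper's overall scaffolding (value iteration from the common base case $V_0(b)=V^{open}_0(b)=\max_a\sum_s b(s)R(s,a)$, induction on iterates, passage to the fixed points via $\gamma$-contraction), the crux of your inductive step is genuinely different. The paper reduces each level to the single-stage EVPO non-negativity result (its Proposition on EVPO), proven by an action-substitution argument: the prior-optimal action is a feasible competitor for each posterior, so averaging posterior-optimal values dominates the prior-optimal value; at depth $k=2$ the paper then has to nest this substitution inside the continuation values (choosing $a^{open*}$ within the closed-loop maximization), which is the bookkeeping-heavy part, and it never actually invokes the open-loop convexity proposition it proves separately. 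You instead combine the inductive hypothesis $V_k \geq V^{open}_k$ with Jensen's inequality applied to the convex iterate $V^{open}_k$ and the averaging identity $\sum_{o'}P(o'|b,a)\,b'(\cdot|o',a,b)=b'(\cdot|a,b)$, collapsing the inductive step to two lines. The two mechanisms are closely related under the hood — the paper's action-substitution is exactly the subgradient (supporting-hyperplane) proof of Jensen specialized to the piecewise-linear convex base-case value function, and both proofs rely on the same averaging identity — but your organization is more modular: it puts the burden on the convexity proposition once and for all, avoids the nested action substitutions, and makes the conceptual content explicit (ignoring future observations means replacing the distribution of posteriors by its mean, and convexity is precisely why that can only lose value). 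What the paper's route buys in exchange is that it exhibits the EVPO quantity itself at every stage of the recursion, which fits its value-of-information narrative, and it is self-contained without any appeal to convexity.
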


\begin{proof}
Recall the open and closed-loop value functions are defined as:
\begin{align}
\begin{split}
    Q^{open}(b, a) &= \sum_{s}b(s)R(s, a) + \gamma V^{open}(b'(a, b)), \quad V^{open}(b) = \max_{a}Q^{open}(b, a) \,, \\
    Q(b, a) &= \sum_{s}b(s)R(s, a) + \gamma \sum_{o'}P(o'|b, a)V(b'(o', a, b)), \quad V(b) = \max_{a}Q(b, a) \,.
\end{split}
\end{align}
Although these are infinite horizon value functions, again due to their contraction mapping property \citep{agarwal2019reinforcement}, they can be approximated arbitrarily close using a finite number of 
$K$ iterations starting from the base case $Q_{k=0}(b, a) = \sum_{s}b(s)R(s, a)$. 

Starting with $k=1$, we have:
\begin{align}
\begin{split}
    Q^{open}_{1}(b, a)&= \sum_{s}b(s)R(s, a) + \gamma V^{open}_{0}(b'(a, b)), \quad V^{open}_{0}(b) = \max_{a}\sum_{s}b(s)R(s, a) \,, \\
    Q_{1}(b, a) &= \sum_{s}b(s)R(s, a) + \gamma \sum_{o'}P(o'|b, a)V_{0}(b'(o', a, b)), \quad V_{0}(b) = \max_{a}\sum_{s}b(s)R(s, a) \,.
\end{split}
\end{align}

Taking the difference between the two value functions and multiply by $\frac{1}{\gamma}$, we have:
\begin{align}\label{eq:voi_proof_ev_diff}
\begin{split}
    &\frac{1}{\gamma}\left[Q_{1}(b, a) - Q^{open}_{1}(b, a) \right] \\
    &= \sum_{o'}P(o'|b, a)V_{0}(b'(o', a, b)) - V^{open}_{0}(b'(a, b)) \\
    &= \sum_{o'}P(o'|b, a)\max_{a^{close}}\sum_{s}b'(s'|o', a, b)R(s', a^{close}) - \max_{a^{open}}\sum_{s}b'(s'|a, b)R(s', a^{open}) \\
    &= EVPO \geq 0 \,,
\end{split}
\end{align}
where the second to last line equals EVPO in proposition \ref{prop:evpo} under prior belief $b'(s'|b, a)$ for all $b \in \Delta(s), a \in \mathcal{A}$. Thus it must be non-negative. 

Applying the above to the value functions at $k=1$, we have:
\begin{align}
\begin{split}
    V_{1}(b) - V^{open}_{1}(b) &= \max_{a^{close}}Q_{1}(b, a^{close}) - \max_{a^{open}}Q^{open}_{1}(b, a^{open}) \\
    &\geq Q_{1}(b, a^{open*}) - Q^{open}_{1}(b, a^{open*}) \\
    &\geq 0 \,,
\end{split}
\end{align}
where we have defined $a^{open*} = \arg\max_{a^{open}}Q^{open}_{1}(b, a^{open})$.

Now consider $k=2$, where
\begin{align}
\begin{split}
    Q^{open}_{2}(b, a)&= \sum_{s}b(s)R(s, a) + \gamma V^{open}_{1}(b'(a, b)), \quad V^{open}_{1}(b) = \max_{a}Q^{open}_{1}(s, a) \,, \\
    Q_{1}(b, a) &= \sum_{s}b(s)R(s, a) + \gamma \sum_{o'}P(o'|b, a)V_{1}(b'(o', a, b)), \quad V_{1}(b) = \max_{a}Q_{1}(s, a) \,.
\end{split}
\end{align}

Taking the difference between the two value functions again, we have:
\begin{align}
\begin{split}
    &\frac{1}{\gamma}\left[Q_{2}(b, a) - Q^{open}_{2}(b, a) \right] \\
    &= \sum_{o'}P(o'|b, a)V_{1}(b'(o, a, b)) - V^{open}_{1}(b'(a, b)) \\
    &= \sum_{o'}P(o'|b, a)\max_{a^{'close}}\left\{\sum_{s}b'(s|o', a, b)R(s, a^{'close}) + \sum_{o''}P(o''|b', a^{'close})V_{0}(b''(o'', a^{'close}, b'))\right\} \\
    &\quad - \max_{a^{'open}}\left\{\sum_{s}b'(s|a, b)R(s, a^{'open}) + V^{open}_{0}(b''(a^{'open}, b'))\right\} \,.
\end{split}
\end{align}

Let $a^{'open*} = \arg\max_{a^{'open}}\left\{\sum_{s}b'(s|a, b)R(s, a^{'open}) + V^{open}_{0}(b''(a^{'open}, b'))\right\}$ and $a^{'close*} = \arg\max_{a^{'close}}\left\{\sum_{s}b'(s|o', a, b)R(s, a^{'close}) + \sum_{o''}P(o''|b', a^{'close})V_{0}(b''(o'', a^{'close}, b'))\right\}$, we have:
\begin{align}
\begin{split}
    &\sum_{o'}P(o'|b, a)\max_{a^{'close}}\left\{\sum_{s}b'(s|o', a, b)R(s, a^{'close}) + \sum_{o''}P(o''|b', a^{'close})V_{0}(b''(o'', a^{'close}, b'))\right\} \\
    &\quad - \max_{a^{'open}}\left\{\sum_{s}b'(s|a, b)R(s, a^{'open}) + V^{open}_{0}(b''(a^{'open}, b'))\right\} \\
    &\geq \sum_{o'}P(o'|b, a)\max_{a^{'close}}\left\{\sum_{s}b'(s|o', a, b)R(s, a^{'close}) + \sum_{o''}P(o''|b', a^{'open*})V_{0}(b''(o'', a^{'open*}, b'))\right\} \\
    &\quad - \left\{\sum_{s}b'(s|a, b)R(s, a^{'open*}) + V^{open}_{0}(b''(a^{'open*}, b'))\right\} \\
    &= \underbrace{\sum_{o'}P(o'|b, a)\left\{\max_{a^{'close}}\sum_{s}b'(s|o', a, b)R(s, a^{'close}) - \sum_{s}b'(s|a, b)R(s, a^{'open*})\right\}}_{EVPO \geq 0} \\ 
    &\quad + \sum_{o'}P(o'|b, a)\underbrace{\left\{\sum_{o''}P(o''|b', a^{'open*})V_{0}(b''(o'', a^{'open*}, b')) - V^{open}_{0}(b''(a^{'open*}, b'))\right\}}_{\geq 0 \text{ due to (\ref{eq:voi_proof_ev_diff})}} \\
    &\geq 0 \,.
\end{split}
\end{align}

Applying the above to $k \in \{1, ..., \infty\}$ recursively, we have:
\begin{align}
    Q(b, a) \geq Q^{open}(b, a) \text{ and } V(b) \geq V^{open}(b) \,.
\end{align}
\end{proof}

\begin{proposition}\label{prop:closed_loop_model_advantage_appx}
(Closed-loop model advantage upper bound) Let $R_{max} = \max_{s, a}|R(s, a)|$. The closed-loop model advantage is upper bounded as follows:
\begin{align}
\begin{split}
    &\mathbb{E}_{P(b'|b, a)}[V^{open}(b')] - \mathbb{E}_{P^{open}(b''|b, a)}[V^{open}(b'')] \leq \frac{R_{max}}{1 - \gamma}\sqrt{2 IG(b, a)} \,.\\
\end{split}
\end{align}
\end{proposition}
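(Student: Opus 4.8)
The plan is to make explicit what the two expectations are and recognize the quantity as a Jensen gap for a convex function. Under the closed-loop dynamics $P(b'|b,a) = P(o'|b,a)\delta(b'-\tilde{b}'(o',a,b))$ from (\ref{eq:pomdp_belief_mdp}), the first term is $\mathbb{E}_{P(b'|b,a)}[V^{open}(b')] = \sum_{o'}P(o'|b,a)V^{open}(b'(o',a,b))$, while under the deterministic open-loop dynamics (\ref{eq:efe_belief_mdp_dynamics}) the second term collapses to $V^{open}(b'(a,b))$. The key identity I would establish first is that the open-loop (marginal) belief is exactly the $P(o'|b,a)$-mixture of the closed-loop posteriors, i.e. $b'(a,b) = \sum_{o'}P(o'|b,a)\,b'(o',a,b)$; this follows by substituting the belief update (\ref{eq:belief_update}) and cancelling the normalizer $P(o'|b,a)$. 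Consequently the model advantage is precisely $\sum_{o'}P(o'|b,a)V^{open}(b'(o',a,b)) - V^{open}\big(\sum_{o'}P(o'|b,a)b'(o',a,b)\big)$, the Jensen gap of $V^{open}$. Since $V^{open}$ is piecewise linear and convex in the belief (by the earlier convexity proposition), non-negativity (the lower bound) is immediate from Jensen's inequality, so the work is entirely in the upper bound.

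For the upper bound I would exploit the piecewise-linear-convex representation $V^{open}(b) = \max_i \langle \alpha_i, b\rangle$. For each $o'$, let $\alpha_{o'}$ be a maximizing $\alpha$-vector at the posterior, so $V^{open}(b'(o',a,b)) = \langle \alpha_{o'}, b'(o',a,b)\rangle$, and by the $\max$ representation $V^{open}(b'(a,b)) \geq \langle \alpha_{o'}, b'(a,b)\rangle$ for every $o'$. Averaging the latter against $P(o'|b,a)$ and using the mixture identity gives $V^{open}(b'(a,b)) \geq \sum_{o'}P(o'|b,a)\langle \alpha_{o'}, b'(a,b)\rangle$, so the Jensen gap is bounded above by $\sum_{o'}P(o'|b,a)\,\langle \alpha_{o'},\, b'(o',a,b) - b'(a,b)\rangle$. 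This is exactly the structure appearing in the proof of Proposition \ref{prop:evpo_upperbound}, with the convex function $V^{open}$ playing the role of the one-step reward, so the remaining estimates mirror that argument.

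To finish, I would apply Hölder's inequality to each inner product, using the standard value bound $\Vert \alpha_{o'} \Vert_\infty \leq \frac{R_{max}}{1-\gamma}$ on the $\alpha$-vectors, then Pinsker's inequality $\Vert b'(o',a,b) - b'(a,b) \Vert_1 \leq \sqrt{2\,\mathbb{KL}[b'(o',a,b)\,||\,b'(a,b)]}$, and finally Jensen's inequality for the concave square root to pull the expectation over $o'$ inside the radical, yielding $\frac{R_{max}}{1-\gamma}\sqrt{2\sum_{o'}P(o'|b,a)\mathbb{KL}[b'(o',a,b)||b'(a,b)]} = \frac{R_{max}}{1-\gamma}\sqrt{2\,IG(b,a)}$ by the definition of $IG$ in (\ref{eq:efe_belief_mdp_rwd}). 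The main obstacle I anticipate is bookkeeping around the $\max$: one must select the $\alpha$-vector at the posterior (not at the marginal) and use the correct direction of the $\max$ inequality so that the gap is bounded \emph{above} by a belief-difference term, together with justifying the sup-norm bound $R_{max}/(1-\gamma)$ on $\alpha$-vectors since $V^{open}$ is a discounted value function rather than an immediate reward.
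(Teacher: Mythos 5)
Your proof is correct, but it takes a genuinely different route from the paper's. The paper proves the bound by recursion over value-iteration approximations $V^{open}_{k}$: each stage is controlled by the one-step EVPO upper bound (Proposition \ref{prop:evpo_upperbound}), the KL divergence of the pushed-forward beliefs at later stages is related back to $IG(b,a)$ via the data processing inequality, and the horizon factor $\frac{1}{1-\gamma}$ emerges from summing the geometric series $\sum_{t}\gamma^{t}$. You instead give a one-shot structural argument: the mixture (martingale) identity $b'(a,b)=\sum_{o'}P(o'|b,a)\,b'(o',a,b)$ exhibits the model advantage as a Jensen gap of the convex function $V^{open}$, and a single H\"older--Pinsker--Jensen chain applied to the supporting $\alpha$-vectors finishes the job, with the horizon factor entering through the sup-norm bound $\Vert\alpha\Vert_{\infty}\le R_{max}/(1-\gamma)$ rather than through a series. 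Both are valid, and the technicality you flag about that sup-norm bound is real but standard; it is cleanest to discharge by noting that the open-loop value function is exactly a supremum of linear functionals indexed by action sequences, $V^{open}(b)=\sup_{a_{0:\infty}}\langle\alpha_{a_{0:\infty}},b\rangle$, where $\alpha_{a_{0:\infty}}(s)$ is the discounted return of the fixed sequence started at $s$, so every $\alpha$-vector obeys the bound (alternatively, prove the bound for each finite-$k$ iterate, where the max is over finitely many vectors, and pass to the uniform limit). What your approach buys is transparency and brevity: no recursion, no data processing inequality, and a clear identification of why observations have value (convexity of $V^{open}$ plus the martingale property of Bayesian beliefs). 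What the paper's stage-wise recursion buys is reusability: essentially the same argument is repeated for the EFE value function in Proposition \ref{prop:efe_closed_loop_model_advantage_appx}, where the reward contains the concave $IG$ term and the value function is no longer convex in the belief, so your Jensen-gap argument would not carry over to that case.
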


\begin{proof}
Recall the closed-loop model advantage is defined as:
\begin{align}
\begin{split}
    &\mathbb{E}_{P(b'|b, a)}[V(b')] - \mathbb{E}_{P^{open}(b''|b, a)}[V(b'')] = \mathbb{E}_{P(o'|b, a)}[V(b'(s'|o', b, a))] - V(b'(s'))
\end{split}
\end{align}
To simplify notation, we will drop the conditioning on $b, a$ in the expectation. This also enables us to remove the "$'$" notation.

We will use a similar method as before where we leverage the contraction mapping property of the value function and start from the base case. It is clear for the base case $k=0$ where $V(b) = \max_{a}\sum_{s}b(s)R(s, a)$, the model advantage is EVPO and thus the upper bound from proposition \ref{prop:evpo_upperbound} applies. To simplify notation, let's denote the upper bound as $C(b)$ since $b(s|o)$ can be calculated from $b(s)$

We now consider $k=1$:
\begin{align}
\begin{split}
    &\mathbb{E}_{P(o)}[V_{1}(b(s|o))] - V_{1}(b(s)) \\
    &= \mathbb{E}_{P(o)}\left[\max_{a^{close}}\sum_{s}b(s|o)R(s, a^{close}) + \gamma V_{0}(b'(a^{close}, b(s|o)))\right] \\
    &\quad - \left[\max_{a^{open}}\sum_{s}b(s)R(s, a^{open}) + \gamma V_{0}(b'(a^{open}, b(s)))\right] \\
    &\leq \mathbb{E}_{P(o)}\left[\sum_{s}b(s|o)R(s, a^{close*}) + \gamma V_{0}(b'(a^{close*}, b(s|o)))\right] \\
    &\quad - \left[\sum_{s}b(s)R(s, a^{close*}) + \gamma V_{0}(b'(a^{close*}, b(s)))\right] \\
    &= \underbrace{\mathbb{E}_{P(o)}\left[\sum_{s}b(s|o)R(s, a^{close*}) - \sum_{s}b(s)R(s, a^{close*})\right]}_{\text{term a}} \\
    &\quad + \gamma \underbrace{\mathbb{E}_{P(o)}\left[V_{0}(b'(a^{close*}, b(s|o))) - V_{0}(b'(a^{close*}, b(s)))\right]}_{\text{term b}} \,.
\end{split}
\end{align}

Term a is the same as the one in EVPO, thus the upper bound $C(b)$ applies again. In term b, recall the open-loop belief updates are defined as:
\begin{align}
\begin{split}
    b'(a, b(s|o)) &= \sum_{s}P(s'|s, a)b(s|o) := b'(s'|o) \,, \\
    b'(a, b(s)) &= \sum_{s}P(s'|s, a)b(s) := b'(s') \,.
\end{split}
\end{align}
Due to the convexity of the value functions,  we have $\text{term b} \geq 0$. Furthermore, term b corresponds to EVPO for stage 0 with modified belief updates as defined above. Thus $C(b')$ applies again.

Combining both, we have:
\begin{align}
\begin{split}
    &\mathbb{E}_{P(o)}[V_{1}(b(s|o))] - V_{1}(b(s)) \\
    &\leq R_{max}\sqrt{2 \mathbb{E}_{P(o)}[\mathbb{KL}[b(s|o) || b(s)]]} + \gamma R_{max}\sqrt{2 \mathbb{E}_{P(o)}[\mathbb{KL}[b'(s'|o, a^{close*}) || b'(s')]]} \\
    &\leq R_{max}\sqrt{2 \mathbb{E}_{P(o)}[\mathbb{KL}[b(s|o) || b(s)]]} + \gamma R_{max}\sqrt{2 \mathbb{E}_{P(o)}[\mathbb{KL}[b(s|o) || b(s)]]} \,,
\end{split}
\end{align}
where the second inequality is due to data processing inequality.

Applying the above to $k \in \{2, ..., \infty\}$ recursively, we have:
\begin{align}
\begin{split}
    \mathbb{E}_{P(o'|b, a)}[V(b'(s'|o'))] - V(b'(s')) 
    &\leq R_{max}\sum_{t=0}^{\infty}\gamma^{t}\sqrt{2 \mathbb{E}_{P(o'|b, a)}[\mathbb{KL}[b'(s'|o') || b'(s')]]} \\
    &= \frac{R_{max}}{1 - \gamma}\sqrt{2 \mathbb{E}_{P(o'|b, a)}[\mathbb{KL}[b'(s'|o') || b'(s')]]} \,.
\end{split}
\end{align}
\end{proof}

\subsection{Proofs for Section \ref{sec:voi_efe}}

\begin{proposition}
\label{prop:evpo_upperbound_efe}
(EFE EVPO upper bound) Let $R_{max} = \max_{s, a}|R(s, a)|$. The expected value of perfect observation as defined in (\ref{eq:evpo}) is upper bounded as follows:
\begin{align}
    EVPO^{EFE} \leq \tilde{R}_{max}\sqrt{2 \mathbb{E}_{P(o)}[\mathbb{KL}[b(s|o) || b(s)]]} \,.
\end{align}
\end{proposition}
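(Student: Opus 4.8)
The plan is to mirror the proof of Proposition~\ref{prop:evpo_upperbound}, but with the single-stage EFE reward $R^{EFE}(b,a) = \tilde{R}(b,a) + IG(b,a)$ from (\ref{eq:efe_belief_mdp_rwd}) playing the role of the reward in (\ref{eq:evpo}), and then to show that the epistemic (information gain) part contributes nothing to the final bound. First I would write $EVPO^{EFE} = \mathbb{E}_{P(o)}[\max_a R^{EFE}(b(s|o),a)] - \max_a R^{EFE}(b,a)$, set $a^*(o) = \arg\max_a R^{EFE}(b(s|o),a)$, and lower-bound the subtracted term by $\mathbb{E}_{P(o)}[R^{EFE}(b,a^*(o))]$, since $a^*(o)$ is feasible but generally suboptimal for the prior belief $b$. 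As in Proposition~\ref{prop:evpo_upperbound}, this gives $EVPO^{EFE} \leq \mathbb{E}_{P(o)}[R^{EFE}(b(s|o),a^*(o)) - R^{EFE}(b,a^*(o))]$.

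Next I would split this difference into pragmatic and epistemic parts, using the linearity of $\tilde{R}$ established in (\ref{eq:prag_value_linear}):
\begin{align}
EVPO^{EFE} \leq \mathbb{E}_{P(o)}\Big[\sum_s \big(b(s|o)-b(s)\big)\tilde{R}(s,a^*(o))\Big] + \mathbb{E}_{P(o)}\big[IG(b(s|o),a^*(o)) - IG(b,a^*(o))\big] \,.
\end{align}
For the pragmatic term I would run exactly the Jensen--H\"older--Pinsker chain of Proposition~\ref{prop:evpo_upperbound}: move the absolute value inside the expectation, bound $|\tilde{R}(s,a^*(o))| \leq \tilde{R}_{max}$ by H\"older, recognize $\sum_s |b(s|o)-b(s)| = \Vert b(s|o)-b(s)\Vert_1$, apply Pinsker to obtain $\sqrt{2\,\mathbb{KL}[b(s|o)\Vert b(s)]}$, and finally pull the expectation over $o$ inside the square root by concavity of $\sqrt{\cdot}$. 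This reproduces the target $\tilde{R}_{max}\sqrt{2\,\mathbb{E}_{P(o)}[\mathbb{KL}[b(s|o)\Vert b(s)]]}$, so it remains only to discard the epistemic term.

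The crux, and the step I expect to be the main obstacle, is showing that the epistemic term is non-positive. The two supporting facts are the Bayesian martingale identity $\sum_o P(o)\,b(s|o) = b(s)$ (the posterior averages to the prior) and the concavity of $IG(\cdot,a)$ in the belief, which follows from Proposition~\ref{eq:prop_efe_reward_concavity} together with the linearity of $\tilde{R}$, since concave minus linear is concave. For a \emph{fixed} action these combine, via Jensen, into $\mathbb{E}_{P(o)}[IG(b(s|o),a)] \leq IG(b,a)$, i.e.\ resolving uncertainty before acting cannot in expectation raise future information gain. The difficulty is the action dependence $a^*(o)$: Jensen needs a single fixed concave function, whereas the maximizer varies with $o$, and the clean fixed-action inequality need not transfer when the action is selected adaptively (indeed it can fail pointwise, holding only after averaging over $o$). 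I would address this either by carrying a common action through both the posterior and prior terms inside the value-iteration recursion used in Proposition~\ref{prop:closed_loop_model_advantage_appx}, so each comparison is posterior-versus-its-own-prior under one action, or, if no pointwise argument is available, by appealing to Assumption~\ref{assumption:reward_specification}, which guarantees precisely that the expected epistemic loss is dominated by the pragmatic gain under the relevant distribution. Either route lets me drop the epistemic term and conclude with the pragmatic bound above.
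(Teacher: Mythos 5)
Your proposal mirrors the paper's own proof step for step: the same substitution of $a^*(o)$, the same lower bound $\max_a R^{EFE}(b,a) \geq \mathbb{E}_{P(o)}[R^{EFE}(b,a^*(o))]$, the same pragmatic/epistemic split, and the same Jensen--H\"older--Pinsker chain from Proposition~\ref{prop:evpo_upperbound} for the pragmatic part (you even repair a slip in the paper, which defines $a^*(o)$ as the maximizer of the pragmatic value only; the first inequality requires the maximizer of the \emph{full} EFE reward, as you take it). The step you call the crux is exactly where the paper's proof is weakest: the paper discards $\mathbb{E}_{P(o)}[IG(b(s|o),a^*(o)) - IG(b(s),a^*(o))]$ with the bare remark that $IG$ is concave in the belief, i.e.\ it applies the fixed-action Jensen/martingale argument even though $a^*(o)$ varies with $o$. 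Your suspicion that this transfer can fail is correct, and the failure is not hypothetical. Take two hidden states with belief $b=(p,1-p)$; let action $a_1$ send state $1$ to two perfectly distinguishable successor states and state $2$ to a maximally ambiguous one, and let $a_2$ be the mirror image. Then $IG(b,a_1)=p\log 2$ and $IG(b,a_2)=(1-p)\log 2$: both linear, hence concave, and realized by a legitimate POMDP (the predicted observation marginal stays uniform, so only the ambiguity term varies). With prior $(0.5,0.5)$ and posteriors $(0.9,0.1)$ and $(0.1,0.9)$ each with probability $1/2$, the adaptive maximizer picks $a_1$ after one observation and $a_2$ after the other, giving $\mathbb{E}_{P(o)}[IG(b(s|o),a^*(o)) - IG(b(s),a^*(o))] = 0.4\log 2 > 0$; and if moreover $\tilde{R}\equiv 0$, then $EVPO^{EFE} = 0.4\log 2$ while the claimed bound is $0$. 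So the epistemic term genuinely cannot be dropped by concavity alone.

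Unfortunately, neither of your two escape routes closes this gap. Route (a) is not available: the posterior term carries its own $o$-dependent maximizer by the very definition of $EVPO^{EFE}$, and the recursion in Proposition~\ref{prop:closed_loop_model_advantage_appx} has the identical structure (its comparison action $a^{close*}$ also depends on the observation), so there is no common action to carry; inserting a fixed action into the prior term alone destroys the telescoping of the pragmatic part into $\sum_s (b(s|o)-b(s))\tilde{R}(s,a)$ while gaining nothing on the epistemic side. Route (b) fails for two reasons: Assumption~\ref{assumption:reward_specification} is an inequality in the \emph{opposite} direction --- it lower-bounds the pragmatic gain by the epistemic loss, which the paper uses to make the closed-loop model advantage non-negative, whereas here you need an upper bound on the epistemic gain --- and it holds only in expectation under $d^{\pi}_{P}$, so it cannot be invoked at the arbitrary belief--action pair at which $EVPO^{EFE}$ is evaluated. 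The bottom line: your write-up is incomplete at precisely the step where the paper's own proof is, the step fails as stated, and repairing it requires an extra hypothesis (e.g., a common maximizer across prior and posteriors, or a pointwise analogue of Assumption~\ref{assumption:reward_specification}); to your credit, you flagged the obstacle rather than asserting it away.
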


\begin{proof}
Recall the one-step EFE belief reward is:
\begin{align}
    R(b, a) = \sum_{s}b(s)R(s, a) + IG(b, a) \,,
\end{align}
where the reward is defined as $R(s, a) := \tilde{R}(s, a)$ in (\ref{eq:efe_belief_mdp_rwd}) and $IG(b, a)$ is the information gain.

We can thus write EVPO as:
\begin{align}
\begin{split}
    &EVPO \\
    &= \mathbb{E}_{P(o)}\left[\max_{a(o)}\sum_{s}b(s|o)R(s, a(o)) + IG(b(s|o), a(o))\right] - \max_{a}\left[\sum_{s}b(s)R(s, a) + IG(b(s), a)\right] \\
    &\leq \mathbb{E}_{P(o)}\left[\sum_{s}b(s|o)R(s, a^{*}(o)) + IG(b(s|o), a^{*}(o))\right] - \left[\sum_{s}b(s)R(s, a^{*}(o)) + IG(b(s), a^{*}(o))\right] \\
    &= \mathbb{E}_{P(o)}\left[\sum_{s}R(s, a^{*}(o))\left(b(s|o) - b(s)\right)\right] + \underbrace{\mathbb{E}_{P(o)}[IG(b(s|o), a^{*}(o)) - IG(b(s), a^{*}(o))]}_{\leq 0} \\
    &\leq \mathbb{E}_{P(o)}\left[\sum_{s}R(s, a^{*}(o))\left(b(s|o) - b(s)\right)\right] \,,
\end{split}
\end{align}
where we have used $a^{*}(o) = \arg\max_{a(o)}\sum_{s}b(s|o)R(s, a(o))$ and the last inequality is due to $IG$ being a concave function of beliefs. The remaining term is the same as the one in proposition \ref{prop:evpo_upperbound}. Thus, applying the result from proposition \ref{prop:evpo_upperbound} we complete the proof.
\end{proof}

\begin{proposition}\label{prop:efe_closed_loop_model_advantage_appx}
(EFE closed-loop model advantage upper bound) Let $R_{max} = \max_{s, a}|R(s, a)|$. The closed-loop model advantage under the EFE value function is upper bounded as follows:
\begin{align}
\begin{split}
    &\mathbb{E}_{P(b'|b, a)}[V^{EFE}(b')] - \mathbb{E}_{P^{open}(b''|b, a)}[V^{EFE}(b'')] \leq \frac{R_{max}}{1 - \gamma}\sqrt{2 IG(b, a)}\\
\end{split}
\end{align}
\end{proposition}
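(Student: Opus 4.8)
The plan is to replicate, almost verbatim, the inductive argument used for the non-EFE closed-loop model advantage in proposition \ref{prop:closed_loop_model_advantage_appx}, replacing the value function $V^{open}$ with the EFE value function $V^{EFE}$ whose per-step reward is $R^{EFE}(b,a) = \sum_s b(s) R(s,a) + IG(b,a)$, and substituting the EFE version of the single-stage bound (proposition \ref{prop:evpo_upperbound_efe}) wherever the standard bound (proposition \ref{prop:evpo_upperbound}) was invoked. As before, I would first drop the conditioning on $(b,a)$ and write the quantity of interest as $\mathbb{E}_{P(o)}[V^{EFE}(b(s|o))] - V^{EFE}(b(s))$, where $b(s)$ denotes the open-loop propagated belief and $b(s|o)$ the belief that additionally incorporates observation $o$, so that $IG(b,a) = \mathbb{E}_{P(o)}[\mathbb{KL}[b(s|o) || b(s)]]$ and $\mathbb{E}_{P(o)}[b(s|o)] = b(s)$. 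I would then exploit the contraction property of the Bellman operator to work with the finite-iteration value functions $V_k^{EFE}$ and argue by induction on $k$.

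For the base case $k=0$, the quantity is exactly $EVPO^{EFE}$ evaluated at prior $b(s)$, so proposition \ref{prop:evpo_upperbound_efe} gives the bound $R_{max}\sqrt{2\,IG(b,a)}$ directly. For the inductive step, I would expand $V_{k+1}^{EFE}$ one step, upper bound the open-loop term by evaluating it at the closed-loop maximizing action $a^{close*}$ (a suboptimal choice), and split the result into a one-step reward-difference term and a discounted continuation term, exactly as in proposition \ref{prop:closed_loop_model_advantage_appx}. The reward-difference term is an $EVPO^{EFE}$-type quantity bounded by $R_{max}\sqrt{2\,IG}$ via proposition \ref{prop:evpo_upperbound_efe}; the continuation term is again an $EVPO^{EFE}$-type quantity, but evaluated at the forward-propagated beliefs $b'(s'|o) = \sum_s P(s'|s,a^{close*}) b(s|o)$ and $b'(s') = \sum_s P(s'|s,a^{close*}) b(s)$, whose posterior-averaging property $\mathbb{E}_{P(o)}[b'(s'|o)] = b'(s')$ still holds. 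Applying proposition \ref{prop:evpo_upperbound_efe} and then the data processing inequality, $\mathbb{KL}[b'(s'|o) || b'(s')] \le \mathbb{KL}[b(s|o) || b(s)]$, contracts this back to $\gamma R_{max}\sqrt{2\,IG}$. Summing the resulting geometric series $\sum_{t=0}^{\infty}\gamma^t R_{max}\sqrt{2\,IG}$ yields the claimed bound $\frac{R_{max}}{1-\gamma}\sqrt{2\,IG(b,a)}$.

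The fact that makes the EFE substitution transparent is the concavity of $IG$ in the belief (proposition \ref{eq:prop_efe_reward_concavity}): since posteriors average to the prior, Jensen's inequality gives $\mathbb{E}_{P(o)}[IG(b(s|o),a) - IG(b(s),a)] \le 0$, so the extra information-gain contribution in each $EVPO^{EFE}$ term is non-positive and the EFE bound collapses to the same pragmatic-only bound already used in proposition \ref{prop:closed_loop_model_advantage_appx}. The one place the argument genuinely differs from the non-EFE case, and the main point requiring care, is that the EFE reward is concave rather than linear, so $V^{EFE}$ need not be convex; consequently I cannot reuse the convexity argument that signed the continuation term as non-negative. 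This is not an obstacle here, because the statement asks only for the \emph{upper} bound, which follows entirely from proposition \ref{prop:evpo_upperbound_efe} and the data processing inequality and never relies on the sign of the continuation term. The only genuine verification needed is that the EVPO structure, in particular the posterior-averages-to-prior identity that licenses both the Jensen step for $IG$ and the applicability of proposition \ref{prop:evpo_upperbound_efe}, is preserved at every level of the recursion under the open-loop forward map.
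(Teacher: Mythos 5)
Your proposal is correct and follows essentially the same route as the paper's proof: the base case via Proposition \ref{prop:evpo_upperbound_efe}, the inductive step that evaluates the open-loop term at the closed-loop maximizer $a^{close*}$, dropping the information-gain difference by concavity of $IG$, the data processing inequality to contract the continuation term, and the geometric series yielding the $\frac{1}{1-\gamma}$ factor. Your observation that $V^{EFE}$ need not be convex, so the continuation term's sign cannot be guaranteed but is also not needed for the upper bound, is exactly the remark the paper makes ("while we cannot guarantee term b $>0$, the same upper bound holds").
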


\begin{proof}
Similar to the proof to proposition \ref{prop:closed_loop_model_advantage_appx}, we start with the base case which is covered by proposition \ref{prop:evpo_upperbound_efe}. To simplify notation, we drop the EFE superscript with the understanding that $V^{EFE}$ is the value function under the EFE belief MDP. 

Starting with $k=1$, we have:
\begin{align}
\begin{split}
    &\mathbb{E}_{P(o)}[V_{1}(b(s|o))] - V_{1}(b(s)) \\
    &= \mathbb{E}_{P(o)}\left[\max_{a^{close}}\sum_{s}b(s|o)R(s, a^{close}) + IG(b(s|o), a^{close}) + \gamma V_{0}(b'(a^{close}, b(s|o)))\right] \\
    &\quad - \left[\max_{a^{open}}\sum_{s}b(s)R(s, a^{open}) + IG(b(s), a^{open}) + \gamma V_{0}(b'(a^{open}, b(s)))\right] \\
    &\leq \mathbb{E}_{P(o)}\left[\sum_{s}b(s|o)R(s, a^{close*}) + IG(b(s|o), a^{close*}) + \gamma V_{0}(b'(a^{close*}, b(s|o)))\right] \\
    &\quad - \left[\sum_{s}b(s)R(s, a^{close*}) + IG(b(s), a^{close*}) + \gamma V_{0}(b'(a^{close*}, b(s)))\right] \\
    &= \mathbb{E}_{P(o)}\left[\sum_{s}b(s|o)R(s, a^{close*}) - \sum_{s}b(s)R(s, a^{close*})\right] \\
    &\quad + \underbrace{\mathbb{E}_{P(o)}[IG(b(s|o), a^{close*}) - IG(b(s), a^{close*})]}_{\leq 0} \\
    &\quad + \gamma \mathbb{E}_{P(o)}\left[V_{0}(b'(a^{close*}, b(s|o))) - V_{0}(b'(a^{close*}, b(s)))\right] \\
    &\leq \underbrace{\mathbb{E}_{P(o)}\left[\sum_{s}b(s|o)R(s, a^{close*}) - \sum_{s}b(s)R(s, a^{close*})\right]}_{\text{term a}} \\
    &\quad + \gamma \underbrace{\mathbb{E}_{P(o)}\left[V_{0}(b'(a^{close*}, b(s|o))) - V_{0}(b'(a^{close*}, b(s)))\right]}_{\text{term b}}
\end{split}
\end{align}
We arrive at the same form as proposition \ref{prop:closed_loop_model_advantage_appx}. While we cannot guarantee term b $> 0$, the same upper bound holds. The next remark ensures the expected closed-loop model advantage under the EFE reward is non-negative, which provides the motivation for assumption \ref{assumption:reward_specification}. 

Finally, applying the above recursively to $k \in \{2, ..., \infty\}$, we complete the proof.
\end{proof}

\begin{remark}
(Motivation for assumption \ref{assumption:reward_specification}) To ensure the EFE model advantage expected under the Bayes optimal policy $\pi$ is non-negative, we need to set the reward such that:
\begin{align}
\begin{split}
    \mathbb{E}_{(b, a) \sim d^{\pi}_{P}}\left[\sum_{s}\left(b(s|o) - b(s)\right)R(s, a)\right] \geq \mathbb{E}_{(b, a) \sim d^{\pi}_{P}}[IG(b(s), a) - IG(b(s|o), a)] \,,
\end{split}
\end{align}
where $d^{\pi}_{P}$ is the marginal distribution induced by the Bayes optimal policy in the closed-loop belief dynamics. 
\end{remark}

\begin{theorem}\label{theorem:open_efe_performance_gap_appx}
(Open-loop and EFE policy performance gaps; restate of theorem \ref{theorem:open_efe_performance_gap}) Let all policies be deployed in POMDP $M$ and all are allowed to update their beliefs according to $b'(o', a, b)$. Let $\epsilon_{IG} = \mathbb{E}_{(b, a) \sim d^{\pi}_{P}}[IG(b, a)]$ denotes the expected information gain under the Bayes optimal policy's belief-action marginal distribution and let the belief-action marginal induced by both open-loop and EFE policies have bounded density ratio with the Bayes optimal policy $\left\Vert \frac{d^{\tilde{\pi}}_{P}(b, a)}{d^{\pi}_{P}(b, a)} \right\Vert_{\infty} \leq C$. Under assumptions \ref{assumption:reward_specification} and \ref{assumption:policy_behavior}, the performance gap of the open-loop and EFE policies from the optimal policy are bounded as:
\begin{align}
\begin{split}
    &J_{M}(\pi) - J_{M}(\pi^{open}) \leq \frac{1}{1 - \gamma}\epsilon_{\tilde{\pi}} + \frac{(C + 1)\gamma R_{max}}{(1 - \gamma)^{2}}\epsilon_{IG} \,, \\
    &J_{M}(\pi) - J_{M}(\pi^{EFE}) \leq \frac{1}{1 - \gamma}\epsilon_{\tilde{\pi}} + \frac{(C + 1)\gamma R_{max}}{(1 - \gamma)^{2}}\epsilon_{IG} - \frac{C + 1}{1 - \gamma}\epsilon_{IG} \,.
\end{split}
\end{align}
\end{theorem}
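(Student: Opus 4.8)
The plan is to instantiate the mismatched-MDP machinery of Lemma~\ref{lemma:pdmmdp} (and its bound, Lemma~\ref{lemma:pdmmdp_bound}) twice, once for each suboptimal policy, taking the Bayes optimal policy $\pi$ as the expert. In both instantiations the expert MDP $M$ is the true closed-loop belief MDP with reward $R(b,a)=\sum_s b(s)R(s,a)$ and stochastic belief dynamics $P$, while the comparison MDP $M'$ uses the \emph{open-loop} belief dynamics $P^{open}$ of~(\ref{eq:efe_belief_mdp_dynamics}). The only difference between the two cases is the reward of $M'$: for $\pi^{open}$ we keep the pragmatic reward, so $\Delta R = 0$, whereas for $\pi^{EFE}$ the reward is augmented with information gain, so $\Delta R(b,a) = IG(b,a) \ge 0$. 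In each case I would read off the three terms of~(\ref{eq:pdmmdp}): the policy advantage under the expert distribution, and the reward--model (dis)advantages under the own and expert distributions.

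Next I would bound the model terms, which are common to both policies. Since $P^{open}$ is deterministic, the generic $\sqrt{2\epsilon_{P'}}$ factor of Lemma~\ref{lemma:pdmmdp_bound} is ill-suited; instead I would substitute the sharper closed-loop model-advantage estimate $0 \le \mathbb{E}_{P(b'|b,a)}[V(b')]-\mathbb{E}_{P^{open}(b''|b,a)}[V(b'')] \le \frac{R_{max}}{1-\gamma}\sqrt{2IG(b,a)}$, valid for the open-loop value function by Proposition~\ref{prop:closed_loop_model_advantage} and for the EFE value function by Proposition~\ref{prop:efe_closed_loop_model_advantage_appx}. Part~2 of Assumption~\ref{assumption:policy_behavior} ($IG\ge 2$, hence $\sqrt{2IG}\le IG$ pointwise) then linearizes this to $\frac{R_{max}}{1-\gamma}IG$. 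Evaluating the model advantage once under the expert occupancy $d^{\pi}_{P}$ (coefficient $1$) and once under the policy's own occupancy (coefficient $C$, via the bounded density ratio $\left\Vert d^{\tilde{\pi}}_{P}/d^{\pi}_{P}\right\Vert_{\infty}\le C$), then applying the $\frac{\gamma}{1-\gamma}$ prefactor, yields the term $\frac{(C+1)\gamma R_{max}}{(1-\gamma)^{2}}\epsilon_{IG}$ present in both bounds. The policy-advantage term is controlled by $\frac{1}{1-\gamma}\epsilon_{\tilde{\pi}}$ directly for $\pi^{open}$ and, for $\pi^{EFE}$, by part~1 of Assumption~\ref{assumption:policy_behavior}, which states the EFE policy's expected absolute advantage does not exceed $\epsilon_{\tilde{\pi}}$.

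For the EFE policy the genuinely new ingredient is the pair of reward-difference terms carrying $\Delta R=IG$. Under the expert occupancy the $-\Delta R$ term immediately contributes a reduction $-\frac{1}{1-\gamma}\epsilon_{IG}$. To reach the stated $-\frac{C+1}{1-\gamma}\epsilon_{IG}$ one must also turn the $+\Delta R$ term appearing under the EFE policy's \emph{own} occupancy into a reduction of order $\frac{C}{1-\gamma}\epsilon_{IG}$; this forces us to combine the reward advantage with the (non-negative) model disadvantage rather than bounding them separately. Assumption~\ref{assumption:reward_specification}, through the remark following it and Proposition~\ref{prop:efe_closed_loop_model_advantage_appx}, is precisely what certifies that the EFE model advantage stays non-negative and retains the open-loop upper bound, so that the epistemic bonus is spent cancelling the open-loop model disadvantage instead of inflating the model term while still yielding the reward credit. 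Subtracting the resulting $-\frac{C+1}{1-\gamma}\epsilon_{IG}$ from the open-loop bound gives the second inequality.

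I expect the main obstacle to be exactly this sign bookkeeping of the reward-difference terms for $\pi^{EFE}$. Unlike the model terms, which are handled in absolute value, the $\pm\Delta R$ contributions must be kept with their signs so that the information-gain reward genuinely \emph{reduces} the regret; treated independently, the $+IG$ advantage the EFE policy enjoys under its own occupancy would cancel the $-IG$ credit earned under the expert occupancy and leave the wrong coefficient. Controlling this cancellation is the whole purpose of the deliberately conservative Assumptions~\ref{assumption:reward_specification} and~\ref{assumption:policy_behavior}: the former pins the reward scale so the bonus offsets the model disadvantage without overpaying for information, and the latter strips away the confounding policy advantage and the $\sqrt{\cdot}$ nonlinearity. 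The remaining work---the change of measure through $C$ and the collapse of $\sum_{t}\gamma^{t}$ into $\frac{1}{1-\gamma}$ factors---is routine.
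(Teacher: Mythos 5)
Your proposal is correct and follows essentially the same route as the paper's proof: instantiate Lemma \ref{lemma:pdmmdp} with the Bayes optimal policy as expert, bound the closed-loop model advantage via Propositions \ref{prop:closed_loop_model_advantage_appx} and \ref{prop:efe_closed_loop_model_advantage_appx}, linearize $\sqrt{2IG}\leq IG$ using part 2 of Assumption \ref{assumption:policy_behavior}, change measure through the density ratio $C$, and—crucially, as you identify—keep the $\pm IG$ reward terms bundled with the model-advantage terms inside each expectation so that the epistemic bonus yields the $-\frac{C+1}{1-\gamma}\epsilon_{IG}$ credit rather than cancelling it. Even your acknowledged delicate point (the own-occupancy $+IG$ term) is handled in the paper exactly as you describe, by bounding the combined quantity $IG - \gamma(\text{model advantage})$ with the coefficient $\frac{\gamma R_{max}+\gamma-1}{1-\gamma}$ rather than bounding reward and model contributions separately.
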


\begin{proof}
Let us start by bounding the absolute value of the EFE policy's performance gap:
\begin{align}
\begin{split}
    &|J_{M}(\pi) - J_{M}(\pi^{EFE})| \\
    &\leq \left|\frac{1}{1 - \gamma}\mathbb{E}_{(b, a) \sim d^{\pi}_{P}}[A^{\pi^{EFE}_{M^{EFE}}}(b, a)]\right| \\
    &\quad + \left|\frac{1}{1 - \gamma}\mathbb{E}_{(b, a) \sim d^{\pi}_{P}}\left[-IG(b, a) + \gamma\left(\mathbb{E}_{b'' \sim P(\cdot|b, a)}[V^{\pi^{EFE}}_{M^{EFE}}(b'')] - \mathbb{E}_{b' \sim P^{open}(\cdot|b, a)}[V^{\pi^{EFE}}_{M^{EFE}}(b')]\right)\right]\right| \\
    &\quad + \left|\frac{1}{1 - \gamma}\mathbb{E}_{(b, a) \sim d^{\pi^{EFE}}_{P}}\left[IG(b, a) + \gamma\left(\mathbb{E}_{b'' \sim P^{open}(\cdot|b, a)}[V^{\pi^{EFE}}_{M^{EFE}}(b'')] - \mathbb{E}_{b' \sim P(\cdot|b, a)}[V^{\pi^{EFE}}_{M^{EFE}}(b')]\right)\right]\right| \,. \\
\end{split}
\end{align}

Examining the second term, we have:
\begin{align}
\begin{split}
    &\left|\frac{1}{1 - \gamma}\mathbb{E}_{(b, a) \sim d^{\pi}_{P}}\left[-IG(b, a) + \gamma\left(\mathbb{E}_{b'' \sim P(\cdot|b, a)}[V^{\pi^{EFE}}_{M^{EFE}}(b'')] - \mathbb{E}_{b' \sim P^{open}(\cdot|b, a)}[V^{\pi^{EFE}}_{M^{EFE}}(b')]\right)\right]\right| \\
    &\leq \left|\frac{1}{1 - \gamma}\mathbb{E}_{(b, a) \sim d^{\pi}_{P}}\left[-IG(b, a) + \frac{\gamma R_{max}}{1 - \gamma}\sqrt{2 IG(b, a)}\right]\right| \\
    &\leq \left|\frac{1}{1 - \gamma}\mathbb{E}_{(b, a) \sim d^{\pi}_{P}}\left[-IG(b, a) + \frac{\gamma R_{max}}{1 - \gamma}IG(b, a)\right]\right| \\
    &= \frac{\gamma R_{max} + \gamma - 1}{(1 - \gamma)^{2}}\left|\mathbb{E}_{(b, a) \sim d^{\pi}_{P}}[IG(b, a)]\right| \\
    &= \frac{\gamma R_{max} + \gamma - 1}{(1 - \gamma)^{2}}\mathbb{E}_{(b, a) \sim d^{\pi}_{P}}[IG(b, a)] \,.
\end{split}
\end{align}

Plugging into the performance gap, we have:
\begin{align}
\begin{split}
    &|J_{M}(\pi) - J_{M}(\pi^{EFE})| \\
    &\leq \left|\frac{1}{1 - \gamma}\mathbb{E}_{(b, a) \sim d^{\pi}_{P}}[A^{\pi^{EFE}_{M^{EFE}}}(b, a)]\right| \\
    &\quad + \frac{\gamma R_{max} + \gamma - 1}{(1 - \gamma)^{2}}\mathbb{E}_{(b, a) \sim d^{\pi}_{P}}[IG(b, a)] + \frac{\gamma R_{max} + \gamma - 1}{(1 - \gamma)^{2}}\mathbb{E}_{(b, a) \sim d^{\pi}_{P}}\left[\left|\frac{d^{\pi^{EFE}}_{P}(b, a)}{d^{\pi}_{P}(b, a)}IG(b, a)\right|\right] \\
    &\leq \left|\frac{1}{1 - \gamma}\mathbb{E}_{(b, a) \sim d^{\pi}_{P}}[A^{\pi^{EFE}_{M^{EFE}}}(b, a)]\right| \\
    &\quad + \frac{\gamma R_{max} + \gamma - 1}{(1 - \gamma)^{2}}\mathbb{E}_{(b, a) \sim d^{\pi}_{P}}[IG(b, a)] + \frac{\gamma R_{max} + \gamma - 1}{(1 - \gamma)^{2}}\left\Vert \frac{d^{\pi^{EFE}}_{P}(b, a)}{d^{\pi}_{P}(b, a)} \right\Vert_{\infty}\mathbb{E}_{(b, a) \sim d^{\pi}_{P}}\left[\left|IG(b, a)\right|\right] \\
    &= \frac{1}{1 - \gamma}\epsilon_{\pi^{EFE}} + \frac{(C + 1)(\gamma R_{max} + \gamma - 1)}{(1 - \gamma)^{2}}\epsilon_{IG} \\
    &\leq \frac{1}{1 - \gamma}\epsilon_{\pi^{open}} + \frac{(C + 1)\gamma R_{max}}{(1 - \gamma)^{2}}\epsilon_{IG} - \frac{C + 1}{1 - \gamma}\epsilon_{IG} \,.
\end{split}
\end{align}

For the open-loop policy which does not have the $IG$ term in the reward, it is easy to see that the performance gap is:
\begin{align}
\begin{split}
    &|J_{M}(\pi) - J_{M}(\pi^{open})| \leq \frac{1}{1 - \gamma}\epsilon_{\pi^{open}} + \frac{(C + 1)\gamma R_{max}}{(1 - \gamma)^{2}}\epsilon_{IG} \,. \\
\end{split}
\end{align}
\end{proof}

\subsection{Proofs for Section \ref{sec:aif_obj_specification}}
\paragraph{Equivalence between state marginal matching \citep{lee2019efficient} and closed-loop EFE in MDP} The marginal state distribution at time step $t$ following policy $\pi$ is defined as:
\begin{align}
\begin{split}
    d^{\pi}_{t}(s_{t}) &= b_{t}(s_{t}|b_{t-1}, \pi) \\
    &= \sum_{s_{t-1}}\sum_{a_{t-1}}P(s_{t}|s_{t-1}, a_{t-1})\pi(a_{t-1}|s_{t-1})b_{t-1}(s_{t-1}) \,,
\end{split}
\end{align}
where the state marginal at one time step depends on the state marginal at the previous time step.

We can define the time-averaged state marginal matching problem as state marginal matching at all time steps:
\begin{align}
\begin{split}
    \arg\min_{\pi} \mathbb{KL}[d^{\pi}(s) || \tilde{P}(s)] &= \arg\min_{\pi}  \frac{1}{T}\sum_{t=0}^{T}\mathbb{KL}[d^{\pi}_{t}(s_{t}) || \tilde{P}(s_{t})] \\
    &= \arg\min_{\pi} \sum_{t=0}^{T}\mathbb{E}_{b_{t}(s_{t}|b_{t-1}, \pi)}[\log b_{t}(s_{t}|b_{t-1}, \pi) - \log \tilde{P}(s_{t})] \,.
\end{split}
\end{align}

Since we consider closed-loop policies where the agent can observe the environment state rather than computing the next state marginal from an imprecise current state marginal, it follows that:
\begin{align}
    b_{t}(s_{t}|b_{t-1}, \pi) = P(s_{t}|s_{t-1}, a_{t-1}) \,.
\end{align}

Thus the objective is equivalent to:
\begin{align}
    \min_{\pi} \mathbb{E}_{P(s_{0:T}, a_{0:T-1})}\left[\sum_{t=0}^{T}\left(\log P(s_{t}|s_{t-1}, a_{t-1}) - \log \tilde{P}(s_{t})\right)\right] \,.
\end{align}
This is the same objective in \citep{lee2019efficient} and also \citep{da2023reward}.

\end{document}